\numberwithin{equation}{section}
\newcommand{\LSEM}{\operatorname{LSEM}}
\newcommand{\Rel}[2]{\operatorname{Rel}(#1, #2)}
\newcommand{\GAM}{n^6}
\newcommand{\DEL}{5}
\newcommand{\omegamatrix}[2]{\Omega_{#1,#2}}
\newcommand{\mui}[2]{\Lambda_{#1,#2}}
\newcommand{\muone}{\Lambda_{1,2}}
\newtheorem{model}[theorem]{Model}
\newcommand{\Diag}{\operatorname{\alpha}}
\begin{document}

\title{Stability of Linear Structural Equation Models of Causal Inference}

\author{ Karthik Abinav Sankararaman \and
Anand Louis \and
Navin Goyal
}
\address{
University of Maryland, College Park\and
Indian Institute of Science, Bangalore \and
Microsoft Research, India
} 
\email{
kabinav@cs.umd.edu \and
anandl@iisc.ac.in \and
navingo@microsoft.com
}

\maketitle


\begin{abstract} 
We consider the numerical stability of the parameter recovery problem in Linear Structural Equation Model ($\LSEM$) of causal inference. A long line of work starting from Wright (1920) has focused on understanding which sub-classes of $\LSEM$ allow for efficient parameter recovery. Despite decades of study, this question is not yet fully resolved. The goal of this paper is complementary to this line of work; we want to understand the stability of the recovery problem in the cases when efficient recovery is possible. Numerical stability of Pearl's notion of causality was first studied in Schulman and Srivastava (2016) using the concept of condition number where they provide ill-conditioned examples. In this work, we provide a condition number analysis for the $\LSEM$. First we prove that under a sufficient condition, for a certain sub-class of $\LSEM$ that are \emph{bow-free} (Brito and Pearl (2002)), the parameter recovery is stable. We further prove that \emph{randomly} chosen input parameters for this family satisfy the condition with a substantial probability. Hence for this family, on a large subset of parameter space, recovery is numerically stable. Next we construct an example of $\LSEM$ on four vertices with \emph{unbounded} condition number. We then corroborate our theoretical findings via simulations as well as real-world experiments for a sociology application. Finally, we provide a general heuristic for estimating the condition number of any $\LSEM$ instance. 
\end{abstract}

\maketitle
\section{Introduction}

Inferring \emph{causality}, \ie whether a group of events causes another group of events is a central problem in a wide range of fields from natural to social sciences. A common approach to inferring causality is Randomized controlled trials (RCT). Here the experimenter intervenes on a system of variables (often called stimulus variables) such that it is not affected by any confounders with the variables of interest (often called response variables) and observes the probability distributions on the response variables. Unfortunately, in many cases of interest performing RCT is either costly or impossible due to practical or ethical or legal reasons. A common example is the age-old debate \cite{108} on whether smoking causes cancer. In such scenarios RCT is completely out of the question due to ethical issues. This necessitates new inference techniques.

	The causal inference problem has been extensively studied in statistics and mathematics (\emph{e.g.,} \cite{64,67,pearlBook,95}) where decades of research has led to rich mathematical theories and a framework for conceptualizing and analyzing causal inference. One such line of work is the \emph{Linear Structural Equation Model} (or $\LSEM$ in short) for formalizing causal inference (see the monograph \cite{Bollen} for a survey of classical results). In fact, this is among the most commonly used models of causality in social sciences~\cite{25,Bollen} and some natural sciences~\cite{spirtes2010introduction}. In this model, we are given a mixed graph on $n$ (observable) variables\footnote{In this paper we are interested in properties for large $n$.} of the system containing both directed and bi-directed edges (see Figure~\ref{fig:mixedGraph} for an example). 
	We will assume that the directed edges in the mixed graph form a directed acyclic graph (DAG). A directed edge from vertex $u$ to vertex $v$ represents the presence of causal effect of variable $u$ on variable $v$, while the bi-directed edges represent the presence of confounding effect (modeled as noise) which we next explain.\footnote{We also interchangeably refer to the directed edges as \emph{observable} edges since they denote the direct causal effects and the bi-directed edges as \emph{unobservable} edges since they indicate the unobserved common causes.} In the $\LSEM$, the following extra assumption is made (see Equation~\eqref{eq:LinearSEM}): the value of a variable $v$ is determined by a (weighted) linear combination of its parents' (in the directed graph) values added with a zero-mean \emph{Gaussian} noise term ($\eta_v$). The bi-directed graph indicates dependencies 
	between the noise variables (\ie lack of an edge between variables $u$ and $v$ implies that the covariance between $\eta_u$ and $\eta_v$ is $0$). We use $\mathbf{\Lambda} \in \mathbb{R}^{n \times n}$ to represent the matrix of edge weights of the DAG, $\mathbf{\Omega} \in \mathbb{R}^{n \times n}$ to represent the covariance matrix of the Gaussian noise and $\mathbf{\Sigma} \in \mathbb{R}^{n \times n}$ to represent the covariance matrix of the observation data (henceforth called \emph{data covariance matrix}). Let $\mathbf{X} \in \mathbb{R}^{n \times 1}$ denote a vector of random variables corresponding to the observable variables in the system. Let $\mathbf{\eta} \in \mathbb{R}^{n \times 1}$ denote the vector of corresponding noises whose covariance matrix is given by $\mathbf{\Omega}$. Formally, the $\LSEM$ assumes the following relationship between the random variables in $\mathbf{X}$:
		\begin{equation}
			\label{eq:LinearSEM}
			\mathbf{X} = \mathbf{\Lambda}^T \mathbf{X} + \mathbf{\eta}.
		\end{equation}
	From the \emph{Gaussian} assumption on the noise random variable $\mathbf{\eta}$, it follows that $\mathbf{X}$ is also a multi-variate Gaussian with covariance matrix given by,
		\begin{equation}
			\label{eq:sigmaEq}
			\mathbf{\Sigma} = \paren{\mathbf{I} - \mathbf{\Lambda}}^{-T} \mathbf{\Omega} \paren{\mathbf{I} - \mathbf{\Lambda}}^{-1} \mper 
		\end{equation}
	\begin{figure}
	\centering
	\includegraphics[scale=0.3]{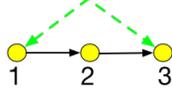}
	\caption{Mixed Graph: Black solid edges represent causal edges. Green dotted edges represent covariance of the noise.}
	\label{fig:mixedGraph}
	\end{figure}
	In a typical setting, the experimenter estimates the joint-distribution by estimating the covariance matrix $\mathbf{\Sigma}$ over the observable variables obtained from finitely many samples. The experimenter also has a causal hypothesis (represented as a mixed graph for the causal effects and the covariance among the noise variables, which in turn determines which entries of $\mathbf{\Lambda}$ and $\mathbf{\Omega}$ are required to be $0$, referred to as the \emph{zero-patterns} of $\vec{\Lambda}$ and $\vec{\Omega}$). One then wants to solve the inverse problem of recovering $\mathbf{\Lambda}$ and $\mathbf{\Omega}$ given $\mathbf{\Sigma}$. This problem is solvable for some special types of mixed graphs using parameter recovery algorithms, such as the one in \cite{RICF,FDD2012Annals}.
	
	Thus a central question in the study of $\LSEM$ is for which mixed graphs (specified by their zero patterns) and which values of the parameters ($\mathbf{\Lambda}$ and $\mathbf{\Omega}$) is the inverse problem above solvable; in other words, which parameters are identifiable. Ideally one would like all values of the parameters to be identifiable. However, identifiability is often too strong a property to expect to hold for all parameters and instead we are satisfied with a slightly weaker property, namely \emph{generic identifiability} (GI): here we require that identifiability holds for all parameter values except for a measure $0$ set (according to some reasonable measure). 
	(The issue of identifiability is a very general one that arises in solving inverse problems in statistics and many other areas of science.)
	A series of works (\emph{e.g.,} \cite{BP2006UAI,DW2016Scandinavian,51,FDD2012Annals,85}) have made progress on this question by providing various classes of mixed graphs that do allow generic identifiability. However, the general problem of generic identifiability has not been fully resolved. This problem is important since 
	it is a version of a central question in science: what kind of causal hypotheses can be validated purely from observational data as opposed to the situations where one can do RCT. Much of the prior work has focused on designing algorithms with the assumption that the \emph{exact} joint distribution over the variables is available. However, in practice, the data is noisy and inaccurate and the joint distribution is generated via \emph{finite} samples from this noisy data. 
	
	While theoretical advances assuming exact joint distribution have been useful it is imperative to understand the effect of violation of these assumptions rigorously. Algorithms for identification in $\LSEM$s are \emph{numerical} algorithms that solve the inverse problem of recovering the underlying parameters constructed from noisy and limited data. Such models and associated algorithms are useful only if they solve the inverse problem in a \emph{stable} fashion: if the data is perturbed by a small amount then the recovered parameters change only a small amount. If the recovery is unstable then the recovered parameters are unlikely to tell us much about the underlying causal model as they are inordinately affected by the noise. We say that \emph{robust identifiability} (RI) holds for parameter values $\mathbf{\Lambda}$ and $\mathbf{\Omega}$ if even after 
	perturbing the corresponding $\mathbf{\Sigma}$ to $\mathbf{\Sigma}'$ (by a small noise), the recovered parameters values 
	$\mathbf{\Lambda}'$ and $\mathbf{\Omega}'$ are close to the original ones. 
	It follows from the preceding discussion that to consider an inverse problem solved it is not sufficient for generic identifiability to hold; instead, we would like the stronger property of robust identifiability to hold for almost all parameter values (we call this generic robust identifiability; for now we leave the notions of ``almost everywhere'' and ``close'' informal). In addition, the problem should also be solvable by an efficient algorithm. The mixed graphs we consider all admit efficient parameter recovery algorithms. Note that GI and RI are properties of the problem and not of the algorithm used to solve the problem. 
	 
In the general context of inverse problems, the difference between GI and RI is quite common and important: \emph{e.g.,} in solving a system of $n$ linear equations in $n$ variables given by an $n \times n$ matrix $M$. For any reasonable distribution on $n \times n$ matrices, the set of singular matrices has measure $0$ (an algebraic set of lower dimension given by $\det(M)=0$). Hence, $M$ is invertible with probability $1$ and GI holds. This however does not imply that generic robust identifiability holds: for that one needs to say that the set of ill-conditioned matrices has small measure. To understand RI for this problem, one needs to resort to analyses of the minimum singular value of $M$ which are non-trivial in comparison to GI (e.g., see \cite[Sec 2.4]{burgisser2013condition}). In general, proving RI almost everywhere turns out to be harder and remains an open problem in many cases even though GI results are known. One recent example is rank-1 decomposition of tensors (for $n \times n \times n$ random tensors of rank up to $n^2/16$, GI is known, whereas generic robust identifiability is known only up to rank $n^{1.5}$); see, \emph{e.g.,} \cite{BhaskaraCMV14}. The problem of tensor
decomposition is just one example of a general recent concern for RI results in the theoretical computer science literature and there are many more examples. Generic robust identifiability remains an open problem for semi-Markovian models for which efficient GI is known, \emph{e.g.,} \cite{pearlBook}.

In the context of causality, the study of robust identifiability was initiated in \cite{SS2016UAI} where the authors construct a family of examples in the so-called Pearl's notion of causality on 
	semi-Markovian graphs\footnote{Unlike $\LSEM$, this is a non-parametric model. The functional dependence of a variable on the 
		parents' variables is allowed to be fully general, in particular it need not be linear. This of course comes at the price of making
		the inference problem computationally and statistically harder.}
		 (see \emph{e.g.,} \cite{pearlBook}) and show that for this family there exists an \emph{adversarial} perturbation of the input which causes the recovered parameters to be drastically different (under an appropriate metric described later) from the actual set of parameters. However this result has the following limitation. Their adversarial perturbations are carefully crafted and this worst case scenario can be alleviated by modifying just a few edges (in fact just deleting some edges randomly suffices which can also be achieved without changing the zero-pattern by choosing the parameters appropriately). 
		 This leads us to ask: \emph{how prevalent are such perturbations?} Since
	there is no canonical notion of what a typical LSEM model is (i.e. the graph structure and the associated parameters), we will assume that the graph structure is given and the parameters are randomly chosen according to some reasonable distribution. Thus we would like to answer the following question\footnote{The question of understanding the condition number (a measure of stability) of LSEMs was raised in \cite{SS2016UAI}, though presumably in the worst-case sense of whether there are identifiable instances for which recovery is unstable. As mentioned in their work, when the model is \emph{not} uniquely identifiable, the authors in \cite{cornia2014type} show an example where uncertainty in estimating the parameters can be unbounded.}.
	
	\begin{question}[Informal]
		\label{que:informal}
		For the class of $\LSEM$s that are uniquely identifiable, does robust identifiability hold for most choices of parameters?
	\end{question}
The question above is informal mainly because ``most choices of parameters'' is not a formally defined notion. We can quantify this notion by putting a probability measure on the space of all parameters. This is what we will do.

\xhdr{Notation.} Throughout this paper, we will use the following notation. Bold fonts represent matrices and vectors. We use the following shorthand in proofs.
			\begin{equation}
				\label{eq:Lambdaij}
				\Lambda_{\ell,k} = \begin{cases} \Pi_{j = \ell}^{k-1} \Lambda_{j,j+1} & \textrm{if } \ell < k \\ 
						1 & \textrm{if }\ell \geq k \end{cases}.
			\end{equation}
Given matrices $\mathbf{A}, \mathbf{B} \in \mathbb{R}^{n \times m}$, we define the \emph{relative distance}, denoted by $\Rel{\mathbf{A}}{\mathbf{B}}$ as the following quantity.
		\[
			\Rel{\mathbf{A}}{\mathbf{B}} \defeq \max_{1 \leq i \leq n, 1 \leq j \leq m : \Abs{A_{i, j}} \neq 0} \frac{|A_{i, j}-B_{i, j}|}{\Abs{A_{i, j}}}.
		\]
		In this paper we use the notion of condition number (see \cite{burgisser2013condition} for a detailed survey on condition numbers in numerical algorithms) to quantitatively measure the effect of perturbations on data in the parameter recovery problem. The 
		specific definition of condition number we use is a natural extension of the $\ell_{\infty}$-condition number studied in \cite{SS2016UAI} to matrices.
		
		\begin{definition}[Relative $\ell_{\infty}$-condition number]
			\label{def:linftyCondition}
			Let $\mathbf{\Sigma}$ be a given data covariance matrix and $\mathbf{\Lambda}$ be the corresponding parameter matrix. Let a $\gamma$-perturbed family of matrices be denoted by $\mathcal{F}_{\gamma}$  (\emph{i.e.,} set of matrices $\mathbf{\tilde{\Sigma}}_{\gamma}$ such that $\Rel{\mathbf{\Sigma}}{\tilde{\mathbf{\Sigma}}_\gamma} \leq \gamma $). For any $\mathbf{\tilde{\Sigma}}_{\gamma} \in \mathcal{F}_{\gamma}$ let the corresponding recovered parameter matrix be denoted by $\mathbf{\tilde{\Lambda}}_{\gamma}$. Then the relative $\ell_{\infty}$-condition number is defined as, 
			\begin{equation}
				\label{eq:linftyCondition}
				\kappa(\mathbf{\Lambda}, \mathbf{\Sigma}) \defeq \lim_{\gamma \rightarrow 0^+} \sup_{\tilde{\mathbf{\Sigma}}_\gamma \in \mathcal{F}_{\gamma}} \frac{\Rel{\mathbf{\Lambda}}{\tilde{\mathbf{\Lambda}}_\gamma}}{\Rel{\mathbf{\Sigma}}{\tilde{\mathbf{\Sigma}}_\gamma} }.
			\end{equation}
 		\end{definition}

We confine our attention to the stability of recovery of $\mathbf{\Lambda}$ as once $\mathbf{\Lambda}$ is recovered approximately, 
$\mathbf{\Omega} = \paren{\mathbf{I} - \mathbf{\Lambda}}^{T}  \mathbf{\Sigma} \paren{\mathbf{I} - \mathbf{\Lambda}}$ can be easily approximated in a stable manner.
In this paper, we restrict our theoretical analyses to causal models specified by \emph{bow-free} paths which are inspired by bow-free graphs \cite{brito2002new}\footnote{See footnote 4 in \cite{brito2002new} for the significance of bow-free graphs in $\LSEM$.}. In \cite{brito2002new} the authors show that the bow-free property\footnote{Bow-free (mixed) graphs are those where for any pair of vertices, both directed and undirected edges are never present simultaneously. In other words, any pair of variables with direct causal relationship (implied by a directed edge connecting the two variables) do not have correlated errors (not connected by a bi-directed edge)}  of the mixed graph underlying the causal model is a sufficient condition for unique identifiability. We now define bow-free paths; Figure~\ref{fig:illustration} has an example.

\begin{definition}[Bow-free Paths]
	\label{def:paths}
	A causal model is called a \emph{bow-free path} if the underlying DAG forms a directed path and the mixed graph is bow-free~\cite{brito2002new}. Consider $n$ vertices indexed $1, 2, \ldots, n$. The (observable) DAG forms a path starting at $1$ with directed edges going from vertex $i$ to vertex $i+1$. The bi-directed edges can exist between pairs of vertices $(i, j)$ only if $|i-j| \geq 2$. Thus the only potential non-zero entries in $\mathbf{\Lambda}$ are in the diagonal immediately above the principal diagonal. Similarly, the diagonal immediately above and below the principal diagonal in $\mathbf{\Omega}$ is always zero.\footnote{We want to emphasize that bow-free paths allow \emph{all} pairs, which do not have a causal directed edge, to have bi-directed edges.} 
\end{definition}
\begin{figure}
	\centering
	\includegraphics[scale=0.3]{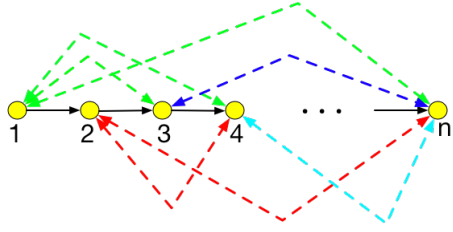}
	\caption{Illustration of a bow-free path. Solid lines represent directed causal edges and dotted lines represent bi-directed covariance edges.}
	\label{fig:illustration}
\end{figure}
We further assume the following conditions on the parameters of the \emph{bow-free} paths for studying the numerical stability of \emph{any} parameter recovery algorithm. Later in Section~\ref{sec:random} we show that a natural random process of generating $\mathbf{\Lambda}$ and $\mathbf{\Omega}$ satisfies these assumptions with high-probability. 
Informally, the model can be viewed as a \emph{generalization} of \emph{symmetric diagonally dominant} (SDD) matrices.

\begin{model}
	\label{mod:localDominance}
	Our model satisfies the following conditions on the data-covariance matrix and the perturbation with the underlying graph structure given by bow-free paths. It takes parameters $\alpha$ and $\lambda$. 
	
	\begin{enumerate}
		\item 
		\xhdr{Data properties.}
		$\mathbf{\Sigma} \succeq 0$ is an $n \times n$ symmetric matrix satisfying the following conditions\footnote{We note that the important part is that $\Diag$ is bounded by a \emph{constant} independent of $n$. The precise constant is of secondary importance.}
	for some $0 \leq \Diag \leq \frac{1}{\DEL}$.
	\[ \Abs{\Sigma_{i-1, i}}, \Abs{\Sigma_{i, i+1}}, \Abs{\Sigma_{i-1, i+1}}
	\leq \Diag \Sigma_{i,i} \qquad \forall i \in [n-1]. \]
	Additionally, the \emph{true} parameters $\mathbf{\Lambda}$, corresponding to $\mathbf{\Sigma}$, satisfy the following. For every $i, j$ such that there is a directed edge from $i$ to $j$, we have $\frac{1}{n^{2}} \leq \frac{1}{\lambda} \leq \Abs{\Lambda_{i, j}} \leq 1$, where $\lambda$ is a parameter. 
	\item 
	\xhdr{Perturbation properties.}
	For each $i,j \in [n]$ and a fixed $\gamma \leq \frac{1}{\GAM}$, let $\e_{i,j} = \e_{j,i}$ be arbitrary numbers satisfying
	$0 \leq \Abs{\e_{i,j}} = \Abs{\e_{j,i}} \leq \gamma \Abs{\Sigma_{i,j}}$ for all $i,j$ such that that there exists a pair $i^*, j^* \in [n]$ with $\Abs{\e_{i^*,j^*}} = \Abs{\e_{j^*,i^*}} = \gamma \Abs{\Sigma_{i^*, j^*}}$. Note that we eventually consider $\gamma$ in the limit going to $0$, hence some small but fixed $\gamma$ suffices. Let $\tilde{\Sigma}_{i,j} \defeq \Sigma_{i,j} + \e_{i,j}$ for every pair $(i, j)$. 
	\end{enumerate}
\end{model}

\begin{remark}
	 The covariance matrix has errors arising from three sources: (1) measurement errors, (2) numerical errors, and (3) error due to finitely many samples. The combined effect is modeled by Item 2 (perturbation properties) above of Model 1.4 which is very general as we only care about the max error. 
\end{remark}

\begin{remark}
	As shown later (see Equation~\eqref{eq:claimRemark}) in the proof of Theorem~\ref{thm:main2}, the constant $\frac{1}{\DEL}$ is an \emph{approximation} to the following. Let $\tau = 1 + 5n\gamma$. Then we want $[1-(\tau+2)\Diag - (\tau + 1) \Diag^2] > 0$ and $\Diag \leq \frac{1}{4}$.
\end{remark}
	
With the definitions and model in place, we can now pose Question~\ref{que:informal} formally:

\begin{question}[Formal]
		\label{que:formal}
		For the class of $\LSEM$s represented by bow-free paths and Model~\ref{mod:localDominance}, can we characterize the behavior of the $\ell_{\infty}$-condition number?
	\end{question}

\subsection{Our Contributions}

Our key contribution in this paper is a step towards understanding Question~\ref{que:informal} by providing an answer to Question~\ref{que:formal}. In particular, we first prove that when Model assumptions~\ref{mod:localDominance} hold, the $\ell_{\infty}$-condition number of the parameter recovery problem is upper-bounded by a polynomial in $n$. Formally, we prove Theorem~\ref{thm:main2}. This implies that the loss in precision scales \emph{logarithmically} and hence we need at most $O(\log d)$ additional bits to get a precision of $d$ bits\footnote{Note we already need $O(\poly(n) \log n)$ bits to represent the graph and the associated matrices.}. See \cite{burgisser2013condition} and \cite{SS2016UAI} for further discussion on the relation between condition number and the number of bits needed for $d$-bit precision in computation as well as other implications of small condition number.

	\begin{theorem}[Stability Result]
	\label{thm:main2}
		Under the assumptions in Model~\ref{mod:localDominance}, we have the following bound on the condition number for bow-free paths with $n$ vertices.
	\[
			\kappa(\vec{\Lambda}, \vec{\Sigma}) \leq O(n^{2}).
	\] 
	\end{theorem}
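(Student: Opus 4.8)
The plan is to reduce the stability of the whole recovery problem to how a perturbation propagates through a short scalar recursion. For a bow‑free path the only nonzero entries of $\vec{\Lambda}$ lie on the superdiagonal, so write $w_i \defeq \Lambda_{i,i+1}$ for $i \in [n-1]$ and $w_0 \defeq 0$. Since $\vec I - \vec\Lambda$ is upper bidiagonal, multiplying out the $(i,i+1)$ entry of $\vec\Omega = \paren{\vec I - \vec\Lambda}^T \vec\Sigma \paren{\vec I - \vec\Lambda}$ (which follows from \eqref{eq:sigmaEq}) gives
\[
\Omega_{i,i+1} = \Sigma_{i,i+1} - w_i\,\Sigma_{i,i} - w_{i-1}\,\Sigma_{i-1,i+1} + w_{i-1} w_i\,\Sigma_{i-1,i}.
\]
The bow‑free condition forces $\Omega_{i,i+1} = 0$, hence
\[
w_i = \frac{\Sigma_{i,i+1} - w_{i-1}\Sigma_{i-1,i+1}}{\Sigma_{i,i} - w_{i-1}\Sigma_{i-1,i}}, \qquad w_1 = \frac{\Sigma_{1,2}}{\Sigma_{1,1}}.
\]
As bow‑free graphs are uniquely identifiable \cite{brito2002new}, every correct recovery algorithm must output exactly these $w_i$; feeding the perturbed matrix $\tilde{\vec\Sigma}$ (with $\tilde\Sigma_{i,j} = \Sigma_{i,j} + \e_{i,j}$) through the same recursion produces the recovered values $\tilde w_i$. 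So it suffices to bound the dimensionless quantities $\delta_i \defeq \tilde w_i - w_i$ in terms of $\gamma$.

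First I would prove an \emph{a priori} boundedness lemma. Using only the Data properties of Model~\ref{mod:localDominance} --- $|\Sigma_{i-1,i}|,|\Sigma_{i,i+1}|,|\Sigma_{i-1,i+1}| \le \Diag\,\Sigma_{i,i}$ with $\Diag \le 1/\DEL$ --- I claim $|w_i| \le \beta^\ast$ for every $i$, where $\beta^\ast<1$ is the smaller root of $\Diag\,\beta^2 - (1-\Diag)\beta + \Diag = 0$ (which is below $1/3$ when $\Diag \le 1/5$). The induction closes because the map $t\mapsto \Diag(1+t)/(1-\Diag t)$ is increasing with fixed point $\beta^\ast$: if $|w_{i-1}|\le\beta^\ast$ then the denominator above has magnitude at least $(1-\Diag\beta^\ast)\Sigma_{i,i}$, the numerator at most $\Diag(1+\beta^\ast)\Sigma_{i,i}$, so $|w_i| \le \beta^\ast$ again, and the base case $|w_1|\le\Diag$ is immediate. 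Re‑running the argument on $\tilde w_i$, with all bounds weakened by the slack factor $\tau = 1 + 5n\gamma$ coming from the errors $\e_{i,j}$ (here one uses $\gamma \le 1/\GAM$ to keep $\tau$ close to $1$), gives $|\tilde w_i| \le \tilde\beta < 1/3$ and, crucially, shows that both denominators $\Sigma_{i,i} - w_{i-1}\Sigma_{i-1,i}$ and $\tilde\Sigma_{i,i} - \tilde w_{i-1}\tilde\Sigma_{i-1,i}$ exceed a fixed positive fraction of $\Sigma_{i,i}$.

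Next I would set up the perturbation recursion. Writing $N_i,D_i$ (resp.\ $\tilde N_i,\tilde D_i$) for the numerator and denominator above, subtracting $w_i D_i = N_i$ from $\tilde w_i \tilde D_i = \tilde N_i$ and regrouping gives $\delta_i = \big[(\tilde N_i - N_i) - w_i(\tilde D_i - D_i)\big]/\tilde D_i$; expanding each difference in the errors and in $\delta_{i-1}$ --- e.g.\ $\tilde w_{i-1}\tilde\Sigma_{i-1,i+1} - w_{i-1}\Sigma_{i-1,i+1} = \delta_{i-1}\tilde\Sigma_{i-1,i+1} + w_{i-1}\e_{i-1,i+1}$ --- and bounding every term using the boundedness lemma and $|\e_{i,j}| \le \gamma|\Sigma_{i,j}|$ produces a recursion
\[
|\delta_i| \le a\gamma + b\,|\delta_{i-1}| \quad (i\ge 1), \qquad \delta_0 = 0,
\]
where $a,b$ are absolute constants and, most importantly, $b<1$: the factor $\Sigma_{i,i}$ cancels between numerator and denominator, and the coefficient multiplying $|\delta_{i-1}|$ comes out to roughly $\Diag(1+\beta^\ast+\tilde\beta)$ divided by the lower bound on $|\tilde D_i|/\Sigma_{i,i}$, which is $<1$ precisely because $\Diag \le 1/\DEL$. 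Unrolling gives $|\delta_i| \le \frac{a}{1-b}\gamma = O(\gamma)$ for all $i$, \emph{with no dependence on $n$}. Finally, since $|w_i| = |\Lambda_{i,i+1}| \ge 1/\lambda \ge 1/n^2$ by Model~\ref{mod:localDominance},
\[
\Rel{\vec\Lambda}{\tilde{\vec\Lambda}} = \max_{i \in [n-1]} \frac{|\delta_i|}{|w_i|} \le \lambda \cdot O(\gamma) = O(n^2\gamma),
\]
while $\Rel{\vec\Sigma}{\tilde{\vec\Sigma}} = \gamma$ (the pair $i^\ast,j^\ast$ attains the bound); dividing and letting $\gamma \to 0^+$ yields $\kappa(\vec\Lambda,\vec\Sigma) = O(n^2)$.

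The main obstacle lies entirely in the two lemmas of the middle steps: choosing a constant $\beta^\ast$ for which the boundedness induction actually closes, checking that the contraction coefficient $b$ in the $\delta$‑recursion stays strictly below $1$ under $\Diag \le 1/\DEL$, and tracking the slack factor $\tau = 1+5n\gamma$ carefully enough that the perturbed iterates obey essentially the same bounds --- which is what ultimately forces the hypothesis $\gamma \le 1/\GAM$. The arithmetic is elementary but unforgiving; everything else (the identity for $\delta_i$, the final division, the limit) is routine. It is worth noting in advance that the only place an $n$ enters is the \emph{a priori} lower bound $|\Lambda_{i,i+1}| \ge 1/n^2$ permitted by the model; the perturbation itself is amplified by only a constant factor, so a model with edge weights bounded below by a constant would give $\kappa = O(1)$.
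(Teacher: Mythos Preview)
Your proposal is correct and follows essentially the same route as the paper: derive the scalar recursion for $w_i$ from the bow-free constraint $\Omega_{i,i+1}=0$, establish uniform bounds on the true and perturbed iterates using $\Diag\le 1/\DEL$, show inductively that $|\tilde w_i - w_i| = O(\gamma)$ with a constant independent of $n$, and then divide by the model lower bound $|\Lambda_{i,i+1}|\ge 1/\lambda \ge 1/n^2$. The only cosmetic difference is that you cast the error propagation as an explicit contraction $|\delta_i|\le a\gamma + b|\delta_{i-1}|$ with $b<1$, whereas the paper guesses the limiting constant $\beta_c$ at the outset and checks by induction that $|\delta_i|\le \beta_c\,\gamma/(1-\gamma)$ for all $i$; these are equivalent formulations of the same estimate.
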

	
	More specifically, the condition number is upper-bounded by $\kappa(\vec{\Lambda}, \vec{\Sigma}) \leq O \left( \lambda \right)$ and for the parameters chosen in this model, we have the bound in Theorem~\ref{thm:main2}.
	
	Furthermore, in Section~\ref{sec:random} we show that a natural generative process to construct matrices $\mathbf{\Lambda}$ and $\mathbf{\Omega}$ satisfies the Model assumptions~\ref{mod:localDominance} \footnote{We in fact study various regimes in the parameter space of the generative process and show in which of those regimes the model assumptions hold.}. Hence this implies that a \emph{large} class of instances are well-conditioned and hence ill-conditioned models are not prevalent under our generative assumption. Moreover, as described in the model preliminaries, all data covariance matrices that are SDD and have the property that every row has at least $8$ entries that are not arbitrarily close to $0$, satisfy the assumptions in Model~\ref{mod:localDominance}. Thus an important corollary of this theorem is that when the data covariance matrix is in this class of SDD matrices, it is well-conditioned.

	Next we show that there exist examples for $\LSEM$s with \emph{arbitrarily} high condition number. This implies that on such examples it is unreasonable to expect any form of accurate computation. 
	Formally, we prove Theorem~\ref{thm:mainIns}. It shows that the recovery problem itself has a bad condition number and does not depend on any particular algorithm that is used for parameter recovery. This theorem follows easily using the techniques of \cite{FDD2012Annals} and we include it for completeness.
	
	\begin{theorem}[Instability Result]
		\label{thm:mainIns}
		There exists a bow-free path of length four and data covariance matrix $\mathbf{\Sigma}$ such that the parameter recovery problem of obtaining $\mathbf{\Lambda}$ from $\mathbf{\Sigma}$ has an arbitrarily large condition number.
	\end{theorem}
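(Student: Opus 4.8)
The plan is to exhibit an explicit bow-free path on the four vertices $1 \to 2 \to 3 \to 4$, write the parameter-recovery map in closed form, and then tune the data covariance matrix so that an infinitesimal relative perturbation of a single entry forces an unboundedly large relative change in one nonzero entry of $\mathbf{\Lambda}$. First I would set up the recovery map. For the path $1\to2\to3\to4$ the only possibly nonzero entries of $\mathbf{\Lambda}$ are $a := \Lambda_{1,2}$, $b := \Lambda_{2,3}$, $c := \Lambda_{3,4}$, and bow-freeness forces $\Omega_{1,2}=\Omega_{2,3}=\Omega_{3,4}=0$. Expanding these three vanishing entries in $\mathbf{\Omega} = (\mathbf{I}-\mathbf{\Lambda})^{T}\mathbf{\Sigma}(\mathbf{I}-\mathbf{\Lambda})$ yields a triangular system with the unique solution
\[
 a = \frac{\Sigma_{1,2}}{\Sigma_{1,1}}, \qquad b = \frac{a\Sigma_{1,3} - \Sigma_{2,3}}{a\Sigma_{1,2} - \Sigma_{2,2}}, \qquad c = \frac{b\Sigma_{2,4} - \Sigma_{3,4}}{b\Sigma_{2,3} - \Sigma_{3,3}},
\]
valid on a neighbourhood of any $\mathbf{\Sigma}$ at which the denominators are nonzero; this is precisely the recovery map whose stability Definition~\ref{def:linftyCondition} measures (and is the specialization of the identification procedure of~\cite{FDD2012Annals}).

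Next I would isolate the instability. The entry $\Lambda_{2,3}=b$ depends on $\Sigma_{2,3}$ alone, with $\partial b/\partial \Sigma_{2,3} = -1/(a\Sigma_{1,2}-\Sigma_{2,2})$. Perturbing only that entry, $\tilde\Sigma_{2,3} = \Sigma_{2,3}(1+\gamma)$ with all other entries fixed, gives $\Rel{\mathbf{\Sigma}}{\tilde{\mathbf{\Sigma}}} = \gamma$, while $\tilde a = a$ and $\tilde b = b - \gamma\Sigma_{2,3}/(a\Sigma_{1,2}-\Sigma_{2,2})$, so if $b\neq 0$ then
\[
 \frac{\Rel{\mathbf{\Lambda}}{\tilde{\mathbf{\Lambda}}}}{\Rel{\mathbf{\Sigma}}{\tilde{\mathbf{\Sigma}}}} \;\ge\; \frac{\lvert \tilde b - b\rvert/\lvert b\rvert}{\gamma} \;=\; \frac{\lvert\Sigma_{2,3}\rvert}{\lvert a\Sigma_{1,3}-\Sigma_{2,3}\rvert}.
\]
Hence $\kappa(\mathbf{\Lambda},\mathbf{\Sigma}) \ge \lvert\Sigma_{2,3}\rvert/\lvert a\Sigma_{1,3}-\Sigma_{2,3}\rvert$, and it remains only to make the right-hand side arbitrarily large over legitimate covariance matrices.

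Finally I would build the family. Fix a small $\rho\in(0,\tfrac12)$, set all variances to $1$, put $\Sigma_{1,2}=\Sigma_{1,3}=\rho$, $\Sigma_{2,3}=\rho^{2}+\epsilon$, and the remaining off-diagonal entries to $0$ (one may instead take $\Sigma_{3,4}$ a small nonzero constant so that $\Lambda_{3,4}\neq 0$ too, which only changes $c$ by $O(\gamma\epsilon)$ and does not affect the bound). Then $a=\rho$, so $a\Sigma_{1,3}-\Sigma_{2,3}=-\epsilon$ and the bound becomes $\kappa \ge (\rho^{2}+\epsilon)/\epsilon \to \infty$ as $\epsilon\to 0^{+}$. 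One checks that $b = \epsilon/(1-\rho^{2})\neq 0$, so $\Lambda_{2,3}$ is a genuine nonzero parameter and the relative distance is well defined; that the denominator $b\Sigma_{2,3}-\Sigma_{3,3}$ defining $c$ stays bounded away from $0$; and that $\mathbf{\Sigma}^{(\epsilon)}$ is strictly diagonally dominant, hence positive definite, for all sufficiently small $\rho,\epsilon$, so it is a valid data covariance matrix realized by a bow-free path (the corresponding $\mathbf{\Omega}$ is automatically positive definite with the required zero pattern). Letting $\epsilon\to 0$ produces bow-free paths of length four with arbitrarily large condition number, proving Theorem~\ref{thm:mainIns}.

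The step requiring the most care is bookkeeping rather than conceptual: confirming that the closed-form map above is the unique recovery map (immediate from the triangular structure), and verifying positive-definiteness and the non-vanishing of the relevant denominators \emph{uniformly} as $\epsilon\to 0$ so that the limit-of-suprema in Definition~\ref{def:linftyCondition} genuinely applies; once these are in hand the lower bound on $\kappa$ drops out of the explicit expression $b=\epsilon/(1-\rho^{2})$.
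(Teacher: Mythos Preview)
Your argument is correct: the triangular recovery formulae are right, the single-entry perturbation gives $\Rel{\mathbf{\Sigma}}{\tilde{\mathbf{\Sigma}}}=\gamma$ exactly, the resulting lower bound $\kappa\ge |\Sigma_{2,3}|/|a\Sigma_{1,3}-\Sigma_{2,3}|=(\rho^2+\epsilon)/\epsilon$ is independent of $\gamma$ and therefore survives the limit in Definition~\ref{def:linftyCondition}, and your diagonal-dominance check shows $\mathbf{\Sigma}^{(\epsilon)}\succ 0$ (hence $\mathbf{\Omega}=(\mathbf{I}-\mathbf{\Lambda})^{T}\mathbf{\Sigma}(\mathbf{I}-\mathbf{\Lambda})\succ 0$ with the correct zero pattern) for small $\rho,\epsilon$. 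So the proposal proves Theorem~\ref{thm:mainIns} as stated.

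The route, however, differs from the paper's in a way that is worth noting. Your instability is driven by the \emph{parameter} $\Lambda_{2,3}=b=\epsilon/(1-\rho^{2})$ being small: $|\tilde b-b|$ is of order $\gamma\rho^{2}$, while $|b|$ is of order $\epsilon$, so the relative error blows up simply because you divide by a vanishing parameter. The paper instead fixes $\mathbf{\Lambda}$ with all entries bounded well away from zero ($\Lambda_{1,2}=\sqrt{2}$, $\Lambda_{2,3}=-\sqrt{2}$, $\Lambda_{3,4}=\tfrac12$) and moves $\mathbf{\Omega}$ (via $\Omega_{3,3}=1+\epsilon$) so that the covariance $\mathbf{\Sigma}$ approaches the singular variety where both the numerator and the denominator in the recurrence for $\Lambda_{3,4}$ vanish; perturbing $\mathbf{\Sigma}$ then shifts $\tilde\Lambda_{3,4}$ from $\tfrac12$ toward $1$. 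The payoff of the paper's construction is that it shows the condition number can be arbitrarily large even when every $|\Lambda_{i,i+1}|$ satisfies the lower bound in Model~\ref{mod:localDominance}; in other words, the instability is not an artifact of tiny parameters but of $\mathbf{\Sigma}$ violating the $\alpha$-dominance data property. Your construction is shorter and easier to verify, but because your instance has $|\Lambda_{2,3}|=O(\epsilon)$ it does not isolate that point: it would be excluded by the parameter lower bound alone, rather than by the $\alpha$-condition. For the bare theorem either argument suffices; for the paper's narrative (that the data-covariance hypothesis, not just the $\lambda$ lower bound, is what buys stability) the paper's example carries more information.
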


	We perform numerical simulations to corroborate the theoretical results in this paper. We verify our theorems using numerical simulations. Furthermore, we consider general graphs (\emph{e.g.,} clique of paths, layered graphs) and show that a similar behavior on the condition number holds. Finally, we consider a real-world dataset used in \cite{shimizu2011directlingam} and perform condition number analysis experimentally. 
	
	We conclude the paper by giving a general heuristic for practitioners to determine the condition number of any given instance. This heuristic can detect \emph{bad} instances with high-probability. This procedure might be of independent interest. 

\subsection{Related work}
A standard reference on Structural Causal Models is Bollen~\cite{Bollen}. Identifiability and robust identifiability of $\LSEM$s has been studied from various viewpoints in the literature: Robustness to model misspecification, to measurement errors, \emph{etc.} (\emph{e.g.,} Chapter 5 of Bollen~\cite{Bollen} and \cite{Hancock, Satorra, Huang_etal}). These works are not directly related to this paper, since they focus on the identifiability problem under erroneous model specification and/or measurement. See the sub-section on measurement errors below for further details.

\xhdr{Identifiability.} The problem of (generic) identification in $\LSEM$s is well-studied though still not fully understood. We give a brief overview of the known results. All works in this section assume that the data-covariance matrix is exact and do not consider robustness aspects of the problem. These works do not directly relate to the problem studied in this paper. This problem has a rich history (see~\cite{pearlBook} for some classical results) and we only give an overview of recent results. Brito and Pearl \cite{BP2006UAI} gave a sufficient condition called G-criterion which implies linear independence on a certain set of equations. The variables involved in this set is called the auxiliary variables. The main theorem in their paper is that if for every variable there is a corresponding ``auxiliary'' variable, then the model is identifiable. Followed by this, Foygel \etal~\cite{FDD2012Annals} introduced the notion of half-trek criterion which gives a graphical criterion for identifiability in $\LSEM$. This criterion strictly subsumes the G-criterion framework. Both \cite{BP2006UAI,FDD2012Annals} supply 
efficient algorithms for identification. Chen \etal~\cite{CTP2014AAAI} tackle the problem of identifying ``overidentifying constraints'', which leads to identifiability on a class of graphs strictly larger than those in \cite{FDD2012Annals} and \cite{BP2006UAI}. Ernest \etal~\cite{ERB2016Arxiv} consider a variant of the problem called ``Partially Linear Additive SEM'' (PLSEM) with gaussian noise. In this variant the value for a random variable $X$ is determined both by an additive linear factor of some of its parents (called linear parents) and additive factor of the other parents (called non-linear parents) via a non-linear function. They give characterization for identifiability in terms of graphical conditions for this more general model. Chen~\cite{Chen2016aNIPS} extends the half-trek criterion of \cite{FDD2012Annals} and give a $c$-component decomposition (Tian~\cite{Tian2012}) based algorithm to identify a broader class of models. Drton and Weihs~\cite{DW2016Scandinavian} also extend the half-trek criterion (\cite{FDD2012Annals}) to include the ancestor decomposition technique of Tian~\cite{Tian2012}. Chen \etal~\cite{CKB2017ICML} approach the parameter identification problem for $\LSEM$ via the notion of \emph{auxiliary variables}. This method subsumes all the previous works above and identifies a strictly larger set of models.

	\xhdr{Measurement errors.} Another recent line of work \cite{scheines2016measurement, zhang2017causal} studies the problem of identification under \emph{measurement errors}. Both our work and these works share the same motivation: causal inference in real-world is usually performed on noisy data and hence it is important to understand how noise affects the causal identification. However their focus differs significantly from the question we tackle in this paper. They pose and answer the problem of causal identification when the variables used are not identical to the ones that were intended to be measured. This leads to different conditional independences and hence a different causal graph; they study the identification problem in this setting and characterize when such identification is possible. Recently \cite{ghoshal2018learning} looked at sample complexity of identification in $\LSEM$. They consider the special case when $\vec{\Omega}$ is the Identity matrix and give a new algorithm for identifiability with optimal sample complexity.

	In this paper we are interested in the question of robust identifiability, along the lines of aforementioned work of Schulman and Srivastava~\cite{SS2016UAI}. While the work of \cite{SS2016UAI} was direct inspiration for the present paper, since we work with LSEMs and \cite{SS2016UAI} work with the semi-Markovian causal models of Pearl, the techniques involved are completely different. Moreover, as previously remarked, another important difference between \cite{SS2016UAI} and our work is that our main result is positive: the
	parameter recovery problem is well-conditioned for most choices of the parameters for a well-defined notion of most choices.

\section{Bow-free Paths --- Stable Instances}

In this section, we show that for bow-free paths under Model~\ref{mod:localDominance}, the condition number is small. Later in Section~\ref{sec:random} we show that a natural random process of generating $\mathbf{\Lambda}$ and $\mathbf{\Omega}$ satisfies these assumptions. Together this implies that for a large class of inputs, the stability of the map is well-behaved. More precisely, we prove Theorem~\ref{thm:main2}.

To prove the main theorem, we set-up some helper lemmas. Using Foygel \etal~\cite{FDD2012Annals}, we obtain the following recurrence to compute the values of $\mathbf{\Lambda}$ from the correlation matrix $\mathbf{\Sigma}$. The base case is $\Lambda_{1, 2} = \frac{\Sigma_{1, 2}}{\Sigma_{1, 1}}$. For every $i \geq 2$ we have,
\begin{equation}  
\label{eqn:Recurrence}
\mui{i}{i+1} = \frac{-\mui{i-1}{i} \Sigma_{i-1,i+1} + \Sigma_{i,i+1}}
{-\mui{i-1}{i} \Sigma_{i-1,i} + \Sigma_{i,i}} \mper
\end{equation}
We first show that the model assumptions~\ref{mod:localDominance} imply that the parameters recovered from the perturbed version are not too large.

\begin{lemma}
	\label{lem:muipbounded}
	For each $i \in [n-1]$, we have 
	$\Abs{\tilde{\Lambda}_{i, i+1}} \leq \tau_i ~\left(:= \tau_{i-1}  + 5 \gamma \right) \leq \tau \left(:= 1  + 5n \gamma \right)$.
\end{lemma}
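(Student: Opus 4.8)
The plan is to prove both inequalities by induction on $i$, using the recovery recurrence \eqref{eqn:Recurrence}: parameter recovery on a bow-free path is exactly this recurrence, so the recovered parameters on the perturbed instance satisfy \eqref{eqn:Recurrence} with every $\Sigma_{\cdot,\cdot}$ replaced by $\tilde\Sigma_{\cdot,\cdot}$. The preliminary step is to record the two elementary consequences of Model~\ref{mod:localDominance} that will be used at every level of the induction: the perturbation property gives $\tilde\Sigma_{i,i}\ge(1-\gamma)\Sigma_{i,i}>0$ (here we use $\Sigma_{i,i}>0$, which holds for any non-degenerate model), while combining the data property with the perturbation property gives $\Abs{\tilde\Sigma_{i-1,i}},\Abs{\tilde\Sigma_{i,i+1}},\Abs{\tilde\Sigma_{i-1,i+1}}\le(1+\gamma)\,\Diag\,\Sigma_{i,i}$, since each of $\Abs{\Sigma_{i-1,i}},\Abs{\Sigma_{i,i+1}},\Abs{\Sigma_{i-1,i+1}}$ is at most $\Diag\,\Sigma_{i,i}$ and the perturbation inflates each entry by a factor of at most $1+\gamma$. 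Throughout we set $\tau_0:=1$, so that $\tau_i=1+5i\gamma$, and we note that $\gamma\le\frac{1}{n^6}$ forces $\tau=1+5n\gamma<2$.

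For the base case $i=1$, the recurrence reads $\tilde\Lambda_{1,2}=\frac{\tilde\Sigma_{1,2}}{\tilde\Sigma_{1,1}}$, hence $\Abs{\tilde\Lambda_{1,2}}\le\frac{(1+\gamma)\Diag\,\Sigma_{1,1}}{(1-\gamma)\Sigma_{1,1}}=\frac{(1+\gamma)\Diag}{1-\gamma}\le 1\le\tau_1$, using $\Diag\le\frac{1}{\DEL}$ and that $\gamma$ is small. For the inductive step, assume $\Abs{\tilde\Lambda_{i-1,i}}\le\tau_{i-1}$ with $1\le\tau_{i-1}\le\tau<2$. Substituting the bounds above into \eqref{eqn:Recurrence} and applying the triangle inequality, the numerator $\Abs{-\tilde\Lambda_{i-1,i}\tilde\Sigma_{i-1,i+1}+\tilde\Sigma_{i,i+1}}$ is at most $(1+\gamma)\Diag\,\Sigma_{i,i}\,(\tau_{i-1}+1)$, while the denominator $\Abs{-\tilde\Lambda_{i-1,i}\tilde\Sigma_{i-1,i}+\tilde\Sigma_{i,i}}$ is at least $\Sigma_{i,i}\bigl[(1-\gamma)-\tau_{i-1}(1+\gamma)\Diag\bigr]$, which for these parameter ranges is strictly positive --- this in passing also confirms the recovery map is well defined on the perturbed instance. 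Dividing and then crudely bounding $\tau_{i-1}+1\le 3$ and $\tau_{i-1}\le 2$ yields $\Abs{\tilde\Lambda_{i,i+1}}\le\frac{3(1+\gamma)}{3-7\gamma}$, and it remains only to verify the one-variable inequality $\frac{3(1+\gamma)}{3-7\gamma}\le 1+5\gamma\le\tau_{i-1}+5\gamma=\tau_i$, which after clearing denominators reduces to $5\gamma(1-7\gamma)\ge 0$ --- true since $\gamma\le\frac{1}{n^6}\le\frac{1}{7}$. Unrolling $\tau_i=\tau_{i-1}+5\gamma$ then gives the second inequality $\tau_i\le\tau$ for every $i\in[n-1]$.

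I do not expect a genuine obstacle; the whole argument is a single-variable induction. The one point requiring care is that the constant $\Diag\le\frac{1}{\DEL}$ of Model~\ref{mod:localDominance} is precisely strong enough to keep the denominator estimate positive and, simultaneously, to make the crude numerator-over-denominator bound collapse to the clean increment $5\gamma$ --- with a larger constant the induction would not close. Equivalently, the inductive step exhibits $\Abs{\tilde\Lambda_{i,i+1}}$ as $g_\gamma(\tau_{i-1})$ for the increasing map $g_\gamma(t)=\frac{(1+\gamma)\Diag\,(t+1)}{(1-\gamma)-t(1+\gamma)\Diag}$, whose value on $t\in[1,2]$ stays below $1+5\gamma$; the only remaining bookkeeping is to keep the $(1\pm\gamma)$ factors consistent and to confirm that $\tau$ stays below the threshold ($t=2$, or more sharply the smaller root of $\Diag\,t^2-(1-\Diag)t+\Diag$) that these estimates use.
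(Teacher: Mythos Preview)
Your proposal is correct and follows the same inductive strategy as the paper --- bounding numerator and denominator in the perturbed recurrence~\eqref{eqn:Recurrence} using the data and perturbation properties of Model~\ref{mod:localDominance}. Your execution is in fact cleaner: by immediately coarsening $\tau_{i-1}\le 2$ and $\tau_{i-1}+1\le 3$ you obtain the uniform bound $\Abs{\tilde\Lambda_{i,i+1}}\le 1+5\gamma$ for every $i$, which is slightly stronger than the paper's increasing $\tau_i$ and sidesteps the paper's longer chain of intermediate estimates culminating in a quadratic inequality in $\tau_{i-1}$.
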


\begin{proof}
	For $i=1$, we have
	\[ \Abs{\tilde{\Lambda}_{1, 2}} = \Abs{\frac{\tilde{\Sigma}_{1,2}}{\tilde{\Sigma}_{1,1}}} 
	\leq \frac{\Abs{\Sigma_{1, 2}} + \gamma \Sigma_{1,1}}{\Sigma_{1, 1} - \gamma \Sigma_{1, 1}} \leq \frac{\Diag + \gamma}{1 - \gamma} \leq \frac{1}{\DEL}+ 2 \gamma\]
	The last inequality used the fact that $\frac{1}{1-\gamma} \leq 2$ when $n \geq 2$. Indeed, this is true since $\gamma \leq \frac{1}{\GAM}$. Thus, $\gamma \leq \frac{1}{64}$ when $n \geq 2$. Therefore, $\frac{1}{1-\gamma} \leq \frac{64}{63} \leq 2$. Thus, we have
	
	\[ \frac{1}{\DEL} + 2 \gamma \leq 1 + 2 \gamma (:= \tau_{1}) \leq \tau \quad \paren{\textrm{Using } \Diag \leq \frac{1}{\DEL}} \mper \]
	Suppose, $\Abs{\tilde{\Lambda}_{i-1, i}} \leq \tau_{i-1}$. 
	\begin{align}
		 \Abs{\tilde{\Lambda}_{i, i+1}} & = \frac{\Abs{-\tilde{\Lambda}_{i-1, i} \tilde{\Sigma}_{i-1,i+1} + \tilde{\Sigma}_{i,i+1}}}
	{\Abs{-\tilde{\Lambda}_{i-1, i} \tilde{\Sigma}_{i-1,i} + \tilde{\Sigma}_{i,i}}}	& \text{(From recurrence~\eqref{eqn:Recurrence})} \label{eq:firstEq} \\
	& \leq \frac{\Abs{\tilde{\Sigma}_{i,i+1}} + \Abs{\tilde{\Lambda}_{i-1, i}}\Abs{\tilde{\Sigma}_{i-1,i+1}}}{\Abs{-\tilde{\Lambda}_{i-1, i} \tilde{\Sigma}_{i-1,i} + \tilde{\Sigma}_{i,i}}} &\nonumber \\
	& \leq \frac{\Abs{\tilde{\Sigma}_{i,i+1}} + \tau_{i-1} \Abs{\tilde{\Sigma}_{i-1,i+1}}}{\Abs{-\tilde{\Lambda}_{i-1, i} \tilde{\Sigma}_{i-1,i} + \tilde{\Sigma}_{i,i}}} & \text{(From Inductive Hypothesis)} \label{eq:IH} 
	\end{align}
	We will now prove a lower-bound on the denominator $\Abs{-\tilde{\Lambda}_{i-1, i} \tilde{\Sigma}_{i-1,i} + \tilde{\Sigma}_{i,i}}$. For every $x, y$ we know that $\Abs{x + y} \geq \Abs{\Abs{x} - \Abs{y}}$. Thus, it follows that $\Abs{-\tilde{\Lambda}_{i-1, i} \tilde{\Sigma}_{i-1,i} + \tilde{\Sigma}_{i,i}} \geq \Abs{\Abs{\tilde{\Sigma}_{i,i}} - \Abs{\tilde{\Lambda}_{i-1, i} \tilde{\Sigma}_{i-1,i}}}$. 
	
	We will now show that $\Abs{\tilde{\Sigma}_{i,i}} - \Abs{\tilde{\Lambda}_{i-1, i} \tilde{\Sigma}_{i-1,i}} \geq 0$. Thus, we have $\Abs{\Abs{\tilde{\Sigma}_{i,i}} - \Abs{\tilde{\Lambda}_{i-1, i} \tilde{\Sigma}_{i-1,i}}} = \Abs{\tilde{\Sigma}_{i,i}} - \Abs{\tilde{\Lambda}_{i-1, i} \tilde{\Sigma}_{i-1,i}}$.
	
Using the data properties of the model and the inductive hypothesis, we have $\Abs{\tilde{\Sigma}_{i, i}} \geq \Sigma_{i, i}(1 - \gamma)$, $\Abs{\tilde{\Lambda}_{i-1, i}} \leq \tau_{i-1}$ and $\Abs{\tilde{\Sigma}_{i-1, i}} \leq \Abs{\Sigma_{i-1, i}} + \gamma \Sigma_{i, i} \leq \Sigma_{i, i}(\Diag + \gamma)$. Thus, we have 
	\[
			\Abs{\tilde{\Sigma}_{i,i}} - \Abs{\tilde{\Lambda}_{i-1, i} \tilde{\Sigma}_{i-1,i}} \geq \Sigma_{i, i} \Paren{1-\tau_{i-1} \Diag - (1+\tau_{i-1}) \gamma} \geq \Sigma_{i, i} \Paren{1- (1 + 5n \gamma)\Diag - 3\gamma} \geq \Sigma_{i, i} \Paren{1-\Diag-15n \gamma} .
	\]
 	Since $\Diag \leq \frac{1}{\DEL}$, this implies that $1-\Diag -15n \gamma \geq 0$ for $n \geq 2$. Therefore, Equation~\eqref{eq:IH} can be upper bounded by,
	\begin{align}
	& \leq  \frac{\Abs{\tilde{\Sigma}_{i,i+1}} + \tau_{i-1} \Abs{\tilde{\Sigma}_{i-1,i+1}}}{\Abs{ \tilde{\Sigma}_{i,i} } - \Abs{\tilde{\Lambda}_{i-1, i}} \Abs{\tilde{\Sigma}_{i-1,i} }} & \nonumber \\
	& \leq \frac{ \tau_{i-1} \cdot \Abs{\tilde{\Sigma}_{i-1,i+1}} + \Abs{\tilde{\Sigma}_{i,i+1}}}
	{\Abs{\tilde{\Sigma}_{i,i}} - \tau_{i-1} \cdot \Abs{\tilde{\Sigma}_{i-1,i}}} & \text{(From Inductive Hypothesis)} \nonumber \\
	& \leq \frac{\tau_{i-1} \Diag + \Diag + \gamma (\tau_{i-1} + 1)}{1-\tau_{i-1} \Diag - \gamma(\tau_{i-1} + 1)} & \text{(From data properties)} \label{eq:thirdIn}
	\end{align}
 	We will now show that Equation~\eqref{eq:thirdIn} can be upper-bounded by	$\tau_{i-1} + 5\gamma$. This will complete the induction.
 
	Note that Equation~\eqref{eq:thirdIn} can be written as 
		\begin{equation}
			\label{eq:advInt1}
			= \frac{\Diag(\tau_{i-1}+1) + \gamma (\tau_{i-1} + 1)}{(1-\tau_{i-1} \Diag) \Brac{1-\frac{\gamma(\tau_{i-1}+1)}{1-\tau_{i-1}\Diag}}}	
		\end{equation}
		Recall that we have $\gamma \leq \tfrac{1}{\GAM}$, $\tau_{i-1} \leq 1+5n \gamma$ and $\alpha \leq \frac{1}{5}$. Consider $\frac{\gamma(\tau_{i-1}+1)}{1-\tau_{i-1}\Diag}$. This can be written as 
		\begin{align*}
				\frac{\gamma(\tau_{i-1}+1)}{1-\tau_{i-1}\Diag} & \leq \frac{\tfrac{1}{\GAM} \left[ 1 + \frac{5 n}{\GAM} + 1 \right]}{1 - \left( 1+\tfrac{5n}{\GAM} \cdot \tfrac{1}{5} \right)} \leq \frac{\tfrac{2}{n^6} + \tfrac{5}{n^5}}{\tfrac{4}{5} - \tfrac{1}{n^5}}  \leq \frac{\tfrac{2}{64} + \tfrac{5}{32}}{\tfrac{4}{5} - \tfrac{1}{32}}  \leq 0.25.
		\end{align*}
		Thus, we have $\Brac{1-\frac{\gamma(\tau_{i-1}+1)}{1-\tau_{i-1}\Diag}} \geq 0.75$. Therefore, Eq.~\eqref{eq:advInt1} can be upper-bounded by,
		
		\begin{align}
			& \leq \frac{4 \Diag(\tau_{i-1}+1) + 4\gamma (\tau_{i-1} + 1)}{3(1-\tau_{i-1} \Diag)} & \nonumber \\
			& =  \frac{4 \Diag(\tau_{i-1}+1)}{3(1-\tau_{i-1} \Diag)} +  \frac{4\gamma (\tau_{i-1} + 1)}{3(1-\tau_{i-1} \Diag)} \label{eq:advInt2}
		\end{align}
		Consider $ \frac{4\gamma (\tau_{i-1} + 1)}{3(1-\tau_{i-1} \Diag)}$. When $n \geq 2$ we have,
		\begin{align*}
			 \frac{4\gamma (\tau_{i-1} + 1)}{3(1-\tau_{i-1} \Diag)} & \leq \frac{4}{3} \left( \frac{1 + 5 n \gamma + 1}{1-(1 + 5 n \gamma)\tfrac{1}{5}} \right) \leq \frac{4}{3} \left( \frac{2 + \tfrac{5}{n^5}}{\tfrac{4}{5} - \tfrac{1}{n^5}} \right) \leq 	\frac{4}{3} \left( \frac{2 + \tfrac{5}{32}}{\tfrac{4}{5} - \tfrac{1}{32}} \right) \leq 5.
		\end{align*}
		Thus, Eq.~\eqref{eq:advInt2} can be upper-bounded by,
		\begin{align*}
			& \leq \frac{4 \Diag(\tau_{i-1}+1)}{3(1-\tau_{i-1} \Diag)} + 5 \gamma \leq \frac{\tfrac{4}{15} (1 + \tau_{i-1})}{3\left(1-\tfrac{\tau_{i-1}}{5} \right)} + 5 \gamma
		\end{align*}
		Thus, as long as $\frac{\tfrac{4}{15} (1 + \tau_{i-1})}{3\left(1-\tfrac{\tau_{i-1}}{5} \right)} \leq \tau_{i-1}$ the induction holds. Rearranging, this inequality holds if and only if
		\[
				9 \tau_{i-1}^2 - 41 \tau_{i-1} + 4 \leq 0.
		\]
		Thus the inequality holds as long as $\tau_{i-1} \in [0.099, 4.45]$. From the definition we have $\tau_{i-1} \geq \tau_1 = 1 + 5\gamma$. From the inductive hypothesis we have $\tau_{i-1} \leq 1 + 5 (i-1) \gamma$. Thus, $\tau_{i-1} \in [0.099, 4.45]$ holds.
		This completes the inductive step.

	The above induction implies that $\tau_1 \leq \tau_2 \leq \ldots \leq \tau_{n} \leq \tau + 5n \gamma$.
\end{proof}

Next, we show that the relative distance between the real parameter $\mathbf{\Lambda}$ and the recovered parameter from the perturbed instance $\mathbf{\tilde{\Lambda}}$ is not too large.
\begin{lemma}
	\label{lem:main}
	Let $\tau = 1 + 5n \gamma$. Define $\beta_c := \frac{[(3 + 3\tau) \Diag + (\tau + 1)]}{1- (\tau + 2) \Diag - (\tau + 1) \Diag^2 - 4n \gamma}$. Then for each $i \in [n-1]$ we have that,
	\[ \Abs{\mui{i}{i+1} - \tilde{\Lambda}_{i, i+1}} \leq \beta_c \cdot \frac{ \gamma}{1-\gamma} \mper \]
\end{lemma}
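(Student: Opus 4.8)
The plan is to bound $\Abs{\mui{i}{i+1} - \tilde{\Lambda}_{i,i+1}}$ by induction on $i$, paralleling the proof of Lemma~\ref{lem:muipbounded} but now tracking the \emph{difference} between the true and perturbed recurrences rather than just the magnitude of the perturbed values. First I would write both $\mui{i}{i+1}$ and $\tilde{\Lambda}_{i,i+1}$ using the recurrence~\eqref{eqn:Recurrence}, as fractions $N_i/D_i$ and $\tilde{N}_i/\tilde{D}_i$ respectively, where $N_i = -\mui{i-1}{i}\Sigma_{i-1,i+1} + \Sigma_{i,i+1}$, $D_i = -\mui{i-1}{i}\Sigma_{i-1,i} + \Sigma_{i,i}$, and the tilde versions use the perturbed entries $\tilde{\Sigma}$ and the perturbed parameter $\tilde{\Lambda}_{i-1,i}$. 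The key algebraic identity is
\[
\frac{N_i}{D_i} - \frac{\tilde{N}_i}{\tilde{D}_i} = \frac{N_i - \tilde{N}_i}{D_i} + \frac{\tilde{N}_i}{\tilde{D}_i}\cdot\frac{\tilde{D}_i - D_i}{D_i},
\]
so I need three ingredients: a lower bound on $\Abs{D_i}$, an upper bound on $\Abs{N_i - \tilde{N}_i}$, and an upper bound on $\Abs{\tilde{D}_i - D_i}$, together with the already-established bound $\Abs{\tilde{N}_i/\tilde{D}_i} = \Abs{\tilde{\Lambda}_{i,i+1}} \leq \tau$ from Lemma~\ref{lem:muipbounded}.

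The lower bound $\Abs{D_i} \geq \Sigma_{i,i}(1 - \Diag)$ follows from the data properties, since $\Abs{\mui{i-1}{i}} \leq 1$ (which is the $\gamma = 0$ specialization of Lemma~\ref{lem:muipbounded}, or can be obtained from the commented-out Lemma on $\alpha_i$) and $\Abs{\Sigma_{i-1,i}} \leq \Diag\,\Sigma_{i,i}$. For $\Abs{N_i - \tilde{N}_i}$, expanding the difference gives terms $\Abs{\mui{i-1}{i} - \tilde{\Lambda}_{i-1,i}}\Abs{\Sigma_{i-1,i+1}}$ (controlled by the inductive hypothesis times $\Diag\,\Sigma_{i,i}$), $\Abs{\tilde{\Lambda}_{i-1,i}}\Abs{\e_{i-1,i+1}} \leq \tau\gamma\Abs{\Sigma_{i-1,i+1}} \leq \tau\gamma\Diag\,\Sigma_{i,i}$, and $\Abs{\e_{i,i+1}} \leq \gamma\Diag\,\Sigma_{i,i}$; similarly for $\Abs{\tilde{D}_i - D_i}$ one gets $\Abs{\mui{i-1}{i} - \tilde{\Lambda}_{i-1,i}}\Abs{\Sigma_{i-1,i}} + \tau\gamma\Abs{\Sigma_{i-1,i}} + \gamma\Sigma_{i,i}$. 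Dividing through by $\Abs{D_i} \geq \Sigma_{i,i}(1-\Diag)$ cancels all the $\Sigma_{i,i}$ factors, leaving an affine recurrence in the quantity $\delta_i := \Abs{\mui{i}{i+1} - \tilde{\Lambda}_{i,i+1}}$ of the form $\delta_i \leq A\,\delta_{i-1} + B\gamma$ for constants $A < 1$ and $B$ depending on $\Diag$ and $\tau$; unrolling this geometric recursion with $\delta_1 = \Abs{\Sigma_{1,2}/\Sigma_{1,1} - \tilde{\Sigma}_{1,2}/\tilde{\Sigma}_{1,1}}$ (itself $O(\gamma/(1-\gamma))$) yields the closed-form bound $\beta_c\cdot\gamma/(1-\gamma)$, where the denominator $1 - (\tau+2)\Diag - (\tau+1)\Diag^2 - 4n\gamma$ records the amplification factor $1/(1-A)$ after carefully collecting the $\Diag$- and $\gamma$-dependent terms.

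The main obstacle I anticipate is the bookkeeping needed to get the constant $\beta_c$ in exactly the stated form: one must be careful about whether the $\tau\gamma$-type cross terms are absorbed into the ``$B\gamma$'' part or contribute to shrinking the effective contraction denominator (which is why $-4n\gamma$ appears there rather than being folded into the numerator), and one must verify that the contraction factor $A = \frac{(\tau+2)\Diag + (\tau+1)\Diag^2 + 4n\gamma}{1 - \ldots}$ is genuinely less than $1$ under the model assumption $\Diag \leq 1/5$ and $\gamma \leq 1/n^6$ — this is precisely the content of the ``$1 - (\tau+2)\Diag - (\tau+1)\Diag^2 > 0$'' condition flagged in the Remark after Model~\ref{mod:localDominance}. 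Once the affine recurrence is set up with the right grouping of terms, unrolling it is routine, and bounding $\sum_{j} A^j \leq 1/(1-A)$ gives the result; I would also double-check the base case $i = 1$ separately since there $\mui{0}{1}$ does not appear and the recurrence degenerates to the ratio of a perturbed entry to a perturbed diagonal.
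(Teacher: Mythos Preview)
Your approach is correct in spirit and would yield a bound of the right form $O(1)\cdot\gamma/(1-\gamma)$, but it uses a genuinely different decomposition from the paper's proof, and consequently does \emph{not} produce the specific constant $\beta_c$ in the statement. You use the telescoping identity
\[
\frac{N_i}{D_i}-\frac{\tilde N_i}{\tilde D_i}=\frac{N_i-\tilde N_i}{D_i}+\frac{\tilde N_i}{\tilde D_i}\cdot\frac{\tilde D_i-D_i}{D_i},
\]
divide only by $\Abs{D_i}\geq(1-\Diag)\Sigma_{i,i}$, and unroll the resulting affine recurrence $\delta_i\leq A\,\delta_{i-1}+B\gamma$ to a geometric series. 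The paper instead brings everything over the common denominator $D_i\tilde D_i$, fully expands the numerator $N_i\tilde D_i-\tilde N_i D_i$ into its eight $\epsilon$-free and eight $\epsilon$-carrying terms, bounds the numerator by a multiple of $\Sigma_{i,i}^2$, lower-bounds $\Abs{D_i\tilde D_i}$ by $\bigl(1-(\tau+1)\Diag-\tau\Diag^2-2(1+\tau)\gamma\bigr)\Sigma_{i,i}^2$, and then directly checks that $\delta_{i-1}\leq\beta_c\gamma/(1-\gamma)$ implies $\delta_i\leq\beta_c\gamma/(1-\gamma)$ (a fixed-point argument rather than a summed geometric series). Because the paper divides by the \emph{product} $D_i\tilde D_i$, the $(\tau+1)\Diag^2$ term and the $O(n\gamma)$ term appear naturally in the denominator of $\beta_c$; your decomposition, dividing only by $D_i$, would give a denominator like $1-(\tau+2)\Diag$ with no $\Diag^2$ term and a different numerator. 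Both routes give an $O(1)$ constant under $\Diag\leq 1/5$ and $\gamma\leq 1/n^6$, so for the purpose of Theorem~\ref{thm:main2} either suffices; your route is cleaner (and is in fact the approach used in an earlier, commented-out version of the argument in the source), while the paper's cross-multiplication is what produces the exact $\beta_c$ displayed in the lemma.
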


\begin{proof}
	We will prove this via induction on $i$. The base case is when $i=1$. Note that $\Lambda_{1, 2} = \frac{\Sigma_{1, 2}}{\Sigma_{1, 1}}$ and $\tilde{\Lambda}_{1, 2}=\frac{\tilde{\Sigma}_{1, 2}}{\tilde{\Sigma}_{1, 1}}$. The expression $\Abs{\mui{1}{2} - \tilde{\Lambda}_{1, 2}}$ evaluates to,
	 \[
	 \Abs{\frac{\Sigma_{1, 2}\Sigma_{1, 1} + \Sigma_{1, 2}\e_{1, 1} - \Sigma_{1, 2}\Sigma_{1, 1} - \Sigma_{1, 1} \e_{1, 2}}{\Sigma_{1, 1}\Sigma_{1, 1} + \e_{1, 1} \Sigma_{1, 1}}}.
	 \]
	This can be upper-bounded by,
	  \begin{align*}
	  		\frac{\Abs{\Sigma_{1, 2}\e_{1, 1} - \Sigma_{1, 1} \e_{1, 2}}}{\Abs{\Sigma_{1, 1}\Sigma_{1, 1} + \e_{1, 1} \Sigma_{1, 1}}} & \leq \frac{\Abs{\Sigma_{1, 2}} \Abs{\e_{1, 1}} + \Sigma_{1, 1} \Abs{\e_{1, 2}}}{\Sigma_{1, 1}^2 - \Abs{\e_{1, 1}} \Sigma_{1, 1}} \\
	  		 & \leq \frac{\Diag \gamma \Sigma_{1, 1}^2 + \gamma \Sigma_{1, 1}^2}{\Sigma_{1, 1}^2 - \gamma \Sigma_{1, 1}^2} \\
	  		& = \frac{\gamma(1+\Diag)}{1-\gamma }.
	  \end{align*}
		The second inequality uses the perturbation properties in Model~\ref{mod:localDominance}. Therefore, when $i=1$, we have that $\frac{\Abs{\mui{1}{2} - \tilde{\Lambda}_{1, 2}}}{\Abs{\Lambda_{1, 2}}} \leq \gamma/(1-\gamma) \left( 1+\Diag \right)$.
		
		Note that $\beta_c \geq 1+ \tau \geq 2$. Consider $\frac{1+\alpha}{1-\gamma}$. Since $alpha \leq \frac{1}{5}$ and $\gamma \leq \frac{1}{\GAM}$ we have  $\frac{1+\alpha}{1-\gamma} \leq \frac{1+\tfrac{1}{5}}{1-\tfrac{1}{32}} \leq 2 \leq \beta_c$. This completes the base case.
	
	We will now consider the case when $i \geq 2$ with the inductive hypothesis that for all $k=1, 2, \ldots, i-1$, the statement in the lemma is true. 
	
	Consider the expression $\Abs{\mui{i}{i+1} - \tilde{\Lambda}_{i, i+1}}$. Expanding out the terms based on Equation~\eqref{eqn:Recurrence} we get the absolute value of the following in the numerator.
	\begin{align*}
	&\Lambda_{i-1, i} \tilde{\Lambda}_{i-1, i} \Sigma_{i-1, i+1} \left(\Sigma_{i-1, i} + \e_{i-1, i} \right)  - \tilde{\Lambda}_{i-1, i} \Sigma_{i, i+1} \left( \Sigma_{i-1, i} + \e_{i-1, i} \right)\\
	& -\Lambda_{i-1, i} \Sigma_{i-1, i+1} \left( \Sigma_{i, i} + \e_{i, i} \right)  + \Sigma_{i, i+1} \left( \Sigma_{i, i} + \e_{i, i} \right) \\
	&- \Lambda_{i-1, i} \tilde{\Lambda}_{i-1, i} \Sigma_{i-1, i} \left( \Sigma_{i-1, i+1} + \e_{i-1, i+1} \right)  + \Lambda_{i-1, i} \Sigma_{i-1, i} \left( \Sigma_{i, i+1} + \e_{i, i+1} \right)\\
	&+\tilde{\Lambda}_{i-1, i} \Sigma_{i, i} \left( \Sigma_{i-1, i+1} + \e_{i-1, i+1} \right)  - \Sigma_{i, i} \left( \Sigma_{i, i+1} + \e_{i, i+1} \right).
	\end{align*}
	First consider the terms without $\e_{i, j}$ as a multiplicative factor in the above expression. After cancellations, they evaluate to
	\begin{align*}
	 & \Abs{-\tilde{\Lambda}_{i-1, i} \Sigma_{i, i+1} \Sigma_{i-1, i}-\Lambda_{i-1, i} \Sigma_{i-1, i+1}\Sigma_{i, i} + \Lambda_{i-1, i} \Sigma_{i-1, i}\Sigma_{i, i+1}+\tilde{\Lambda}_{i-1, i} \Sigma_{i, i}\Sigma_{i-1, i+1}} \\
	 & \leq \Abs{\Lambda_{i-1, i} - \tilde{\Lambda}_{i-1, i}} \Abs{\Sigma_{i-1, i}}\Abs{\Sigma_{i, i+1}} + \Abs{\Lambda_{i-1, i} - \tilde{\Lambda}_{i-1, i}}\Sigma_{i, i} \Abs{\Sigma_{i-1, i+1}}.
	\end{align*}
	Using the inductive hypothesis, we have that $\Abs{ \Lambda_{i-1, i}-\tilde{\Lambda}_{i-1, i}} \leq \beta_c \cdot \frac{\gamma}{1-\gamma}$. Therefore, we have,
	\begin{align*}
		& \leq \frac{\gamma}{1-\gamma} \cdot \beta_c \cdot \Diag^2 \cdot \Sigma_{i, i}^2 + \frac{\gamma}{1-\gamma} \cdot \beta_c \cdot \Diag \cdot \Sigma_{i, i}^2 \\
		& \leq \frac{\gamma}{1-\gamma} \cdot \beta_c \cdot (\Diag^2 + \Diag) \cdot \Sigma_{i, i}^2.
	\end{align*}
	Among the terms involving $\e_{i, j}$ consider those which are multiplied by a $\Sigma_{i, i}$.  The terms $\Sigma_{i, i} \e_{i, i+1}$ and $\tilde{\Lambda}_{i-1, i} \Sigma_{i, i}\e_{i-1, i+1}$ can be upper-bounded by $\gamma \Sigma^2_{i ,i}$ and $\tau \gamma \Sigma^2_{i ,i}$ respectively. Among the remaining $6$ terms, three of them, namely $\Abs{\Lambda_{i-1, i} \Sigma_{i-1, i+1} \e_{i, i}}$, $\Abs{\Sigma_{i, i+1} \e_{i, i}}$ and $\Abs{\Lambda_{i-1, i}\Sigma_{i-1, i} \e_{i, i+1}}$, can each be upper-bounded by $\gamma \Diag \Sigma^2_{i, i}$ and the remaining three, namely $\Abs{\Lambda_{i-1, i} \tilde{\Lambda}_{i-1, i} \Sigma_{i-1, i+1}\e_{i-1, i}}$, $\Abs{\tilde{\Lambda}_{i-1, i} \Sigma_{i, i+1}\e_{i-1, i}}$ and $\Abs{\Lambda_{i-1, i} \tilde{\Lambda}_{i-1, i} \Sigma_{i-1, i}\e_{i-1, i+1}}$, can each be upper-bounded by $\tau \gamma \Diag \Sigma^2_{i, i}$. Therefore, we have that the absolute value of the numerator can be upper-bounded by,
	 \[
	 		\frac{\gamma}{1-\gamma} \cdot \beta_c \cdot \left( \Diag^2 + \Diag \right) \cdot \Sigma_{i, i}^2 + 3\left( \tau + 1 \right) \gamma \Diag \Sigma_{i, i}^2 + \left( \tau + 1 \right) \gamma \Sigma_{i, i}^2.
	 \]
	Since $\gamma \geq 0$, we have that $\gamma \leq \frac{\gamma}{1-\gamma}$. Therefore, the numerator can be upper-bounded by
	\begin{equation}
		\label{eq:IndNum}
			\frac{\gamma}{1-\gamma}  \left( \beta_c \Diag^2 + (\beta_c+ 3(\tau + 1)) \Diag + (\tau + 1)  \right)\Sigma_{i, i}^2.
	\end{equation}
	We want to lower-bound the absolute value of the denominator, which can be written as, 
	\begin{align*}
		& \Lambda_{i-1, i} \tilde{\Lambda}_{i-1, i} \Sigma_{i-1, i} \left( \Sigma_{i-1, i} + \e_{i-1, i} \right) - \Lambda_{i-1, i} \Sigma_{i-1, i} \left( \Sigma_{i, i} + \e_{i, i} \right) \\
		&-\tilde{\Lambda}_{i-1, i} \Sigma_{i, i} \left( \Sigma_{i-1, i} + \e_{i-1, i} \right) + \Sigma_{i, i} \left( \Sigma_{i, i} + \e_{i, i} \right).	
	\end{align*}
	Using the model properties, we can lower-bound the above expression by,
	\begin{align*}
		& \Sigma_{i, i}^2 - \gamma \Sigma_{i, i}^2 - \tau \gamma \Sigma_{i, i}^2 -
		\tau \Diag \Sigma_{i, i}^2 - \Diag \gamma \Sigma_{i, i}^2 - \Diag \Sigma_{i, i}^2 - \tau \gamma \Diag \Sigma_{i, i}^2 - \tau \alpha^2 \Sigma_{i, i}^2.
	\end{align*}
	Since $\alpha \leq \frac{1}{5}$, this can be lower-bounded by,
	\begin{equation}
		\label{eq:IndDen}	
		\Abs{1- (\tau + 1) \Diag - \tau \Diag^2 -2 (1+ \tau) \gamma} \Sigma^2_{i, i}.
	\end{equation}
	
	First, we show that $1- (\tau + 1) \Diag - \tau \Diag^2 -2 (1+ \tau) \gamma > 0$. Recall that $\tau= 1 + 5n \gamma$. Re-arranging we want to show that $\tau < \frac{1- 2 \gamma - \alpha}{\alpha^2 + \alpha + 2 \gamma}$ holds. Note that $\tau = 1 + 5n \gamma \leq 2$ when $\gamma \leq \frac{1}{\GAM}$ and $n \geq 2$. Consider $\frac{1- 2 \gamma - \alpha}{\alpha^2 + \alpha + 2 \gamma}$. This is at least $\frac{1 - \tfrac{1}{2^5} - \tfrac{1}{5}}{\tfrac{1}{25} + \tfrac{1}{5} + \tfrac{1}{2^5}}$ when $\gamma \leq \frac{1}{\GAM}, n \geq 2$ and $\alpha \leq \frac{1}{5}$. The RHS evaluates to $2.83$ which is larger than $2$ and thus larger than $\tau$.
	
	Thus, to complete the equation, combing Eq.~\eqref{eq:IndNum}, \eqref{eq:IndDen},  we have to show that,
	 \[
	 	\frac{\frac{\gamma}{1-\gamma}  \left( \beta_c \Diag^2 + (\beta_c+ (3+3\tau)) \Diag + (\tau+1) \right)}{1- (\tau + 1) \Diag - \tau \Diag^2 - 2 (1+ \tau) \gamma} \leq \beta_c \cdot \frac{\gamma}{1-\gamma}.
	 \]
	Re-arranging, $\frac{\left( \beta_c \Diag^2 + (\beta_c+ (3+3\tau)) \Diag + (\tau+1) \right)}{1- (\tau + 1) \Diag - \tau \Diag^2 - 2 (1+ \tau) \gamma} \leq \beta_c$ if and only if,
	 \[
	 	\beta_c \geq \frac{[(3 + 3\tau) \Diag + (\tau + 1)]}{1- (\tau + 2) \Diag - (\tau + 1) \Diag^2 - 2 (1+ \tau) \gamma}.
	 \]
	 Thus, for $\beta_c > 0$ we want,
	  \begin{equation}
	 	\label{eq:claimRemark}	
	 	 \left[ 1- (\tau + 2) \Diag - (\tau + 1) \Diag^2 - 2 (1+ \tau) \gamma \right] > 0.
	 \end{equation}
	  Rearranging Equation~\eqref{eq:claimRemark} we want  $\tau < \frac{1 - 2 \alpha - \alpha^2 - 2 \gamma}{\alpha + \alpha^2 + 2 \gamma}$. When $\alpha \leq \frac{1}{5}, \gamma \leq \frac{1}{\GAM}, n \geq 2$, the RHS is at least $\frac{1-\tfrac{2}{5}-\tfrac{1}{25}-\tfrac{1}{32}}{\tfrac{1}{5} + \tfrac{1}{25} + \tfrac{1}{32}} \geq 1.94$. On the other hand, $\tau = 1 + 5 n \gamma \leq 1+ \frac{5}{n^5} \leq 1.16$. Thus Eq.~\eqref{eq:claimRemark} holds and we have that $\beta_c$ is a constant greater than $0$. Note that $\beta_c \leq O(1)$ where the maximum is achieved when $\Diag = \frac{1}{\DEL}$.
	
	This completes the inductive step. Therefore, from induction the theorem holds for all $i \in [n-1]$.
\end{proof}

\subsection{Proof of Theorem~\ref{thm:main2}}
	We are now ready to prove the main Theorem~\ref{thm:main2}. From Lemma~\ref{lem:main} and the model assumptions we have $\Rel{\mathbf{\Lambda}}{\tilde{\mathbf{\Lambda}}} = \frac{\Abs{\mui{i}{i+1} - \tilde{\Lambda}_{i, i+1}}}{\Abs{\Lambda_{i, i+1}}} \leq \beta_c \cdot \frac{ \gamma}{1-\gamma} \cdot \lambda$. From perturbation properties in the Model~\ref{mod:localDominance}, and the definition of $i^*, j^*$, we have $\Rel{\mathbf{\Sigma}}{ \tilde{\mathbf{\Sigma}}} = \frac{\gamma \Abs{\Sigma_{i^*, j^*}}}{\Abs{\Sigma_{i^*, j^*}}} = \gamma$.
	Therefore, 
	\[
		\frac{\Rel{\mathbf{\Lambda}}{\tilde{\mathbf{\Lambda}}}}{\Rel{\mathbf{\Sigma}}{ \tilde{\mathbf{\Sigma}}}} \leq \frac{\lambda \cdot \beta_c}{1-\gamma}.
	\] 
	This implies that,
	 \[
	 		\lim_{\gamma \rightarrow 0^+} \frac{\Rel{\mathbf{\Lambda}}{\tilde{\mathbf{\Lambda}}}}{\Rel{\mathbf{\Sigma}}{ \tilde{\mathbf{\Sigma}}}} \leq \lambda \beta_c \leq O(\lambda) \leq O(n^{2}).
	 \]
	The second last inequality used the fact that $\beta_c \leq O(1)$ and the last inequality used the fact that $\frac{1}{\lambda} \geq \Omega(n^{-2})$.

\section{Instability Example}

In this section, we show that there exist simple \emph{bow-free path} examples where the condition number can be arbitrarily large.  We consider a point on the measure $0$ set of unidentifiable parameters and apply a small perturbation (of magnitude $\epsilon$).  This instance in the parameter space is identifiable (by using \cite{FDD2012Annals}). We then apply a perturbation (of magnitude $\gamma$) to this instance and compute the condition number. By making $\epsilon$ arbitrarily close to $0$, we obtain as large a condition number as desired. Any point on the singular variety could be used for this purpose and the proof of Theorem~\ref{thm:mainIns} provides a concrete instance. The example we construct is as follows. 
Consider a path of four vertices. Fix a small value $\epsilon$. Define the matrices $\vec{\Omega}$ and $\vec{\Lambda}$ as follows.
\[ \vec{\Omega} = 
\begin{bmatrix}
1 & 0 & 1/2 & 1/2 \\
0 & 1 & 0 & 1/2 \\  
1/2 & 0 & 1 + \epsilon & 0 \\
1/2 & 1/2 & 0 & 1  
\end{bmatrix}  \succeq 0 \]

and $\Lambda_{1,2} = \sqrt{2},  \Lambda_{2,3} = - \sqrt{2}, \Lambda_{3, 4} = \frac{1}{2}$.

Thus, we have the following.
\[ (\mathds{1}-\vec{\Lambda})^{-1} = 
\begin{bmatrix}
1 & \sqrt{2} & -2 & -1 \\
0 & 1 & -\sqrt{2} & -1/\sqrt{2} \\  
0 & 0 & 1 & 1/2 \\
0 & 0 & 0 & 1  
\end{bmatrix}\]  
Multiplying with $\vec{\Omega}$ we get $ (\mathds{1}-\vec{\Lambda})^{-T} \vec{\Omega}$ is,
\[\begin{bmatrix}
1 & 0 & 1/2 & 1/2 \\
\sqrt{2} & 1 & 1/\sqrt{2} & 1/\sqrt{2} + 1/2 \\  
-3/2 & -\sqrt{2} &  \epsilon & -1-1/\sqrt{2} \\
-1/4 & -1/\sqrt{2}+1/2 & \epsilon/2 & 1/2(1-1/\sqrt{2}) 
\end{bmatrix}\]  
Therefore, the resulting matrix $\vec{\Sigma}$, obtained by $(\mathds{1}-\vec{\Lambda})^{-T} \vec{\Omega} (\mathds{1}-\vec{\Lambda})^{-1}$ is,
\[ \textstyle
\begin{bmatrix}
1 & \sqrt{2} & -\tfrac{3}{2} & -\tfrac{1}{4} \\
\sqrt{2} & 3 & -\tfrac{5}{\sqrt{2}} & \tfrac{1}{2} - \tfrac{3}{2\sqrt{2}} \\  
-\tfrac{3}{2} & -\tfrac{5}{\sqrt{2}} & 5+\epsilon & \tfrac{3}{2} - \tfrac{1}{\sqrt{2}} + \tfrac{\epsilon}{2} \\
-\tfrac{1}{4} & \tfrac{1}{2} - \tfrac{3}{2 \sqrt{2}} & \tfrac{3}{2} - \tfrac{1}{\sqrt{2}} + \tfrac{\epsilon}{2} & \tfrac{5}{4}-\tfrac{1}{\sqrt{2}}  + \tfrac{\epsilon}{4}
\end{bmatrix} \]

Perturb every entry of $\vec{\Sigma}$ additively by $\gamma$ to obtain $\tilde{\vec{\Sigma}}$. This will ensure that $\Rel{\vec{\Sigma}}{\tilde{\vec{\Sigma}}} = 4\gamma$ since all entries in $\vec{\Sigma}$ are at least $1/4$. Now we show that in the reconstruction of $\tilde{\vec{\Lambda}}$ from $\vec{\Sigma}_{\gamma}$, the entry $\tilde{\vec{\Lambda}}_{3, 4} = 1$. This implies that the condition number $\kappa(\vec{\Lambda}, \vec{\Sigma}) = O\left( \frac{1}{\gamma} \right)$ and since $\gamma$ can be made arbitrarily close to $0$, this implies that the condition number is unbounded.

The denominator in the expression for $\tilde{\Lambda}_{3,4}$ in \eqref{eqn:Recurrence} is,  $-\Lambda_{2,3} \tilde{\Sigma}_{2,3} + \tilde{\Sigma}_{3,3}  =  \epsilon + \left( 1+\sqrt{2} \right) \gamma.$

Likewise the numerator in the expression for $\tilde{\Lambda}_{3, 4}$ evaluates to, $-\Lambda_{2,3} \tilde{\Sigma}_{2,4} + \tilde{\Sigma}_{3,4}  =  \epsilon/2 + \left( 1+\sqrt{2} \right) \gamma.$

Therefore when $\epsilon \rightarrow 0$ we have $\tilde{\Lambda}_{3, 4} \rightarrow 1$ and hence $\Rel{\vec{\Lambda}}{\vec{\tilde{\Lambda}}} = O(1) \neq 0$.

\section{When do random instances satisfy model assumptions?}
	\label{sec:random}
In this section, we prove theoretically that randomly generated $\mathbf{\Lambda}$ and $\mathbf{\Omega}$ satisfy the Model~\ref{mod:localDominance} for a natural generative process, albeit with slightly weaker constants. 

\xhdr{Generative model for $\LSEM$ instances.} 
We consider the following generative model for the $\LSEM$ instances. $\mathbf{\Lambda} \in \mathbb{R}^{n \times n}$ is generated by choosing each non-zero entry to be a sample from the uniform $\mathcal{U}[-h, h]$ distribution. $\mathbf{\Omega} \in \mathbb{R}^{n \times n}$ is chosen by first sampling $n$-dimensional vectors $\mathbf{v}_1, \mathbf{v}_2, \ldots, \mathbf{v}_n \in \mathbb{R}^d$ from a $d$-dimensional unit sphere, such that $\mathbf{v}_i$ is a uniform sample in the subspace perpendicular to $\mathbf{v}_{i-1}$ for every $i$ (\emph{i.e.,} $\langle \mathbf{v}_i, \mathbf{v}_{i-1} \rangle =0$) and then letting $\Omega_{i, j} = \langle \vec{v}_i, \vec{v}_j \rangle$. Note, in the scenario when $\vec{\Omega}$ need not follow a specified \emph{zero-pattern} then first generating vectors $\vec{v}_1, \vec{v}_2, \ldots, \vec{v}_n$ independently and uniformly from a $d$-dimensional unit sphere and then setting $\Omega_{i, j} = \langle \vec{v}_i, \vec{v}_j \rangle$ gives an uniform distribution over PSD matrices. Hence, the above generative procedure is a natural extension to randomly generating PSD matrices with a specified zero-patterns.

\subsection{Regime when the generative model satisfy the Model~\ref{mod:localDominance} properties} 
	
	First, we prove that the random generative process satisfies the bound $\Diag \leq \frac{1}{\DEL}$ in Theorem~\ref{thm:converge} below.

	\begin{theorem}
	\label{thm:converge}
	Consider $\mathbf{\Lambda}$ and $\mathbf{\Omega}$ generated using the above random model with $0 \leq h < \frac{1}{\sqrt{2}}$. Then there exists a $0 \leq \zeta_h \leq h + o(1)$ and $\Omega(n^{-2}) \leq \chi_h$, such that for every sufficiently large $n$, there exists $0 \leq \delta_{1, n} < \frac{1}{2}$ and $0 \leq \delta_{2, n} < \frac{1}{2}$ such that,
	\begin{equation}
		\label{eq:negModel}
		\mathbb{P}\left[ \forall~i~\Abs{\Sigma_{i-1, i}}, \Abs{\Sigma_{i, i+1}}, \Abs{\Sigma_{i-1, i+1}} \leq \zeta_h \Sigma_{i,i} \right] \geq 1-\delta_{1, n}.
	\end{equation}
	\begin{equation}
		\label{eq:negModellower}
		\mathbb{P}\left[ \forall~i~\Abs{\Lambda_{i, i-1}} \geq \chi_h \right] \geq 1-\delta_{2, n}.
	\end{equation}
Moreover, we have that $\lim_{n \rightarrow \infty} \delta_{1, n} = 0$ and $\lim_{n \rightarrow \infty} \delta_{2, n} = 0$. Thus, the above statements hold with high-probability.
\end{theorem}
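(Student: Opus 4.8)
The plan is to split $\mathbf{\Sigma}$ into a ``noiseless'' part determined by the path weights $b_i := \Lambda_{i,i+1}$ alone and a small random part coming from the non-adjacent inner products $\langle\mathbf{v}_\ell,\mathbf{v}_p\rangle$, and to control each separately. Write $\mathbf{\Omega} = \mathbf{I} + \mathbf{E}$; since $\|\mathbf{v}_i\|=1$ and consecutive vectors are orthogonal, $\mathbf{E}$ has zero diagonal, zero first off-diagonals, and $E_{\ell,p}=\langle\mathbf{v}_\ell,\mathbf{v}_p\rangle$ for $|\ell-p|\ge 2$. Using $(\mathbf{I}-\mathbf{\Lambda})^{-1}\mathbf{e}_k=\sum_{\ell\le k}\Lambda_{\ell,k}\mathbf{e}_\ell$ (Neumann series together with \eqref{eq:Lambdaij}), Equation~\eqref{eq:sigmaEq} becomes $\mathbf{\Sigma}=\mathbf{M}+\mathbf{N}$ with $M_{k,m}:=\sum_{\ell\le\min(k,m)}\Lambda_{\ell,k}\Lambda_{\ell,m}$ and $N_{k,m}:=\sum_{\ell,p}\Lambda_{\ell,k}\Lambda_{p,m}E_{\ell,p}$. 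Here $\mathbf{M}$ is deterministic given the $b_i$, and $\mathbf{N}$ will be shown to be uniformly $o(1)$. Throughout we take $d=d(n)$ growing with $n$ (say $d\ge n$), so that the noise terms below vanish and the union bounds succeed.

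For $\mathbf{M}$, the identity $\Lambda_{\ell,k}=b_{k-1}\Lambda_{\ell,k-1}$ for $\ell<k$ (and $\Lambda_{k,k}=1$) gives the recursions
\begin{align*}
 M_{k,k} &= 1 + b_{k-1}^2 M_{k-1,k-1}, & M_{k-1,k} &= b_{k-1}M_{k-1,k-1},\\
 M_{k,k+1} &= b_k M_{k,k}, & M_{k-1,k+1} &= b_k M_{k-1,k},
\end{align*}
with $M_{1,1}=1$, so that $1\le M_{k,k}\le \tfrac1{1-h^2}$ for all $k$. The last two identities give $|M_{k,k+1}|/M_{k,k}=|b_k|\le h$ and $|M_{k-1,k+1}|/M_{k,k}\le h\,|M_{k-1,k}|/M_{k,k}$. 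For the crucial ratio $|M_{k-1,k}|/M_{k,k}=\tfrac{|b_{k-1}|M_{k-1,k-1}}{1+b_{k-1}^2M_{k-1,k-1}}$ I would invoke the elementary optimization
\[
 \sup\Bigl\{\tfrac{|b|s}{1+b^2s}\,:\,|b|\le h,\; 1\le s\le \tfrac1{1-h^2}\Bigr\}=h,
\]
which holds \emph{because} $h<1/\sqrt2$: this guarantees $\tfrac1{1-h^2}\le\tfrac1{h^2}$, so for every admissible $s$ the inner maximum over $|b|$ is attained at $|b|=h$, and then monotonicity in $s$ pushes the maximum to the corner $s=\tfrac1{1-h^2}$, where the value is exactly $h$. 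Hence $|M_{k-1,k}|/M_{k,k}\le h$, and therefore $|M_{k-1,k+1}|/M_{k,k}\le h^2\le h$ as well. I expect this corner-optimization to be the delicate step: the easy bounds $|M_{k-1,k}|\le \tfrac{h}{1-h^2}M_{k,k}$ or $|M_{k-1,k+1}|\le \tfrac{h^2}{1-h^2}M_{k,k}$ are too weak to reach the required constant $h+o(1)$, and it is exactly here that $h<1/\sqrt2$ enters.

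For $\mathbf{N}$: conditioned on $\mathbf{v}_1,\dots,\mathbf{v}_{p-1}$, the vector $\mathbf{v}_p$ is uniform on the $(d{-}2)$-sphere of $\mathbf{v}_{p-1}^\perp$, so for a fixed $\ell\le p-2$ we have $|\langle\mathbf{v}_\ell,\mathbf{v}_p\rangle|=|\langle \mathrm{proj}_{\mathbf{v}_{p-1}^\perp}\mathbf{v}_\ell,\ \mathbf{v}_p\rangle|$, a bounded multiple of a coordinate of a random unit vector, hence at most $\rho:=C\sqrt{\log n/d}$ except with probability at most $n^{-3}$ for a suitable constant $C$; a union bound over the at most $n^2$ such pairs gives an event $\mathcal{E}$ with $\mathbb{P}[\mathcal{E}]\ge 1-n^{-1}$ on which $\max_{\ell,p}|E_{\ell,p}|\le\rho$. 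On $\mathcal{E}$, since $\sum_{\ell\le k}|\Lambda_{\ell,k}|\le\sum_{j\ge0}h^j=\tfrac1{1-h}$, we get $|N_{k,m}|\le\rho/(1-h)^2=:\rho'$. Combining with $\mathbf{\Sigma}=\mathbf{M}+\mathbf{N}$ and $M_{k,k}\ge1$, each of the three entries appearing in \eqref{eq:negModel} satisfies
\[
 \frac{|\Sigma_{k-1,k}|}{\Sigma_{k,k}},\ \frac{|\Sigma_{k,k+1}|}{\Sigma_{k,k}},\ \frac{|\Sigma_{k-1,k+1}|}{\Sigma_{k,k}}\ \le\ \frac{hM_{k,k}+\rho'}{M_{k,k}-\rho'}\ \le\ h+\frac{(1+h)\rho'}{1-\rho'}\ \le\ h+4\rho',
\]
so we may take $\zeta_h:=h+4\rho'=h+O(\sqrt{\log n/d})=h+o(1)$ and $\delta_{1,n}:=n^{-1}$, which is $<\tfrac12$ for large $n$ and tends to $0$. (Positive semidefiniteness and symmetry of $\mathbf{\Sigma}$ hold automatically from \eqref{eq:sigmaEq}.)

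Finally, \eqref{eq:negModellower} is immediate from the generative model: the weights $\Lambda_{i,i+1}$ are i.i.d.\ $\mathcal{U}[-h,h]$, so for $\chi_h\le h$ we have $\mathbb{P}[|\Lambda_{i,i+1}|<\chi_h]=\chi_h/h$; taking $\chi_h:=n^{-2}$ (which is $\Omega(n^{-2})$ and $\le h$ once $n$ is large) and union bounding over the $n-1$ weights gives $\delta_{2,n}:=(n-1)n^{-2}/h\le 1/(nh)$, which is $<\tfrac12$ for large $n$ and tends to $0$. Intersecting $\mathcal{E}$ with this event by one more union bound completes the proof. The main obstacle, as noted, is the corner-optimization that squeezes the constant in \eqref{eq:negModel} all the way down to $h+o(1)$; the concentration argument for $\mathbf{N}$ is standard, modulo the (necessary) assumption that $d$ grows with $n$.
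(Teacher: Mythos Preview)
Your proof is correct and follows the same overall architecture as the paper: decompose $\mathbf{\Sigma}$ into the contribution from $\mathbf{\Omega}=\mathbf{I}$ (your $\mathbf{M}$; the paper's ``first summand'' with $P_i=\sum_{k}\Lambda_{k,i}^2$) plus the contribution from the off-diagonal of $\mathbf{\Omega}$ (your $\mathbf{N}$; the paper's ``second summand''), then optimize the ratio on the deterministic part and show the random part is $o(1)$ via concentration for inner products of random unit vectors. Your corner optimization $\sup\{|b|s/(1+b^2s):|b|\le h,\,s\le 1/(1-h^2)\}=h$ is exactly the step the paper carries out by differentiating in $P_{i-1}$ and then in $|\Lambda_{i-1,i}|$, and is where the assumption $h<1/\sqrt{2}$ is used in both arguments.

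The one genuine difference is in how the noise term is controlled. The paper invokes an additional probabilistic lemma (Corollary~4.5) to bound $|\Lambda_{k,j}|$ when $j-k\ge d^{0.1}$, and then splits the double sum in Lemma~4.6 into a ``short-range'' piece of size $d^{0.2}\cdot C/d^{0.25}$ and an exponentially small ``long-range'' piece. You bypass this entirely with the deterministic geometric bound $\sum_{\ell\le k}|\Lambda_{\ell,k}|\le 1/(1-h)$, which immediately gives $|N_{k,m}|\le \rho/(1-h)^2$; this is cleaner and avoids the auxiliary lemma on products of uniform variables. The price is a mild one: you need $d$ to grow fast enough that $\sqrt{\log n/d}=o(1)$ (you take $d\ge n$), whereas the paper's more delicate splitting works already for $d\ge C\log^{10} n$. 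Since the theorem statement does not pin down the growth of $d$, both choices are acceptable.
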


\begin{remark}
	Note that in this theorem, the constant $\zeta_h$ for $\Diag$ is at most $h$. When $0 \leq h < \frac{1}{\sqrt{2}}$, this is at most $\frac{1}{\sqrt{2}}$ where the maximum is achieved when $h=\frac{1}{\sqrt{2}}$. Moreover, when $h < 0.2$, we have $h \leq \frac{1}{\DEL}$. Therefore, this satisfies the \emph{exact} requirement in data properties of Model~\ref{mod:localDominance} when $h < 0.2$ and in the regime $0.2 \leq h < \frac{1}{\sqrt{2}}$, it satisfies the property for a slightly larger value of $\Diag$, namely, $\Diag \leq \frac{1}{\sqrt{2}}$. Nonetheless in Section~\ref{sec:experiments} we show experimentally that even in this regime the instance is well-conditioned.
\end{remark}

\xhdr{Notation.} Throughout this section, we use the notation $F(d) \defeq \exp \left[ - c_1 d^{c_2} \right]$ and $\mathcal{I}_d \defeq \left[ -\frac{C_{\text{conc}}}{d^{0.25}}, \frac{C_{\text{conc}}}{d^{0.25}} \right]$ for some absolute constants $C_{\text{conc}}, c_1, c_2 > 0$.

	\xhdr{Properties of the generated $\vec{\Lambda}$ and $\vec{\Omega}$.} We state a few useful properties of the above generative process in the following lemmas. We defer their proofs to Subsection~\ref{subsec:missing}.
	
	\begin{lemma}
		\label{lem:orthoSphere}
		Let $\mathbf{v}_1, \mathbf{v}_2, \ldots, \mathbf{v}_n \in \mathbb{R}^d$ be $n$ vectors drawn independently from a $d$-dimensional unit sphere, such that $\vec{v}_i$ is drawn uniformly from the subspace perpendicular to $\mathbf{v}_{i-1}$ for every $i$. Then for $i, j \in [n]$ such that $|j-i| \geq 2$ we have with probability at least $1-F(d)$, $\langle \mathbf{v}_i, \mathbf{v}_j \rangle \in \mathcal{I}_d$.
	\end{lemma}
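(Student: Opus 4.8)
The plan is to reduce the inner product $\langle \mathbf{v}_i,\mathbf{v}_j\rangle$ for $|i-j|\ge 2$ to a chain of conditional expectations that telescopes, and then to apply a standard concentration inequality for inner products of (conditionally) uniform vectors on a high-dimensional sphere. First I would observe that by the construction, conditioned on all of $\mathbf{v}_1,\dots,\mathbf{v}_{j-1}$, the vector $\mathbf{v}_j$ is uniform on the unit sphere of the hyperplane $\mathbf{v}_{j-1}^\perp$, which is isometric to $S^{d-2}$. In particular $\mathbf{v}_j$ is conditionally independent of $\mathbf{v}_i$ (which is measurable with respect to $\mathbf{v}_1,\dots,\mathbf{v}_{j-1}$, using $i\le j-2$). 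So I would fix the realization of $\mathbf{v}_i$ and of $\mathbf{v}_{j-1}$, write $w := \mathbf{v}_i - \langle \mathbf{v}_i,\mathbf{v}_{j-1}\rangle \mathbf{v}_{j-1}$ for the projection of $\mathbf{v}_i$ onto $\mathbf{v}_{j-1}^\perp$, and note $\langle \mathbf{v}_i,\mathbf{v}_j\rangle = \langle w,\mathbf{v}_j\rangle$ since $\mathbf{v}_j\perp\mathbf{v}_{j-1}$. Now $\langle w,\mathbf{v}_j\rangle$ is the inner product of a fixed vector of norm $\|w\|\le 1$ with a uniform unit vector in a space of dimension $d-1$.

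The key quantitative step is the concentration of $\langle u, \mathbf{v}\rangle$ where $u$ is a fixed unit vector and $\mathbf{v}$ is uniform on $S^{m-1}$: it is classical (e.g.\ via the fact that $\mathbf{v}$ can be generated as $g/\|g\|$ for a standard Gaussian $g$, or via Lévy's isoperimetric inequality on the sphere) that $\mathbb{P}\big[|\langle u,\mathbf{v}\rangle| \ge t\big] \le 2\exp(-\,c\,m\,t^2)$ for an absolute constant $c>0$. Applying this with $m = d-1$ and $t = C_{\text{conc}}/d^{0.25}$, and using $\|w\|\le 1$ so that $|\langle w,\mathbf{v}_j\rangle| \le |\langle w/\|w\|,\mathbf{v}_j\rangle|$ when $w\ne 0$ (the case $w=0$ being trivial), gives
\[
\mathbb{P}\Big[\, |\langle \mathbf{v}_i,\mathbf{v}_j\rangle| \ge \tfrac{C_{\text{conc}}}{d^{0.25}} \ \Big|\ \mathbf{v}_1,\dots,\mathbf{v}_{j-1}\Big]
\ \le\ 2\exp\!\big(-\,c\,(d-1)\,C_{\text{conc}}^2\,d^{-1/2}\big)
\ \le\ \exp\!\big(-c_1 d^{c_2}\big) = F(d),
\]
for suitable absolute constants $c_1,c_2>0$ (one may take $c_2 = 1/2$ and absorb the factor $2$ and the $(d-1)/d$ loss into $c_1$, for $d$ large). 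Since this conditional bound holds pointwise for every realization of $\mathbf{v}_1,\dots,\mathbf{v}_{j-1}$, taking expectations yields the unconditional bound $\mathbb{P}[\langle\mathbf{v}_i,\mathbf{v}_j\rangle \in \mathcal{I}_d] \ge 1 - F(d)$, which is exactly the claim.

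The main obstacle is essentially bookkeeping rather than a conceptual difficulty: one must be careful that the construction really makes $\mathbf{v}_j$ conditionally uniform on $\mathbf{v}_{j-1}^\perp$ and conditionally independent of $\mathbf{v}_i$ for $i \le j-2$, and that the reduction to a fixed-vector/uniform-vector inner product is valid after projecting out the $\mathbf{v}_{j-1}$ direction (here the orthogonality $\mathbf{v}_j\perp\mathbf{v}_{j-1}$ is what makes the projection harmless). A secondary point of care is matching the exponent: the threshold $d^{-0.25}$ in $\mathcal{I}_d$ is chosen comfortably larger than the natural scale $d^{-1/2}$ of the inner product, so the Gaussian tail $\exp(-c m t^2)$ with $t\asymp d^{-1/4}$ decays like $\exp(-c' \sqrt d\,)$, which is stretched-exponential in $d$ and hence of the required form $F(d) = \exp(-c_1 d^{c_2})$; I would state the sphere concentration inequality as a cited fact and simply plug in.
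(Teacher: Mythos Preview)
Your proposal is correct and in fact somewhat cleaner than the route taken in the paper. The paper does not condition directly on $\mathbf{v}_1,\dots,\mathbf{v}_{j-1}$; instead it realizes the sequence via an auxiliary i.i.d.\ sequence $\tilde{\mathbf{v}}_1,\dots,\tilde{\mathbf{v}}_n$ of uniform unit vectors, defines $\mathbf{v}_i$ by orthogonalizing $\tilde{\mathbf{v}}_i$ against $\mathbf{v}_{i-1}$ and renormalizing, and then bounds $|\langle \mathbf{v}_i,\mathbf{v}_j\rangle|$ through the triangle inequality by $|\langle \tilde{\mathbf{v}}_i,\mathbf{v}_j\rangle| + \|\mathbf{v}_i-\tilde{\mathbf{v}}_i\|$, invoking the sphere concentration lemma twice (once for $\langle \tilde{\mathbf{v}}_i,\mathbf{v}_j\rangle$ and once for $\langle \tilde{\mathbf{v}}_i,\mathbf{v}_{i-1}\rangle$, the latter controlling the renormalization error). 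Your argument replaces this coupling by a single conditioning step: project $\mathbf{v}_i$ onto $\mathbf{v}_{j-1}^\perp$ and apply sphere concentration once in dimension $d-1$. This avoids splitting the constant $C_{\text{conc}}$ into thirds and sidesteps any worry about independence between $\tilde{\mathbf{v}}_i$ and $\mathbf{v}_j$ in the coupling picture. The paper's construction, on the other hand, makes the relationship to i.i.d.\ uniform vectors explicit, which is conceptually useful if one also wants to argue that the $\mathbf{v}_i$'s inherit other properties of i.i.d.\ sphere samples up to $O(d^{-1/4})$ corrections. Both routes land on the same $\exp(-c\sqrt{d})$ tail and hence the same $F(d)$.
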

	
	\begin{corollary}
		\label{cor:Omega}
		With probability at least $1-n^2F(d)$, $\mathbf{\Omega} \in \mathbb{R}^{n \times n}$ is a matrix with $\Omega_{i, i}=1$ for all $i \in [n]$ and $\Omega_{i, j} \in \mathcal{I}_d$ when $|i-j| \geq 2$.
	\end{corollary}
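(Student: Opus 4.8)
The plan is to obtain the corollary as an immediate union bound over Lemma~\ref{lem:orthoSphere}. First I would dispose of the diagonal entries deterministically: since each $\mathbf{v}_i$ is drawn from the $d$-dimensional unit sphere, $\Omega_{i,i} = \langle \mathbf{v}_i, \mathbf{v}_i\rangle = \|\mathbf{v}_i\|^2 = 1$, so no probability is lost on the diagonal. (The same generative rule also forces $\Omega_{i,i-1} = \langle \mathbf{v}_i, \mathbf{v}_{i-1}\rangle = 0$ by the perpendicularity constraint, though this fact is not needed for the statement of the corollary.)

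Next, for each unordered pair $\{i,j\}$ with $|i-j| \geq 2$, Lemma~\ref{lem:orthoSphere} states that $\Omega_{i,j} = \langle \mathbf{v}_i, \mathbf{v}_j\rangle \in \mathcal{I}_d$ fails with probability at most $F(d)$. The number of such pairs is at most $\binom{n}{2} \leq n^2/2 < n^2$. Taking a union bound over all of them, the probability that \emph{some} entry $\Omega_{i,j}$ with $|i-j| \geq 2$ lies outside $\mathcal{I}_d$ is at most $n^2 F(d)$; on the complementary event, which has probability at least $1 - n^2 F(d)$, all the claimed conclusions (diagonal equal to $1$, every far off-diagonal entry in $\mathcal{I}_d$) hold simultaneously.

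The only mild subtlety is that Lemma~\ref{lem:orthoSphere} is phrased for a single fixed pair $(i,j)$ while we apply it to every admissible pair at once; since we only invoke a union bound and never need independence across pairs, this is harmless. The genuinely substantive content lies in Lemma~\ref{lem:orthoSphere} itself — the sub-exponential-type concentration bound $F(d)$ for the inner product of the sequentially near-orthogonally-constrained random unit vectors — which we take as already established. Modulo that lemma, the corollary is pure bookkeeping, so I do not anticipate any real obstacle here.
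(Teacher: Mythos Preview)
Your proposal is correct and matches the paper's own argument: the paper also notes that the diagonal entries are $1$ and then applies a union bound over the at most $n^2$ off-diagonal pairs to which Lemma~\ref{lem:orthoSphere} applies. If anything, your write-up is slightly more careful than the paper's in separating the deterministic diagonal claim from the probabilistic off-diagonal one.
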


	\begin{lemma}
		\label{lem:boundLambda}
		Let $\sigma_h^2 < 1/2$ denote the variance of a $\mathcal{U}[-h, h]$ random variable. For any $i \leq j$ such that $j-i \geq d^{0.1}$ where $d \geq 4^{10} \log^{10}(n)$, we have that,
		\[
				\Pr{ \Abs{\Lambda_{i, j}} \geq \sigma_h^{d^{0.1}/2}} \leq \frac{1}{n^4}
		\]	
	\end{lemma}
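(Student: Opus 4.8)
The plan is to bound $|\Lambda_{i,j}| = \prod_{k=i}^{j-1} |\Lambda_{k,k+1}|$ directly, since by Equation~\eqref{eq:Lambdaij} this is a product of $j-i \geq d^{0.1}$ independent copies of $|U|$ where $U \sim \mathcal{U}[-h,h]$. First I would take logarithms: $\log |\Lambda_{i,j}| = \sum_{k=i}^{j-1} \log|\Lambda_{k,k+1}|$ is a sum of $m := j-i$ i.i.d.\ random variables $Y_k := \log|\Lambda_{k,k+1}|$. The key fact is that $\mathbb{E}[Y_k] = \mathbb{E}[\log|U|]$ is a finite \emph{negative} constant depending only on $h$ (explicitly $\log h - 1$), so the sum has mean $m(\log h - 1)$, which tends to $-\infty$ linearly in $m$. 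The goal is then to show the sum does not exceed roughly $\tfrac{m}{2}\log \sigma_h^2$ (i.e.\ $\log(\sigma_h^{m/2})$ with $m \geq d^{0.1}$) except with probability $\leq n^{-4}$.

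The main step is a concentration/large-deviation bound for $\sum Y_k$. Since $Y_k$ is not bounded below (it has a left tail going to $-\infty$ as $|U| \to 0$), I cannot use Hoeffding directly; but I only need an \emph{upper} tail bound on the sum, and $Y_k$ \emph{is} bounded above by $\log h < 0$. So I would apply a one-sided Chernoff/Bernstein argument: control the moment generating function $\mathbb{E}[e^{t Y_k}] = \mathbb{E}[|U|^t]$ for small $t > 0$, which is finite and behaves like $h^t/(t+1)$, and optimize. This gives an exponential-in-$m$ deviation bound of the form $\Pr[\sum Y_k \geq m(\mathbb{E}[Y_1] + \varepsilon)] \leq e^{-c(\varepsilon) m}$. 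Choosing the threshold to be (say) the midpoint between $\mathbb{E}[\log|U|] = \log h - 1$ and $\tfrac12 \log \sigma_h^2 = \tfrac12\log(h^2/3)$ — and here one checks the elementary inequality $\log h - 1 < \tfrac12\log(h^2/3)$, equivalently $1 > \tfrac12\log 3$, which holds — we get a genuine gap $\varepsilon = \varepsilon(h) > 0$. Then with $m \geq d^{0.1}$ and $d \geq 4^{10}\log^{10} n$, we have $m \geq (4^{10}\log^{10} n)^{0.1} = 4\log n$, so $e^{-c(\varepsilon) m} \leq e^{-4c(\varepsilon)\log n} = n^{-4c(\varepsilon)}$, and absorbing constants (or slightly strengthening the threshold in $d$) yields the claimed $n^{-4}$.

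The hard part — or at least the part requiring care — will be handling the left tail of $\log|U|$ cleanly: one must make sure that the whole argument genuinely only uses the upper tail of $\sum Y_k$, so that the (heavy) lower tail of individual $Y_k$ never enters, and that $\mathbb{E}[|U|^t]$ is controlled uniformly for the relevant range of $t$. A secondary technical point is bookkeeping the constants so that the exponent comes out to at least $4$ (the statement is stated with a fixed $n^{-4}$, so one just needs $d$ large enough, which the hypothesis $d \geq 4^{10}\log^{10} n$ generously provides — in fact this hypothesis is far stronger than needed, suggesting the authors prove it via a cruder bound, perhaps even a simple union bound over the $m$ factors each being bounded away from $\pm 1$, or a second-moment estimate). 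An alternative cruder route, which may be what the paper does: note $\Pr[|\Lambda_{k,k+1}| \geq \sigma_h^{1/2}]$ is a constant $p_h < 1$ bounded away from $1$ (since $\sigma_h^{1/2} = (h^2/3)^{1/4} < h$ for $h$ in range... actually one checks $(h^2/3)^{1/4}$ vs $h$), so by independence $\Pr[|\Lambda_{i,j}| \geq \sigma_h^{m/2}] \leq \Pr[\text{all } m \text{ factors} \geq \sigma_h^{1/2}] = p_h^m \leq p_h^{d^{0.1}} \leq p_h^{4\log n}$, which is $\leq n^{-4}$ once $p_h \leq e^{-1}$; if $p_h > e^{-1}$ one tightens the per-factor threshold. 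Either way, independence of the $\Lambda_{k,k+1}$ and the exponential decay in $m = j - i$ are the whole story, and I would present the union-bound version first for its simplicity, then remark that a Chernoff bound gives the sharper dependence if desired.
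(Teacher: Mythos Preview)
Your Chernoff/large-deviation approach on $\sum_k \log|\Lambda_{k,k+1}|$ is correct and would work, but the paper takes a far simpler route that you mention only parenthetically at the end: a direct second-moment (Chebyshev) bound. Since the $\Lambda_{k,k+1}$ are independent with mean $0$ and second moment $\sigma_h^2$, one has $\operatorname{Var}(\Lambda_{i,j}) = \mathbb{E}[\Lambda_{i,j}^2] = \sigma_h^{2(j-i)} \leq \sigma_h^{2d^{0.1}}$, and Chebyshev with threshold $a = \sigma_h^{d^{0.1}/2}$ gives
\[
\Pr\bigl[|\Lambda_{i,j}| \geq \sigma_h^{d^{0.1}/2}\bigr] \;\leq\; \frac{\sigma_h^{2d^{0.1}}}{\sigma_h^{d^{0.1}}} \;=\; \sigma_h^{d^{0.1}} \;\leq\; \sigma_h^{4\log n},
\]
which is at most $n^{-4}$ once $\sigma_h$ is bounded below a fixed constant (the paper uses $\sigma_h < 1/2$). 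Your log-sum argument buys a sharper exponent and works for any $\sigma_h < 1$, but none of that extra strength is needed here; the multiplicativity of variance under independence is exactly what makes the product structure trivial without passing to logarithms.

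One genuine slip in your ``alternative cruder route'': the containment goes the wrong way. The event $\{\text{every factor } |\Lambda_{k,k+1}| \geq \sigma_h^{1/2}\}$ is a \emph{subset} of $\{|\Lambda_{i,j}| \geq \sigma_h^{m/2}\}$, not a superset, so you obtain the reverse inequality $\Pr[\text{all factors large}] \leq \Pr[\text{product large}]$, which is useless. A factor-by-factor upper bound on the product is not straightforward because a few small factors can be compensated by many moderately large ones; this is precisely why a moment computation (which multiplies cleanly) is the right elementary tool, rather than a per-factor union or intersection bound.
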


	\begin{corollary}
		\label{cor:Lambda}
		With probability at least $1-\frac{1}{n^2}$, we have that for every pair $(i, j)$ such that $j-i \geq d^{0.1}$ the inequality $\Abs{\Lambda_{i, j}} \leq \sigma_h^{d^{0.1}/2}$ holds. Moreover, for the remaining pairs $(i, j)$ we have the trivial upper-bound of $\Abs{\Lambda_{i, j}} \leq 1$.
	\end{corollary}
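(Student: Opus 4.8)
The plan is to obtain both parts of the corollary from results already in hand: the first claim follows from a union bound applied to Lemma~\ref{lem:boundLambda}, and the second is immediate from the definition~\eqref{eq:Lambdaij} of $\Lambda_{i,j}$.

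For the first part I would begin by bounding the number of relevant pairs: there are at most $\binom{n}{2} < n^2$ pairs $(i,j)$ with $1 \le i \le j \le n$ and $j - i \ge d^{0.1}$. Each of these pairs meets the hypotheses of Lemma~\ref{lem:boundLambda} --- the standing assumption $d \ge 4^{10}\log^{10}(n)$ is in force and $j - i \ge d^{0.1}$ --- so for each of them $\Pr{\Abs{\Lambda_{i,j}} \ge \sigma_h^{d^{0.1}/2}} \le n^{-4}$. A union bound over these fewer than $n^2$ pairs then shows that the probability that at least one pair with $j - i \ge d^{0.1}$ violates $\Abs{\Lambda_{i,j}} \le \sigma_h^{d^{0.1}/2}$ is at most $n^2 \cdot n^{-4} = n^{-2}$; equivalently, with probability at least $1 - n^{-2}$ the bound $\Abs{\Lambda_{i,j}} \le \sigma_h^{d^{0.1}/2}$ holds simultaneously for all such pairs.

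For the second part I would use that, by~\eqref{eq:Lambdaij}, $\Lambda_{i,j} = \prod_{k=i}^{j-1}\Lambda_{k,k+1}$ whenever $i < j$ (and $\Lambda_{i,j} = 1$ when $i \ge j$), together with the fact that each factor $\Lambda_{k,k+1}$ is a sample from $\mathcal{U}[-h,h]$ with $h < \tfrac{1}{\sqrt 2} < 1$, so $\Abs{\Lambda_{k,k+1}} \le h < 1$ deterministically. Hence $\Abs{\Lambda_{i,j}} = \prod_{k=i}^{j-1}\Abs{\Lambda_{k,k+1}} \le h^{j-i} \le 1$ for every pair, with no probabilistic argument needed; in particular this covers the ``remaining'' pairs with $j - i < d^{0.1}$.

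There is no real obstacle here. The only point worth a moment's care is the accounting in the union bound: the $n^{-4}$ per-pair slack supplied by Lemma~\ref{lem:boundLambda} is exactly what is needed so that, after summing over the $O(n^2)$ pairs, the total failure probability is still $n^{-2} = o(1)$, which is the form in which the corollary is consumed when verifying the conditions of Model~\ref{mod:localDominance}.
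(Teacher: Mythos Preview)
Your proposal is correct and follows exactly the paper's approach: the paper states only that ``the corollary follows from Lemma~\ref{lem:boundLambda} and using an union bound over $O(n^2)$ pairs of $i, j$,'' which is precisely your first part, and your second part just makes explicit the trivial bound $\Abs{\Lambda_{i,j}} \le h^{j-i} \le 1$ that the paper leaves unstated.
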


	The corollary follows from Lemma~\ref{lem:boundLambda} and using an union bound over $O(n^2)$ pairs of $i, j$. 

\begin{proof}[Proof of Theorem~\ref{thm:converge}]
	Using Equation~\eqref{eq:sigmaEq}, we have that for any $i, j$ the term $\Sigma_{i, j}$ can be written as,
	\begin{equation}
		\label{eq:SigmaExpansion}
		\Sigma_{i, j} = \sum_{k=1}^i \sum_{k'=1}^j \Lambda_{k, i} \Lambda_{k', j} \Omega_{k, k'}.	
	\end{equation}
	More specifically using the Taylor series expansion we get,
	\begin{equation}
		\label{eq:pathek}
		\paren{\mathbf{I} - \mathbf{\Lambda}}^{-1} \mathbf{e}_k = \paren{\mathbf{I} + \mathbf{\Lambda} + \ldots + \mathbf{\Lambda}^{n-1}} \mathbf{e}_k 
		= \sum_{\ell = 1}^k \paren{\Pi_{j = \ell}^{k-1} \Lambda_{j,j+1}} \mathbf{e}_\ell
		= \sum_{\ell = 1}^k \Lambda_{\ell ,k} \mathbf{e}_\ell \mcom 
\end{equation}
where $\Lambda_{\ell, k}$ is as defined in Equation~\eqref{eq:Lambdaij}.

Thus, $\Sigma_{i, j} = \left( \sum_{\ell = 1}^i \Lambda_{\ell , i} \mathbf{e}_\ell^T \right)	 \vec{\Omega} \left( \sum_{\ell' = 1}^j \Lambda_{\ell' , j} \mathbf{e}_{\ell'} \right)$ which evaluates to the expression in Equation~\eqref{eq:SigmaExpansion}.

	\xhdr{Conditioning on high-probability events.} We will condition on the event that $\vec{\Omega}$ satisfies the properties in Corollary~\ref{cor:Omega} and $\vec{\Lambda}$ satisfies the properties in Corollary~\ref{cor:Lambda}. Taking a union bound, these events happen with probability at least $1-n^{2}F(d)-n^{-2} = 1-\frac{1}{\poly (n) }$. Thus, in what follows, the matrices $\vec{\Omega}$ and $\vec{\Lambda}$ are deterministic and satisfies properties in Corollary~\ref{cor:Omega} and \ref{cor:Lambda} respectively. We call this the \emph{good} event.
	
	As long as we have the good event, from Equation~\eqref{eq:SigmaExpansion} we have the following.
	\begin{equation}
		\label{eq:i1i}
		\Abs{\Sigma_{i-1, i}} \leq \sum_{k=1}^{i-1} \Abs{\Lambda_{k, i-1} \Lambda_{k, i}} + \sum_{k =1}^{i-1} \sum_{k' \neq k: k'=1}^{i} C_{\text{conc}}\frac{\Abs{\Lambda_{k, i-1} \Lambda_{k', i}}}{d^{0.25}},
	\end{equation}
	
	\begin{equation}
		\label{eq:ii1}
		\Abs{\Sigma_{i, i+1}} \leq \sum_{k=1}^{i} \Abs{\Lambda_{k, i} \Lambda_{k, i+1}} + \sum_{k=1}^{i} \sum_{k \neq k':k'=1}^{i+1} C_{\text{conc}} \frac{\Abs{\Lambda_{k, i} \Lambda_{k', i+1}}}{d^{0.25}},
	\end{equation}
	
	\begin{equation}
		\label{eq:i1i1}
		\Abs{\Sigma_{i-1, i+1}} \leq \sum_{k=1}^{i-1} \Abs{\Lambda_{k, i-1} \Lambda_{k, i+1}} + \sum_{k=1}^{i-1} \sum_{k \neq k': k'=1}^{i+1} C_{\text{conc}} \frac{\Abs{\Lambda_{k, i-1} \Lambda_{k', i+1}}}{d^{0.25}},
	\end{equation}
	
	\begin{equation}
		\label{eq:ii}
		\Sigma_{i, i} \geq \sum_{k=1}^{i} \Lambda^2_{k, i} - \sum_{k=1}^{i} \sum_{k \neq k': k'=1}^{i} C_{\text{conc}} \frac{\Abs{\Lambda_{k, i} \Lambda_{k', i}}}{d^{0.25}}.
	\end{equation}
	More specifically, we obtain the above equations by starting with Equation~\eqref{eq:SigmaExpansion} and grouping the terms where $k=k'$ to obtain the first summand and grouping the remaining terms to obtain the second summand. Moreover, using Corollary~\ref{cor:Omega} and the good event, in the first summand we have $\Omega_{k, k} =1$ and in the second summand we have $-\frac{C_{\text{conc}}}{d^{0.25}} \leq \Omega_{k, k'} \leq \frac{C_{\text{conc}}}{d^{0.25}}$. 
	
	We first bound the terms in the second summand of Equations~\eqref{eq:i1i}, \eqref{eq:ii1}, \eqref{eq:i1i1}, \eqref{eq:ii}. Define $\mG(n) := \tfrac{C_{\text{conc}}}{d^{0.05}} + \tfrac{C_{\text{conc}}}{n^2 \log^{10/4} n}$. Then, we have the following lemma.
	
	\begin{lemma}
		\label{lem:smallTermsBound}
		As long as the good event holds, we have the following for every $j, j' \in [n]$.
		\begin{equation}
			\label{eq:MainSmallTerm}
			\sum_{k =1}^{j'} \sum_{k' \neq k: k'=1}^{j} C_{\text{conc}}\frac{\Abs{\Lambda_{k, j'} \Lambda_{k', j}}}{d^{0.25}} \leq \mG(n).
		\end{equation}
		Thus, as a corollary we get the following for every $i \in [n]$ by substituting the appropriate values for $j, j'$.
		\begin{align*}
				\sum_{k =1}^{i-1} \sum_{k' \neq k: k'=1}^{i} C_{\text{conc}}\frac{\Abs{\Lambda_{k, i-1} \Lambda_{k', i}}}{d^{0.25}} \leq & \mG(n), \\
				\sum_{k=1}^{i} \sum_{k \neq k':k'=1}^{i+1} C_{\text{conc}} \frac{\Abs{\Lambda_{k, i} \Lambda_{k', i+1}}}{d^{0.25}} \leq & \mG(n), \\
				\sum_{k=1}^{i-1} \sum_{k \neq k': k'=1}^{i+1} C_{\text{conc}} \frac{\Abs{\Lambda_{k, i-1} \Lambda_{k', i+1}}}{d^{0.25}} \leq & \mG(n), \\
				 \sum_{k=1}^{i} \sum_{k \neq k': k'=1}^{i} C_{\text{conc}} \frac{\Abs{\Lambda_{k, i} \Lambda_{k', i}}}{d^{0.25}} \leq & \mG(n).
		\end{align*}
	\end{lemma}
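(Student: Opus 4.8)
To prove Lemma~\ref{lem:smallTermsBound}, I would establish the single inequality~\eqref{eq:MainSmallTerm}, since the four displayed corollaries are merely instantiations with $(j',j) \in \{(i-1,i),(i,i+1),(i-1,i+1),(i,i)\}$. The key observation is that the double sum factors: writing $S_m := \sum_{k=1}^{m} \Abs{\Lambda_{k,m}}$, we have
\[
\sum_{k=1}^{j'} \sum_{k' \neq k, k'=1}^{j} \Abs{\Lambda_{k,j'}}\Abs{\Lambda_{k',j}} \leq \Paren{\sum_{k=1}^{j'} \Abs{\Lambda_{k,j'}}}\Paren{\sum_{k'=1}^{j} \Abs{\Lambda_{k',j}}} = S_{j'} S_j,
\]
so it suffices to bound $S_m$ for each $m \in [n]$ and then multiply by $C_{\text{conc}}/d^{0.25}$.

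\textbf{Bounding $S_m$.} I would split the sum $S_m = \sum_{k=1}^{m}\Abs{\Lambda_{k,m}}$ according to whether $m-k \geq d^{0.1}$ or $m-k < d^{0.1}$. For the ``far'' indices, Corollary~\ref{cor:Lambda} (which holds on the good event) gives $\Abs{\Lambda_{k,m}} \leq \sigma_h^{d^{0.1}/2}$; since $\sigma_h^2 < 1/2$, this is a geometrically small quantity, and summing at most $n$ such terms gives a contribution of at most $n \sigma_h^{d^{0.1}/2}$, which is $o(n^{-2})$ for $d \geq 4^{10}\log^{10}n$ (here one uses that $d^{0.1} \geq 4\log n$, so $\sigma_h^{d^{0.1}/2} \leq 2^{-d^{0.1}/4} \leq 2^{-\log n} = n^{-1}$, and with a bit more room $n^{-3}$ or better). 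For the ``near'' indices, $m-k < d^{0.1}$, there are at most $d^{0.1}$ of them, and each is trivially bounded by $1$ via Corollary~\ref{cor:Lambda}, so this contributes at most $d^{0.1}$. Hence $S_m \leq d^{0.1} + o(n^{-2})$, and so $S_{j'}S_j \leq (d^{0.1} + o(n^{-2}))^2 \leq d^{0.2} + o(1)$. Multiplying by $C_{\text{conc}}/d^{0.25}$ yields a bound of the form $C_{\text{conc}}/d^{0.05} + o(1)$, and tracking the lower-order term carefully (it comes from cross-multiplying the $o(n^{-2})$ tail against the $d^{0.1}$ head) gives exactly the stated $\mathcal{G}(n) = C_{\text{conc}}/d^{0.05} + C_{\text{conc}}/(n^2 \log^{10/4}n)$.

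\textbf{Main obstacle.} The only real subtlety is bookkeeping the second term of $\mathcal{G}(n)$: one must verify that the cross term (number of near indices, $d^{0.1}$, times the far-index tail bound) matches $C_{\text{conc}}/(n^2 \log^{10/4}n)$ rather than merely being ``some $o(1)$'' — this requires plugging in $d \geq 4^{10}\log^{10}n$ into $d^{0.1}\cdot n \cdot \sigma_h^{d^{0.1}/2}$ and checking the exponents line up (note $d^{0.25} \cdot n^2 \log^{10/4} n$ in the denominator after dividing by $d^{0.25}$; since $d \geq 4^{10}\log^{10}n$ gives $d^{0.25} \geq 4^{2.5}\log^{2.5}n$, the $\log^{10/4}n = \log^{2.5}n$ factor is precisely what survives). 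Everything else is a routine geometric-series estimate. I would also remark that the factorization step is where independence or structure of $\mathbf{\Lambda}$ is \emph{not} needed — the bound is purely deterministic given the good event — which is why the lemma is stated conditionally on it.
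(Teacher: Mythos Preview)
Your approach is correct and essentially the same as the paper's. The paper also proves only \eqref{eq:MainSmallTerm} and splits according to whether indices are within $d^{0.1}$ of their target (``near'') or not (``far''), bounding near terms by $1$ and far terms by $\sigma_h^{d^{0.1}/2}$ via Corollary~\ref{cor:Lambda}. The only organizational difference is that the paper partitions the \emph{pairs} $(k,k')$ directly into $\mathcal{S}(j')\times\mathcal{S}(j)$ versus its complement, whereas you first factor the double sum as $S_{j'}S_j$ and bound each factor; both routes yield the same two contributions $C_{\text{conc}}/d^{0.05}$ and $C_{\text{conc}}/(n^2\log^{10/4}n)$.
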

	
	Next, to bound the terms in the first summand of Equations~\eqref{eq:i1i}, \eqref{eq:ii1}, \eqref{eq:i1i1}, \eqref{eq:ii}, we will use the following lemma. Define $P_i := \sum_{k=1}^{i} \Lambda^2_{k, i}$. We have the following lemma that upper-bounds $P_i$.
	\begin{lemma}
		\label{obs:alphai}
		For every $i \geq 1$, the term $P_i \leq \frac{1}{1-h^2}$.
	\end{lemma}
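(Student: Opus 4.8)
The plan is to set up a recurrence for $P_i := \sum_{k=1}^i \Lambda_{k,i}^2$ and solve it. Recall from Equation~\eqref{eq:Lambdaij} that $\Lambda_{k,i} = \prod_{j=k}^{i-1}\Lambda_{j,j+1}$ for $k < i$ and $\Lambda_{i,i} = 1$. The key structural observation is the factorization $\Lambda_{k,i} = \Lambda_{k,i-1}\cdot \Lambda_{i-1,i}$ for every $k \le i-1$, while the $k=i$ term contributes $\Lambda_{i,i}^2 = 1$. Splitting off the $k=i$ term gives
\[
	P_i = 1 + \sum_{k=1}^{i-1}\Lambda_{k,i}^2 = 1 + \Lambda_{i-1,i}^2 \sum_{k=1}^{i-1}\Lambda_{k,i-1}^2 = 1 + \Lambda_{i-1,i}^2\, P_{i-1}.
\]
Since $\Lambda_{i-1,i}$ is a sample from $\mathcal{U}[-h,h]$ with $h < \tfrac{1}{\sqrt 2}$, we have the deterministic bound $\Lambda_{i-1,i}^2 \le h^2 < \tfrac12$, so $P_i \le 1 + h^2 P_{i-1}$.

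From here I would finish by induction on $i$. The base case $P_1 = \Lambda_{1,1}^2 = 1 \le \frac{1}{1-h^2}$ is immediate since $h^2 \ge 0$. For the inductive step, assuming $P_{i-1} \le \frac{1}{1-h^2}$, the recurrence gives
\[
	P_i \le 1 + h^2 \cdot \frac{1}{1-h^2} = \frac{1 - h^2 + h^2}{1-h^2} = \frac{1}{1-h^2},
\]
which closes the induction. Alternatively, one can unroll the recurrence directly to get $P_i \le \sum_{j=0}^{i-1} h^{2j} \le \sum_{j=0}^{\infty} h^{2j} = \frac{1}{1-h^2}$, using $h^2 < 1$ (indeed $h^2 < \tfrac12$).

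There is no real obstacle here; the only thing to be careful about is that the bound should be stated as holding deterministically once we are on the good event (or even unconditionally, since $|\Lambda_{i-1,i}| \le h$ holds with probability $1$ by construction of the generative model), so that it can be combined cleanly with Lemma~\ref{lem:smallTermsBound} in the subsequent estimate of $\Sigma_{i,i}$ and the off-diagonal terms. In particular, the same factorization argument simultaneously yields the matching lower bound $P_i \ge 1$ (keeping only the $k=i$ term), which is what will be needed to lower-bound $\Sigma_{i,i}$ in Equation~\eqref{eq:ii}; I would mention this in passing since it costs nothing extra.
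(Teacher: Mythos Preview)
Your proof is correct and follows essentially the same approach as the paper: establish the recurrence $P_i = 1 + \Lambda_{i-1,i}^2\,P_{i-1}$, bound $\Lambda_{i-1,i}^2 \le h^2$, and close by induction with base case $P_1 = 1$. The additional remarks (the geometric-series unrolling and the lower bound $P_i \ge 1$) are fine embellishments but not needed for the lemma as stated.
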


	We defer the proofs of Lemmas~\ref{lem:smallTermsBound} and \ref{obs:alphai} to subsection~\ref{subsec:missing}. We will now prove that the following statement holds: for every $i \in [n]$ we have $\Abs{\Sigma_{i-1, i}} - \zeta_h \Sigma_{i, i} \leq 0$, $\Abs{\Sigma_{i, i+1}} - \zeta_h \Sigma_{i, i} \leq 0$ and $\Abs{\Sigma_{i-1, i+1}} - \zeta_h \Sigma_{i, i} \leq 0$. This completes the proof of Theorem~\ref{thm:converge}.

	Consider $\Abs{\Sigma_{i-1, i}} - \zeta_h \Sigma_{i, i}$. Using Equations~\eqref{eq:i1i} and \eqref{eq:ii} this can be upper-bounded by, 
	\[
			\sum_{k=1}^{i-1} \Lambda^2_{k, i-1} \Abs{\Lambda_{i-1, i}} - \zeta_h \sum_{k=1}^{i-1} \Lambda^2_{k, i-1} \Lambda^2_{i-1, i} - \zeta_h + 2 \mG(n).
	\]
	We now want that this expression has to be at most $0$ in Equation~\eqref{eq:negModel}. Thus, re-arranging we want,
	\[
			\zeta_h \geq \frac{P_{i-1} \Abs{\Lambda_{i-1, i}} + 2 \mG(n)}{1+P_{i-1} \Lambda^2_{i-1, i}}.
	\]
	First, we will show that the function in the RHS is increasing in $P_{i-1}$. Differentiating with respect to $P_{i-1}$ we get, 
	\[
			\frac{\left(1+P_{i-1} \Lambda^2_{i-1, i} \right) \Abs{\Lambda_{i-1, i}} - \left( \Abs{\Lambda_{i-1, i}} P_{i-1} + 2 \mG(n) \right) \Lambda^2_{i-1, i}}{\left( 1+P_{i-1} \Lambda^2_{i-1, i} \right)^2}.
	\]
	Note, the numerator evaluates to $\Abs{\Lambda_{i-1, i}} -  2 \mG(n) \Lambda^2_{i-1, i}$ which is always greater than $0$ when $ d \geq \Omega(\poly\log(n))$. From Lemma~\ref{obs:alphai}, we have $P_{i-1} := \sum_{k=1}^{i-1} \Lambda^2_{k, i-1} \leq \frac{1}{1-h^2}$. Thus, we want,
	\begin{equation}
		\label{eq:zeta1}
			\zeta_h \geq \frac{\Abs{\Lambda_{i-1, i}} + 2 \mG(n)}{1-h^2 + \Lambda^2_{i-1, i}}.
	\end{equation}
	Differentiating the RHS with respect to $\Abs{\Lambda_{i-1, i}}$ we get,
	\[
			\frac{\left( 1-h^2 + \Lambda^2_{i-1, i} \right) - 2 \Lambda^2_{i-1, i} - 4 \mG(n) \Abs{\Lambda_{i-1, i}} }{\left( 1-h^2 + \Lambda^2_{i-1, i} \right)^2}.
	\]
	Thus, as long as $h \leq \frac{1}{\sqrt{2}}$ and $d \geq \Omega(\poly\log(n))$ this value is non-negative and thus, the RHS in Equation~\eqref{eq:zeta1} is increasing. Therefore, we get the maximum value at $\Abs{\Lambda_{i-1, i}} = h$ with the value being $\zeta_h \geq h + 2 \mG(n)$.
	
	Similarly, consider $\Abs{\Sigma_{i, i+1}} - \zeta_h \Sigma_{i, i}$. Using Equations~\eqref{eq:ii1} and \eqref{eq:ii} this can be upper-bounded by,
	\[
			\sum_{k=1}^{i} \Lambda^2_{k, i} \Abs{\Lambda_{i, i+1}} - \zeta_h \sum_{k=1}^{i} \Lambda^2_{k, i} + 2 \mG(n).
	\]
	Re-arranging we get that,
	\[
			\zeta_h \geq \Abs{\Lambda_{i, i+1}} + 2 \mG(n) \geq h + 2 \mG(n).
	\]
	Since this is an increasing function in $\Abs{\Lambda_{i, i+1}}$, we get the last inequality.
	
	Finally, consider $\Abs{\Sigma_{i-1, i+1}} - \zeta_h \Sigma_{i, i}$. Using Equations~\eqref{eq:i1i1} and \eqref{eq:ii} this can be upper-bounded by,
	\[
			\sum_{k=1}^{i-1} \Lambda^2_{k, i-1} \Abs{\Lambda_{i-1, i+1}} - \zeta_h \sum_{k=1}^{i-1} \Lambda^2_{k, i-1} \Lambda^2_{i-1, i} - \zeta_h + 2 \mG(n).
	\]
	As in the previous two cases, solving for the inequality that this expression has to be at most $0$ in Equation~\eqref{eq:negModel}, re-arranging and using Observation~\ref{obs:alphai} we can show that the maximum is achieved when $P_{i-1} = \frac{1}{1-h^2}$. Thus, we get,
	 \[
	 		\zeta_h \geq \frac{\Abs{\Lambda_{i-1, i}\Lambda_{i, i+1}} + 2 \mG(n)}{1+\Lambda^2_{i-1, i} - h^2} \geq h^2 + 2 \mG(n).
	 \]
	Therefore, as long as $\zeta_h \geq \max \left\{h^2, h \right\} + 2 \mG(n)$ we have that the three equations above are all upper-bounded by $0$. Since $h < 1$ setting $\zeta_h = h + 2 \mG(n) \leq \frac{1}{\sqrt{2}}$ suffices. Finally, note that $\mG(n) = o(1)$.

	\xhdr{Proof of Equation~\eqref{eq:negModellower} in Theorem~\ref{thm:converge}.} We will show that the condition on $\Abs{\Lambda_{i, i+1}}$ in Model~\ref{mod:localDominance} holds for the random instances. From Lemma~\ref{appx:uniform}, we have that for a given $i \in [n-1]$, we have $\Pr{\Abs{\Lambda_{i, i+1}} \geq n^{-2}} \geq 1-n^{-2}$. Thus, for $n$ independent random variables we have, $\Pr{\forall i \in [n-1] \quad \Abs{\Lambda_{i, i+1}} \geq n^{-2}} \geq \left( 1-n^{-2} \right)^n$. Note that, $\lim_{n \rightarrow \infty} \left( 1-n^{-2} \right)^n = 1$. Therefore, this event happens with high-probability. 
\end{proof}

\subsubsection{Missing Proofs of Helper Lemmas.} 
	\label{subsec:missing}
	\begin{proof}[Proof of Lemma~\ref{lem:smallTermsBound}]
		We prove the main Equation~\eqref{eq:MainSmallTerm}. Fix an arbitrary $j, j' \in [n]$. Let $\mathcal{S}(j) := \{k: 1 \leq k \leq j, |k-j| \leq d^{0.1}\}$. $\mathcal{T}(j,  j') :=  \{(k, k'): k \neq k', 1 \leq k \leq  j', 1 \leq k' \leq  j\} \setminus \Brac{\mathcal{S}(j') \times \mathcal{S}(j)}$.
	\begin{equation}
		\label{eq:secondSplit}
		\sum_{k \in \mathcal{S}(j')} \sum_{k \neq k': k' \in \mathcal{S}(j)} C_{\text{conc}} \frac{\Abs{\Lambda_{k, j'} \Lambda_{k', j}}}{d^{0.25}} + \sum_{(k, k') \in \mathcal{T}(j, j')} C_{\text{conc}} \frac{\Abs{\Lambda_{k, j'} \Lambda_{k', j}}}{d^{0.25}}. 
	\end{equation}
	From Corollary~\ref{cor:Lambda}, we have $\Abs{\Lambda_{i, j}} \leq 1$ for every pair $(i, j)$ where $|i-j| \leq d^{0.1}$. Thus, we can upper-bound every term in the first summand of Equation~\eqref{eq:secondSplit} by $\frac{C_{\text{conc}} }{d^{0.25}}$. Moreover, the total number of terms in that summand is at most $d^{0.1} \times d^{0.1}$. Thus, the total sum is at most $\frac{C_{\text{conc}}}{d^{0.05}}$. 
	
	Consider the second summand in Equation~\eqref{eq:secondSplit}. From Corollary~\ref{cor:Lambda} we have that each summand can be upper-bounded by $\sigma_h^{d^{0.1}}$. Recall that $\sigma_h^2$ and thus, $\sigma_h$ are both strictly less than $1/2$ (Lemma~\ref{appx:uniform}). This implies that the \emph{sum} of the terms in the second summand is at most $\frac{C_{\text{conc}} n^2}{d^{0.25} \sigma_h^{-d^{0.1}}}$. Since $d \geq 4^{10} \log^{10}(n)$, this evaluates to $\frac{C_{\text{conc}}}{n^2 \log^{10/4} n}$. Therefore, the overall expression is upper-bounded by $\frac{C_{\text{conc}}}{d^{0.05}} + \frac{C_{\text{conc}}}{n^2 \log^{10/4} n} = \mG(n)$.
	\end{proof}
	
	\begin{proof}[Proof of Lemma~\ref{obs:alphai}]
		We will prove this by induction. The base case is $i=1$ where we have $P_1 = 1 \leq \frac{1}{1-h^2}$ when $h \geq 0$. For every $i \geq 2$, we have that $P_i = \Lambda^2_{i, i} + \Lambda^2_{i-1, i} P_{i-1}$. Therefore, $P_i \leq 1 + h^2 P_{i-1}$. From inductive hypothesis we have that $P_{i-1} \leq \frac{1}{1-h^2}$. Therefore, $P_i \leq 1 + \frac{h^2}{1-h^2} = \frac{1}{1-h^2}$. 
	\end{proof}

	\begin{proof}[Proof of Lemma~\ref{lem:orthoSphere}]

		The proof of this uses the standard concentration for vectors sampled independently from the $d$-dimensional unit sphere (see Lemma~\ref{lem:gaussianprojections}). We modify this to our setting as follows. Generate $n$ vectors $\mathbf{\tilde{v}}_1$, $\mathbf{\tilde{v}}_2, \ldots, \mathbf{\tilde{v}}_n \in \mathbb{R}^d$ independently from the $d$-dimensional unit sphere. Let $\mathbf{v}_1 = \mathbf{\tilde{v}}_1$. For $i = 2, 3, \ldots$, to generate $\mathbf{v}_i$, we first remove all components of $\mathbf{\tilde{v}}_i$ parallel to $\mathbf{v}_{i-1}$ and then normalize the resultant vector. Formally it can be written as
		 \[
		 	\mathbf{v}_{i} = \frac{\mathbf{\tilde{v}}_i - \langle \mathbf{\tilde{v}}_i, \mathbf{v}_{i-1} \rangle \mathbf{v}_{i-1}}{ \Norm{\mathbf{\tilde{v}}_i - \langle \mathbf{\tilde{v}}_i,\mathbf{v}_{i-1} \rangle \mathbf{v}_{i-1}}}.
		 \]
		We will now prove the statement in the lemma. Consider a pair $i, j$ such that $j > i$ and $j-i \geq 2$. We will show that for a given pair with probability at least $1-F(d)$, the statement in the lemma holds.
		
		Define $\hat{\vec{v}}_i := \mathbf{\tilde{v}}_i - \langle \mathbf{\tilde{v}}_i, \mathbf{v}_{i-1} \rangle \mathbf{v}_{i-1}$. Thus, $\vec{v}_i = \frac{\hat{\vec{v}}_i}{\Norm{\hat{\vec{v}}_i}}$. Consider $\Abs{\langle \vec{v}_i, \vec{v}_j \rangle}$. Using Lemma~\ref{lem:gaussianprojections}, we have that with probability at least $1-\frac{F(d)}{2}$ each of the following holds.
		\begin{enumerate}
			\item $\Abs{\langle \tilde{\vec{v}}_i, \vec{v}_j \rangle} \in \left[ -\frac{C_{\text{conc}}}{3d^{0.25}}, \frac{C_{\text{conc}}}{3d^{0.25}} \right]$
			\item $ \Abs{\langle \tilde{\vec{v}}_i, \vec{v}_{i-1} \rangle} \in  \left[ -\frac{C_{\text{conc}}}{3d^{0.25}}, \frac{C_{\text{conc}}}{3d^{0.25}} \right]$
		\end{enumerate}
		
		Thus, taking a union bound, with probability at least $1-F(d)$ both hold simultaneously. In what follows, we will condition on both these events.
		\begin{align*}
			\Abs{\langle \vec{v}_i, \vec{v}_j \rangle} & = \Abs{\langle \tilde{\vec{v}}_i, \vec{v}_j \rangle} + \Abs{\langle \vec{v}_i - \tilde{\vec{v}_i}, \vec{v}_j \rangle}. & \\
			&\leq \frac{C_{\text{conc}}}{3d^{0.25}} + \Norm{\vec{v}_i-\tilde{\vec{v}}_i}. &\text{(From (1) above and using  Lemma~\ref{appx:innerUnit}, since $\vec{v}_j$ is a unit vector)} \\
			& \leq \frac{C_{\text{conc}}}{3d^{0.25}} + \Norm{\vec{v}_i - \hat{\vec{v}}_i} + \Norm{\hat{\vec{v}}_i - \tilde{\vec{v}}_i}. & \text{(From triangle inequality)} \\
			& \leq \frac{C_{\text{conc}}}{3d^{0.25}} + \Norm{\vec{v}_i - \hat{\vec{v}}_i} + \Abs{\langle \tilde{\vec{v}}_i, \vec{v}_{i-1} \rangle} & \text{(From the definition of $\hat{\vec{v}}_i$)} \\
			& \leq \frac{2C_{\text{conc}}}{3d^{0.25}}  + \Norm{\vec{v}_i - \hat{\vec{v}}_i} & \text{(From (2) above)}
		\end{align*}
		We will now show that $\Norm{\vec{v}_i - \hat{\vec{v}}_i} \leq \frac{C_{\text{conc}}}{3d^{0.25}}$. This will complete the proof. Note from the definition of $\hat{\vec{v}}_i$ we have that $\Norm{\vec{v}_i - \hat{\vec{v}}_i} = \Norm{\frac{\hat{\vec{v}}_i}{\Norm{\hat{\vec{v}}_i}} -\hat{\vec{v}}_i} = \Norm{\hat{\vec{v}}_i}\left( 1- \frac{1}{\Norm{\hat{\vec{v}}_i}} \right)$. Moreover we have that $\Norm{\hat{\vec{v}}_i} \leq \Norm{\tilde{\vec{v}}_i} + \Abs{\langle \tilde{\vec{v}}_i, \vec{v}_{i-1} \rangle}$. From (2) above we have that the second term lies in $\left[ -\frac{C_{\text{conc}}}{3d^{0.25}}, \frac{C_{\text{conc}}}{3d^{0.25}} \right]$. The first term is $1$ since it is a unit vector. Thus $\Norm{\hat{\vec{v}}_i}\left( 1- \frac{1}{\Norm{\hat{\vec{v}}_i}} \right) \leq \frac{C_{\text{conc}}}{3d^{0.25}}$.
	\end{proof}	

\begin{proof}[Proof of Corollary~\ref{cor:Omega}]
		Using Lemma~\ref{lem:orthoSphere} we have that with probability at least $1-F(d)$, the diagonal elements of $\mathbf{\Omega}$ are $1$ and the non-diagonal elements are in the interval $\mathcal{I}_d$. Therefore, we have that with probability at least $1-n^2F(d)$, $\mathbf{\Omega}$ is a matrix with $\Omega_{i, i}=1$ for all $i \in [n]$ and $\Omega_{i, j} \in \mathcal{I}_d$ when $|i-j| \geq 2$. 
\end{proof}

\begin{proof}[Proof of Lemma~\ref{lem:boundLambda}]
		We first compute the expected value $\mathbb{E}[\Lambda_{i, j}]$. From Equation~\eqref{eq:Lambdaij} we have that $\Lambda_{i, j} = \Lambda_{i, i+1}\cdot \Lambda_{i+1, i+2} \cdot \ldots \cdot \Lambda_{j-1, j}$. Note that each of these are independent $\mathcal{U}[-h, h]$ random variables. Therefore, their expected value is $0$ and from independence we have that $\mathbb{E}[\Lambda_{i, j}]=0$.
		
		Let us compute $\Var[\Lambda_{i, j}]$. From the definition of variance and the fact that its mean is $0$, we have that $\Var[\Lambda_{i, j}] = \mathbb{E}[\Lambda^2_{i, i+1}] \cdot \mathbb{E}[\Lambda^2_{i+1, i+2}] \cdot \ldots \cdot \mathbb{E}[\Lambda^2_{j-1, j}] = \sigma_h^{2(j-i)}$. Since we have that $j-i \geq d^{0.1}$ and that $\sigma_h < 1$ therefore, $\Var[\Lambda_{i, j}] \leq \sigma_h^{2d^{0.1}}$. From Chebyshev's inequality~\cite{mitzenmacher2005probability}, we have that for any random variable $X$, $\Pr{ \Abs{X - \mu} \geq a} \leq \frac{\Var{X}}{a^2}$. Lemma~\eqref{lem:boundLambda} follows by the application of Chebyshev's inequality on the random variable $\Lambda_{i, j}$ with $a=\sigma_h^{d^{0.1}/2}$ and using the fact that $d\geq 4^{10} \log^{10} n$.
\end{proof}

\subsection{What happens to the random model when $h > 1$?\\}

We now briefly show that when $h > 1$, the conditions of Model~\ref{mod:localDominance} \emph{do not} hold. More specifically, we can show that the value of $\Diag$ is larger than $1$ with non-negligible probability. Note that the following theorem implies that in this regime we cannot hope to expect $\zeta_h < 1$.

\begin{theorem}
	\label{thm:diverge}	
	For every $h = (1 + z)$, where $z > 0$ is a constant and a large enough $d \geq \Omega(\poly \log n)$, there exists an $i \in [n]$ such that for a constant (dependent on $z$), $0 < C_{z} \leq 1$ we have that,
		\begin{equation}
		\label{eq:negModelDiverge}
		\mathbb{P}\left[\Abs{\Sigma_{i, i+1}} \geq \Sigma_{i,i} \right] \geq C_{z}.
	\end{equation}
\end{theorem}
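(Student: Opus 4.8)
The plan is to exploit the very special structure of the bow-free path. Since the only nonzero off-diagonal entries of $\mathbf{\Lambda}$ lie on the first superdiagonal, the shorthand \eqref{eq:Lambdaij} gives $\Lambda_{k,i+1}=\Lambda_{k,i}\,\Lambda_{i,i+1}$ for every $k\le i$. Plugging this into the expansion \eqref{eq:SigmaExpansion} for $\Sigma_{i,i+1}$ and peeling off the index $k'=i+1$ (for which $\Lambda_{i+1,i+1}=1$), while using that the noise of adjacent vertices is uncorrelated by construction, i.e. $\Omega_{i,i+1}=\langle\mathbf v_i,\mathbf v_{i+1}\rangle=0$, yields the exact identity
\[
	\Sigma_{i,i+1} \;=\; \Lambda_{i,i+1}\,\Sigma_{i,i} \;+\; E_i,
	\qquad
	E_i \;:=\; \sum_{k=1}^{i-1}\Lambda_{k,i}\,\Omega_{k,i+1}.
\]
Consequently $\Abs{\Sigma_{i,i+1}}\ge\Sigma_{i,i}$ is implied by $(\Abs{\Lambda_{i,i+1}}-1)\,\Sigma_{i,i}\ge\Abs{E_i}$, so it is enough to show that, with constant probability, $\Abs{\Lambda_{i,i+1}}$ exceeds $1$ by a fixed margin while $\Abs{E_i}/\Sigma_{i,i}$ is negligible.

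Next I would control the error term by fixing $i$ to be a \emph{constant} independent of $n$ (so that no long products $\Lambda_{k,i}$, which need not decay when $h>1$, ever appear). Conditioning on the good event of Corollary~\ref{cor:Omega}, every $\Omega_{k,i+1}$ with $k\le i-1$ has $\Abs{\Omega_{k,i+1}}\le C_{\text{conc}}/d^{0.25}$, and $\Abs{\Lambda_{k,i}}\le h^{\,i-k}=O(1)$ deterministically for constant $i$, so $\Abs{E_i}\le O(1)/d^{0.25}=o(1)$ once $d\ge\Omega(\poly\log n)$. Likewise, writing $\Sigma_{i,i}=P_i+(\text{cross terms})$ with $P_i=\sum_{k\le i}\Lambda_{k,i}^2\ge\Lambda_{i,i}^2=1$ and the cross terms $O(1)/d^{0.25}$ in absolute value on the same event, we get $\Sigma_{i,i}\ge \tfrac12$. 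Hence $(\Abs{\Lambda_{i,i+1}}-1)\,\Sigma_{i,i}\ge\tfrac12(\Abs{\Lambda_{i,i+1}}-1)$, and the inequality $\Abs{\Sigma_{i,i+1}}\ge\Sigma_{i,i}$ follows as soon as $\Abs{\Lambda_{i,i+1}}\ge 1+z/2$ (which forces the left side above $z/4>\Abs{E_i}$).

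Finally I would use anti-concentration of the uniform law: since $\Lambda_{i,i+1}\sim\mathcal U[-h,h]$ with $h=1+z$, we have $\mathbb{P}\!\left[\Abs{\Lambda_{i,i+1}}\ge 1+z/2\right]=\tfrac{z/2}{1+z}$, a positive constant depending only on $z$. Because $\mathbf{\Lambda}$ and $\mathbf{\Omega}$ are generated independently, intersecting this with the good event of Corollary~\ref{cor:Omega} (probability $1-n^2F(d)=1-1/\poly(n)\to 1$) gives $\mathbb{P}\!\left[\Abs{\Sigma_{i,i+1}}\ge\Sigma_{i,i}\right]\ge\tfrac{z/2}{1+z}-\tfrac{1}{\poly(n)}\ge C_z$ for a constant $C_z\in(0,1]$ once $n$ (hence $d\ge\Omega(\poly\log n)$) is large enough, which is the claim. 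The only part needing care is the bookkeeping that cleanly isolates $E_i$ and the verification that both $\Abs{E_i}$ and the cross-term part of $\Sigma_{i,i}$ are $o(1)$; this is routine, and it is harmless to keep $i$ a constant since the theorem only asserts existence of one such $i$. In fact the case $i=1$ is entirely elementary: there $E_1=0$ and $\Sigma_{1,1}=\Omega_{1,1}=\|\mathbf v_1\|^2=1$ exactly, so $\Sigma_{1,2}=\Lambda_{1,2}$ and the statement collapses to $\mathbb{P}\!\left[\Abs{\Lambda_{1,2}}\ge1\right]=\tfrac{z}{1+z}$, requiring no conditions on $d$ or $n$ at all.
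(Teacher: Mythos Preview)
Your proposal is correct and takes essentially the same approach as the paper, which also specializes to $i=1$ and uses the anti-concentration $\mathbb{P}\bigl[\Abs{\Lambda_{1,2}}\ge 1\bigr]\approx z/(1+z)$. Your final paragraph is in fact slightly sharper than the paper's version: you exploit $\Omega_{1,2}=0$ exactly (by the bow-free construction), so $\Sigma_{1,2}=\Lambda_{1,2}$ and $\Sigma_{1,1}=1$ with no error terms at all, whereas the paper unnecessarily carries a $C_{\text{conc}}/d^{0.25}$ slack from the generic bound of Corollary~\ref{cor:Omega} and hence a spurious dependence on $d$.
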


\begin{proof}
	Using Equation~\eqref{eq:SigmaExpansion} and Corollary~\ref{cor:Omega}, we have the following statements with probability at least $1-\frac{1}{\poly(n)}$.
	\begin{equation}
		\label{eq:ii1L}
		\Abs{\Sigma_{i, i+1}} \geq \sum_{k=1}^{i} \Abs{\Lambda_{k, i} \Lambda_{k, i+1}} - \sum_{k=1}^{i} \sum_{k \neq k':k'=1}^{i+1} C_{\text{conc}} \frac{\Abs{\Lambda_{k, i} \Lambda_{k', i+1}}}{d^{0.25}},
	\end{equation}
	\begin{equation}
		\label{eq:iiL}
		\Sigma_{i, i} \leq \sum_{k=1}^{i} \Lambda^2_{k, i} + \sum_{k=1}^{i} \sum_{k \neq k': k'=1}^{i} C_{\text{conc}} \frac{\Abs{\Lambda_{k, i} \Lambda_{k', i}}}{d^{0.25}}.
	\end{equation}
	The proof follows almost directly from Equations~\eqref{eq:ii1L} and Equations~\eqref{eq:iiL}. Consider $\Abs{\Sigma_{1, 2}} - \Sigma_{1, 1}$. This can be lower-bounded by $\Abs{\Lambda_{1, 2}} - 2 \Abs{\Lambda_{1, 1} } C_{\text{conc}}/d^{0.25} - \Lambda^2_{1, 1}$. Recall that $\Lambda_{1, 1} = 1$.
	
	The probability that $\Abs{\Lambda_{1, 2}} \geq 1 + 2 C_{\text{conc}}/d^{0.25}$ is at least $\frac{z - 2 C_{\text{conc}}/d^{0.25}}{1+z}$ and thus, with the same probability we have that $\Abs{\Lambda_{1, 2}} - 2 C_{\text{conc}}/d^{0.25} - 1$ is larger than $0$. Therefore, $C_z = \frac{z-C_{\text{conc}}/d^{0.25}}{1+z}$ is a constant whenever $z$ is a constant and $d \geq \Omega(\poly \log n)$ is large enough. 
\end{proof}

\section{Experiments}
\label{sec:experiments}

	In this section, we will describe our numerical and real-world experimental results. For the purposes of this section, we define the following quantity \emph{randomized condition number} as follows. Consider a given data correlation matrix $\mathbf{\Sigma}$ and the corresponding parameter matrix $\mathbf{\Lambda}$. Let $\vec{\Sigma}_{\epsilon}$ be a matrix obtained by adding $\mathcal{N}(0, \epsilon^2)$ independent random variable to each entry in $\vec{\Sigma}$ and let $\tilde{\vec{\Lambda}}$ be the corresponding parameter matrix. Then the randomized condition number is $\mathbb{E} \left[ \frac{\Rel{\mathbf{\Lambda}}{\tilde{\mathbf{\Lambda}}}}{\Rel{\mathbf{\Sigma}}{\tilde{\mathbf{\Sigma}}_\epsilon} } \right]$. We use randomized condition number as a proxy for studying the $\ell_{\infty}$-condition number.
	
	\subsection{Good instances}
	We start off by showing that on most random instances, the randomized condition number is small (\emph{i.e.,} instances are stable). In particular, it is stable \emph{because} it satisfies the Model~\ref{mod:localDominance}. Thus, a large fraction of random instances satisfies the condition and hence the stability proof directly implies low condition number on these instances.
	
	The first experiment is as follows. We start with an instance of $(\mathbf{\Lambda}, \mathbf{\Omega})$ and the bow-free path. We construct the matrix $\mathbf{\Sigma}$ using the recurrence \eqref{eqn:Recurrence}. We then plot the values of $|\Sigma_{i, i}|, |\Sigma_{i, i+1}|$, $|\Sigma_{i-1, i}|, |\Sigma_{i-1, i+1}|$. $\mathbf{\Lambda}, \mathbf{\Omega}$ are generated exactly as described in the generative model discussed in the Section~\ref{sec:random} with $h=1$. Figure~\ref{fig:localDominance} shows the plots for three random runs. Note that in all three cases it satisfies the data properties in Model \ref{mod:localDominance}.

		\begin{figure*}[!ht]
			\minipage{0.32\textwidth}
			\includegraphics[width=\linewidth]{./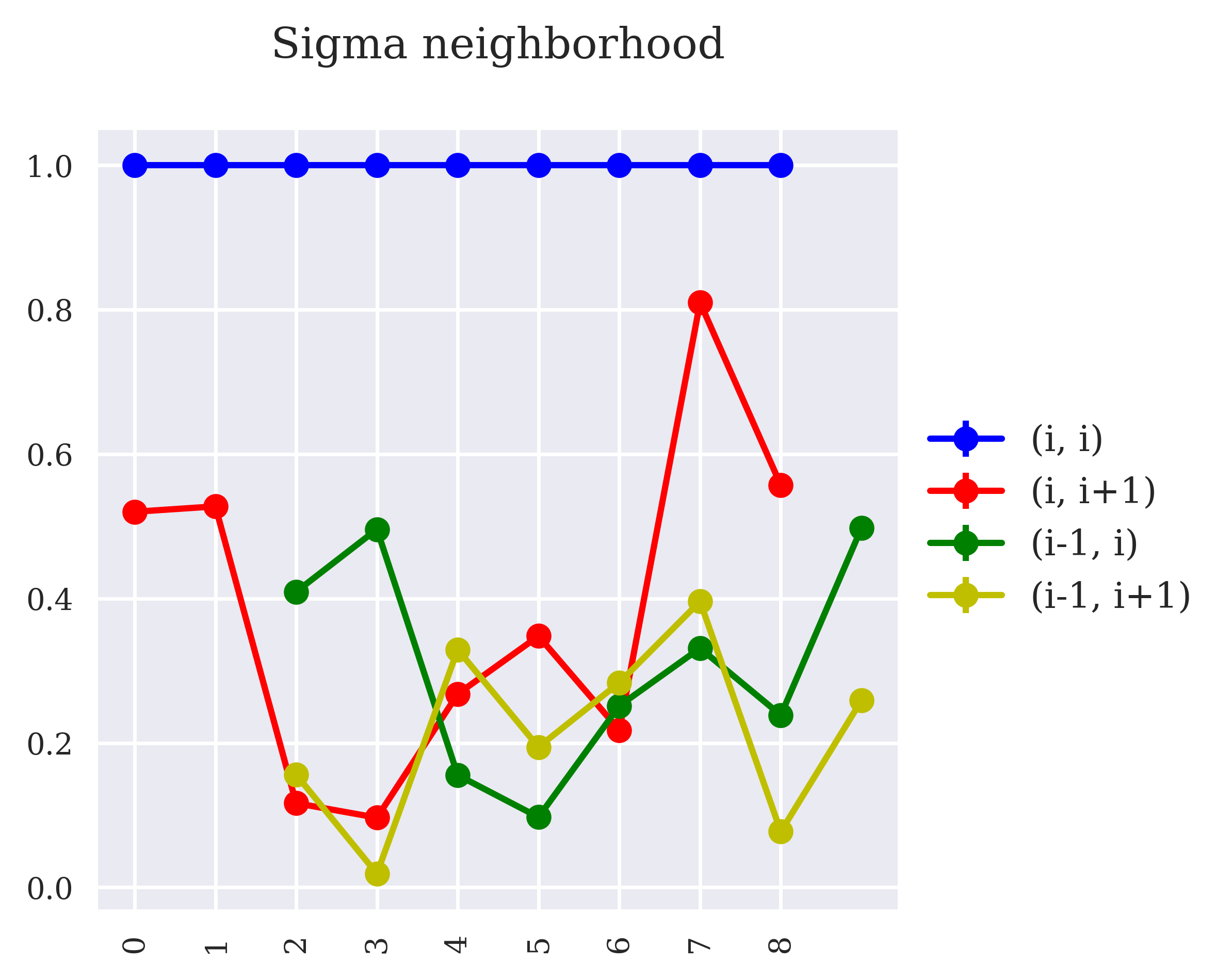}
			\endminipage\hfill
			\minipage{0.32\textwidth}
			\includegraphics[width=\linewidth]{./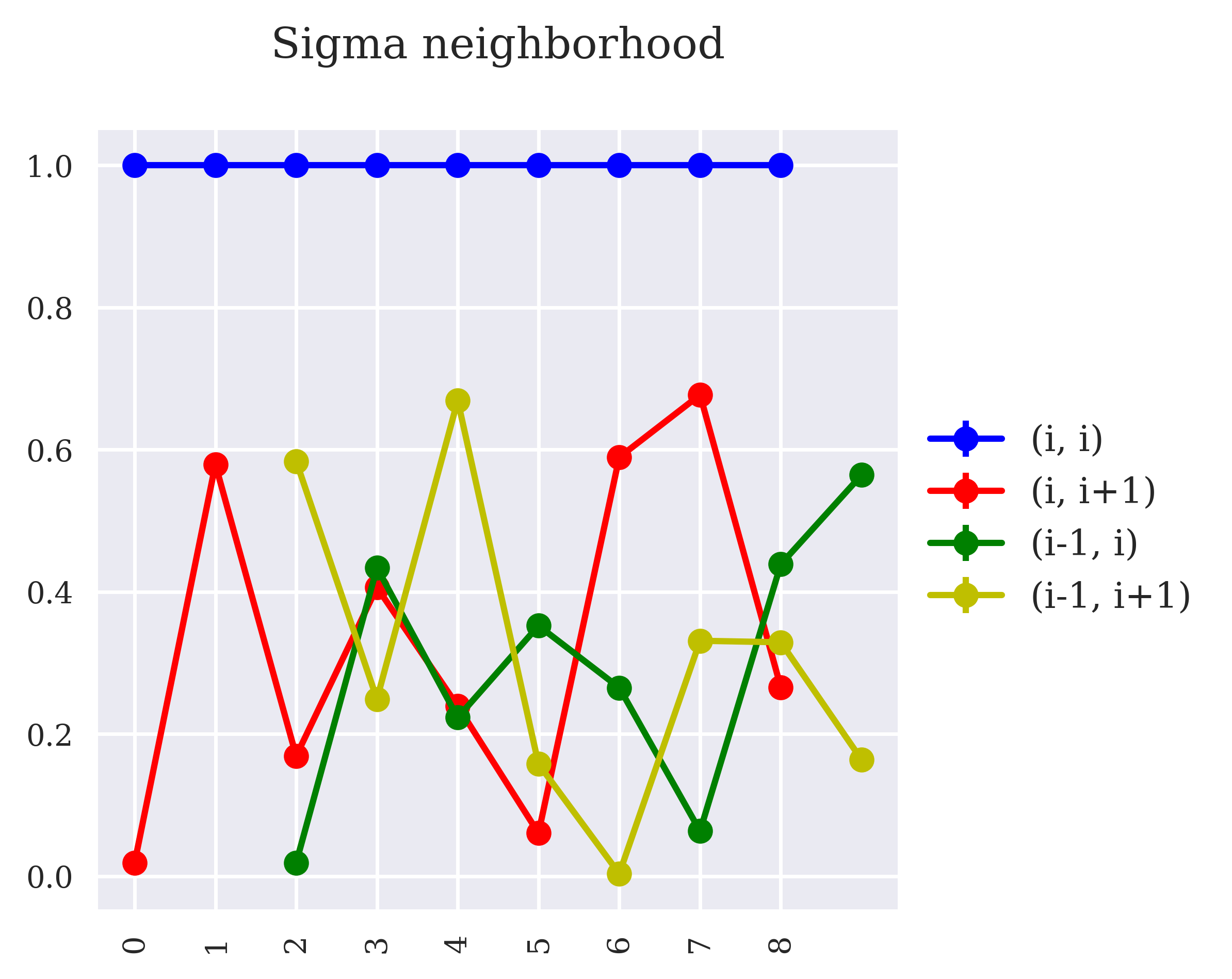}
			\endminipage\hfill
			\minipage{0.32\textwidth}%
			\includegraphics[width=\linewidth]{./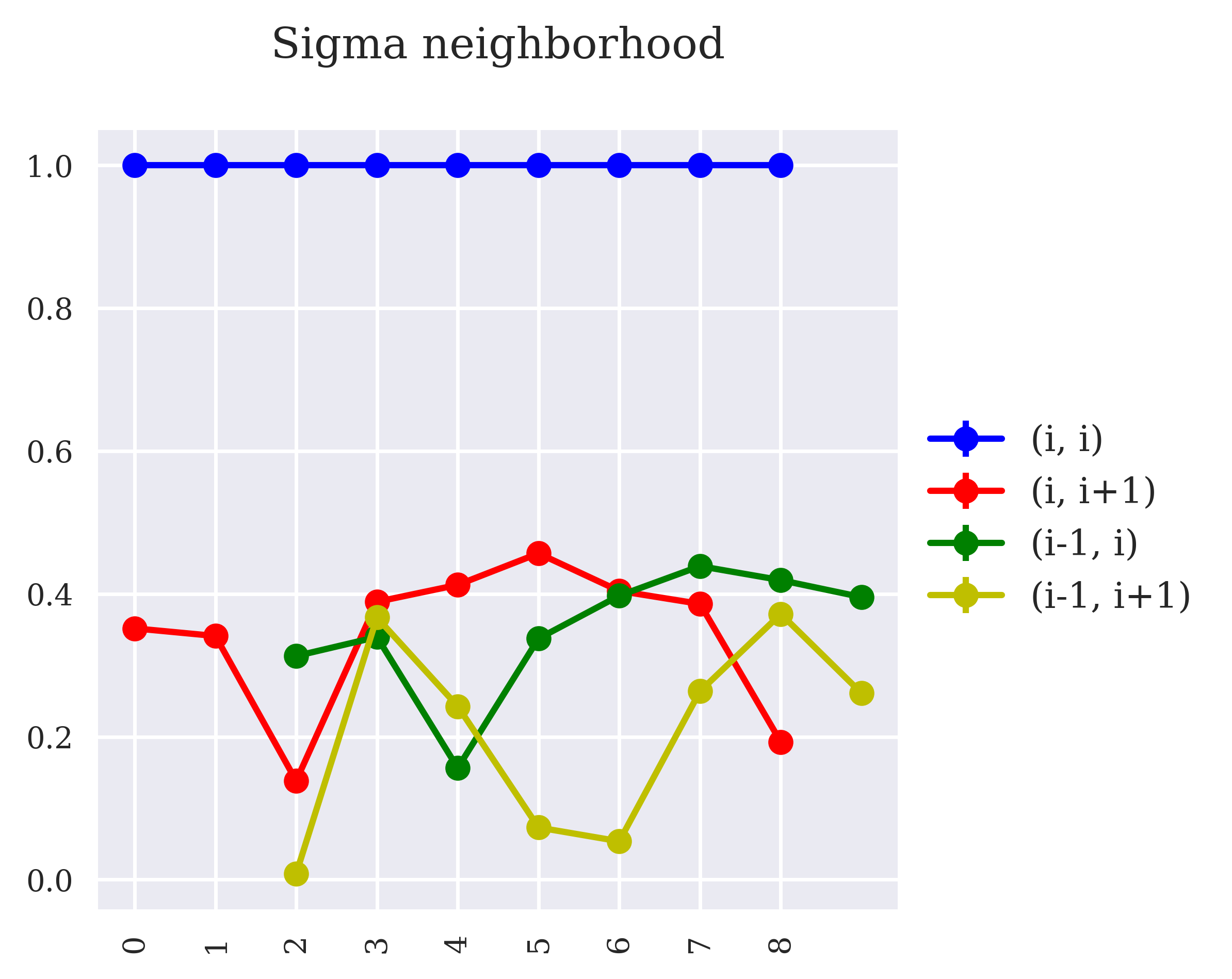}
			\endminipage
			\caption{Local values of $\Sigma$ for three random runs. \textbf{x-axis}: Value of $i$ ($0$-indexed). \textbf{y-axis}: Value of $\vec{\Sigma}$. Thus, a point on the red-line with x-axis label $2$ represents the value $\Sigma_{2, 3}$.}
			\label{fig:localDominance}
		\end{figure*}
	
	Next, we explicitly analyze the effect of \emph{random} perturbations on these instances. As predicted by our theory, the instances are fairly stable. We do this as follows. Given a $(\vec{\Lambda}, \vec{\Omega})$ pair, we can generate $\vec{\Sigma}$ in two ways. We can either (1) use the recurrence~\eqref{eqn:Recurrence}, which can introduce numerical precision errors in $\vec{\Sigma}$, or (2) we can generate samples of $\vec{X}$ (observational data) and estimate $\vec{\Sigma}$ by taking the average over samples of $\vec{X}\vec{X}^T$, which can introduce sampling errors in $\vec{\Sigma}$. 
	
	We add, to each non-zero entry of the obtained $\mathbf{\Sigma}$ (\emph{e.g.,} perturbations), an independent $\mathcal{N}(0, \epsilon^2)$ random variable. The value of $\epsilon$ we chose for this experiment is $10^{-6}$. We then recover $\mathbf{\tilde{\Lambda}}$ from this perturbed $\mathbf{\tilde{\Sigma}}$ using the recurrence \eqref{eqn:Recurrence}. Hence, the ``noise'' entering the system is through two sources, namely, sampling errors and perturbations. Figure~\ref{fig:conditionGood} shows the effect of both sampling and perturbations.  In Figure~\ref{fig:conditionGoodSampling} we show the effect of only sampling errors while in Figure~\ref{fig:conditionGoodPerturbation} we show the effect of only perturbation error.
	
	\begin{figure*}[!ht]
		\minipage{0.49\textwidth}
			\includegraphics[width=\linewidth]{./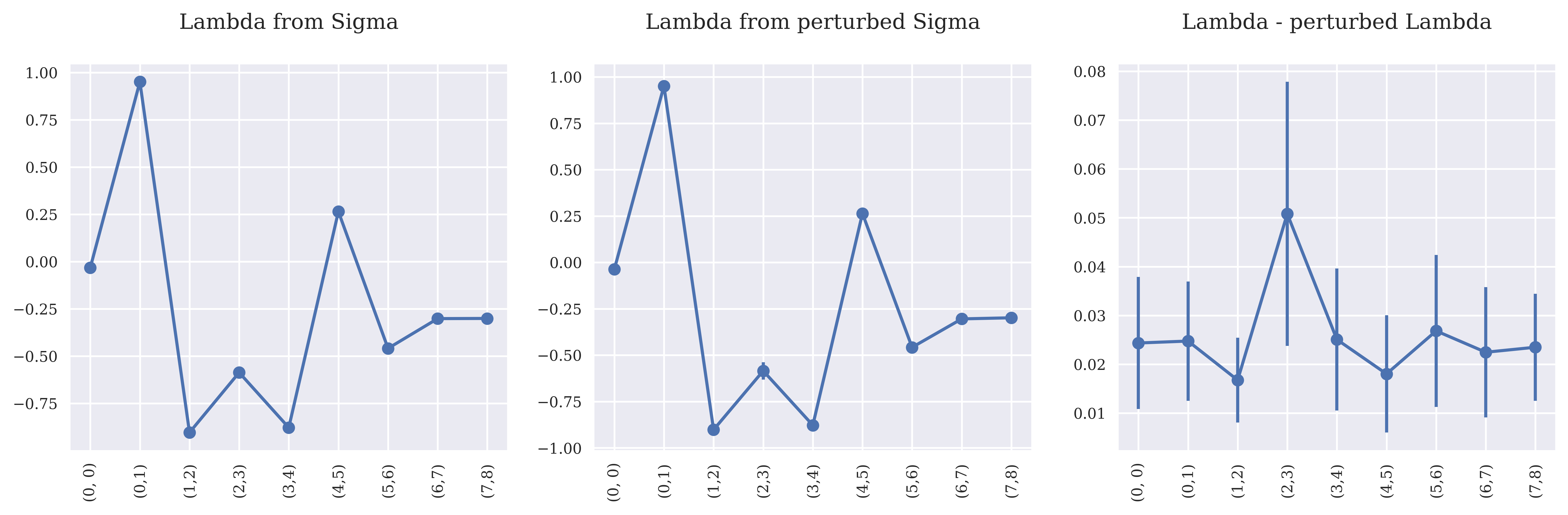}
		\endminipage\hfill
		\minipage{0.49\textwidth}
			\includegraphics[width=\linewidth]{./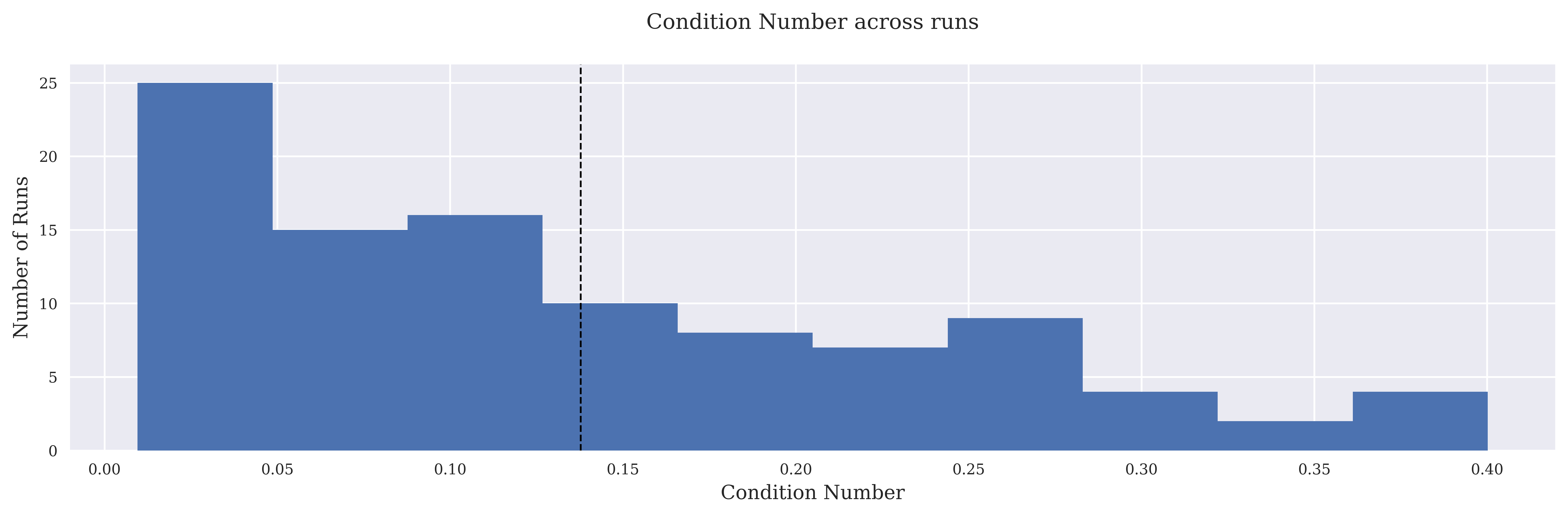}
		\endminipage\hfill
		\caption{Both sampling and perturbation errors. (First) Denotes the actual values of $\mathbf{\Lambda}$. (Second) Denotes the values of $\mathbf{\Lambda}$ obtained from perturbing $\mathbf{\Sigma}$. (Third) Plots the difference between the two value of $\mathbf{\Lambda}$. (Fourth) Gives the histogram of the randomized condition number in various runs.
			(First three plots): \textbf{x-axis}: index of $\Lambda_{i, i+1}$ ($0$-indexed). \textbf{y-axis}: Value of $\vec{\Lambda}$.
		}
		\label{fig:conditionGood}
		\end{figure*}
		
		\begin{figure*}[!ht]
		\minipage{0.49\textwidth}
			\includegraphics[width=\linewidth]{./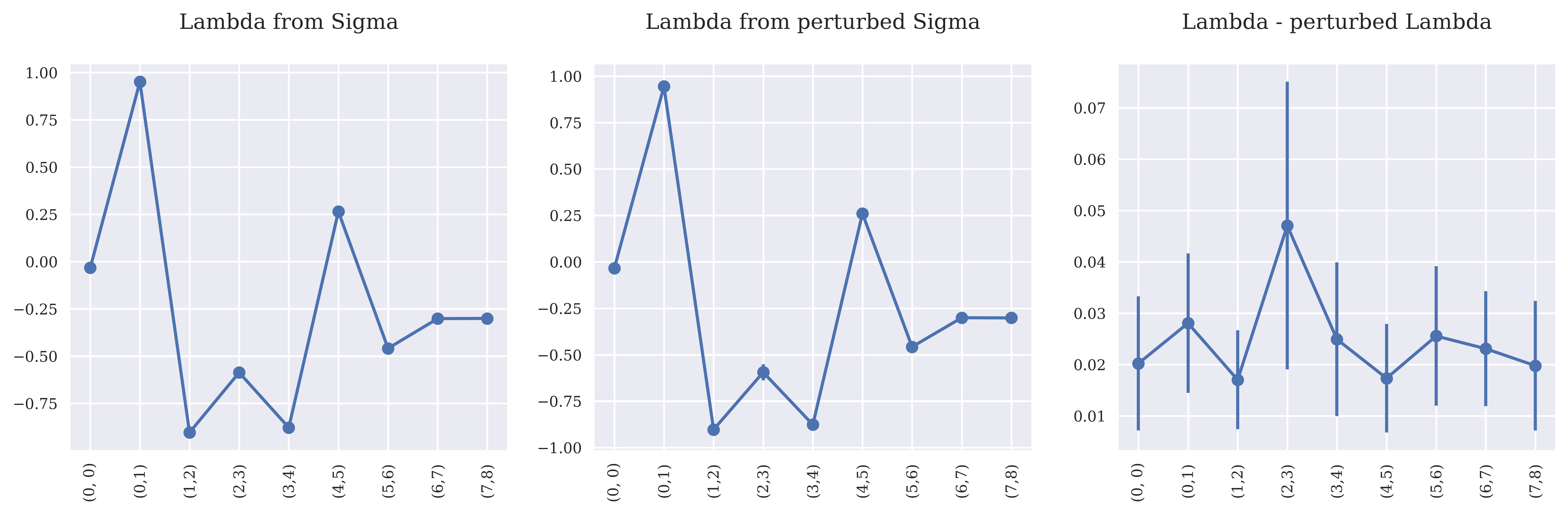}
		\endminipage\hfill
		\minipage{0.49\textwidth}
			\includegraphics[width=\linewidth]{./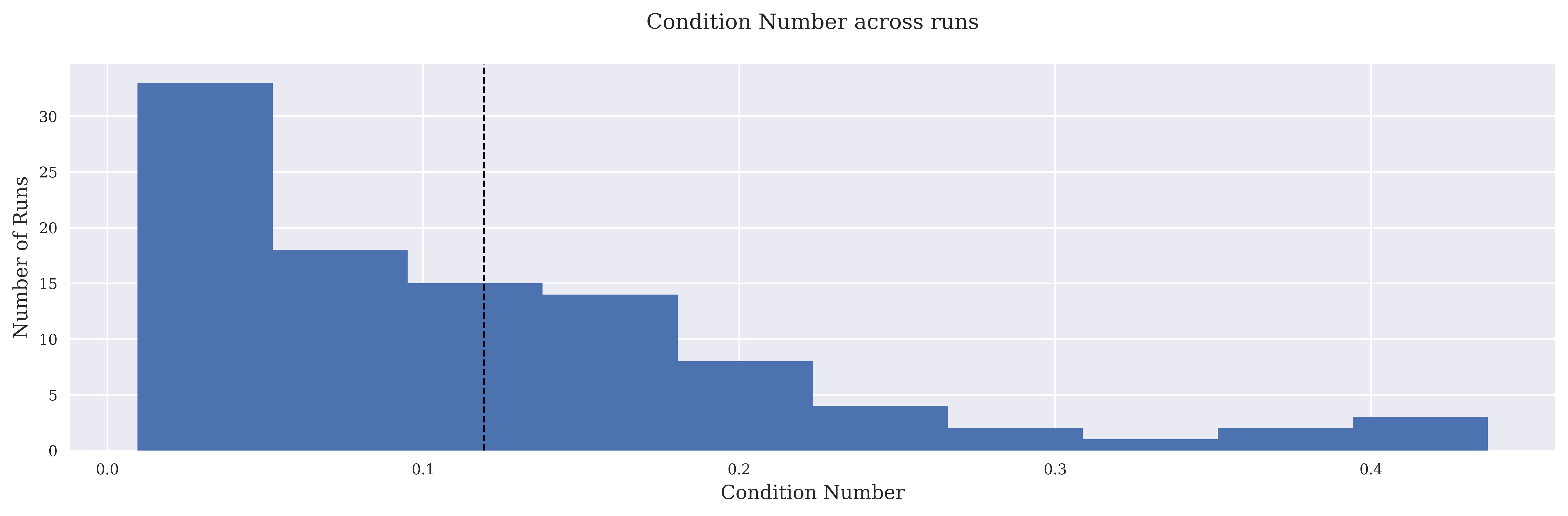}
		\endminipage\hfill
		\caption{Only sampling errors. (First) Denotes the actual values of $\mathbf{\Lambda}$. (Second) Denotes the values of $\mathbf{\Lambda}$ obtained from perturbing $\mathbf{\Sigma}$. (Third) Plots the difference between the two value of $\mathbf{\Lambda}$. (Fourth) Gives the histogram of the randomized condition number in various runs.
			(First three plots): \textbf{x-axis}: index of $\Lambda_{i, i+1}$ ($0$-indexed). \textbf{y-axis}: Value of $\vec{\Lambda}$.
		}
		\label{fig:conditionGoodSampling}
		\end{figure*}
		
		\begin{figure*}[!ht]
		\minipage{0.49\textwidth}
			\includegraphics[width=\linewidth]{./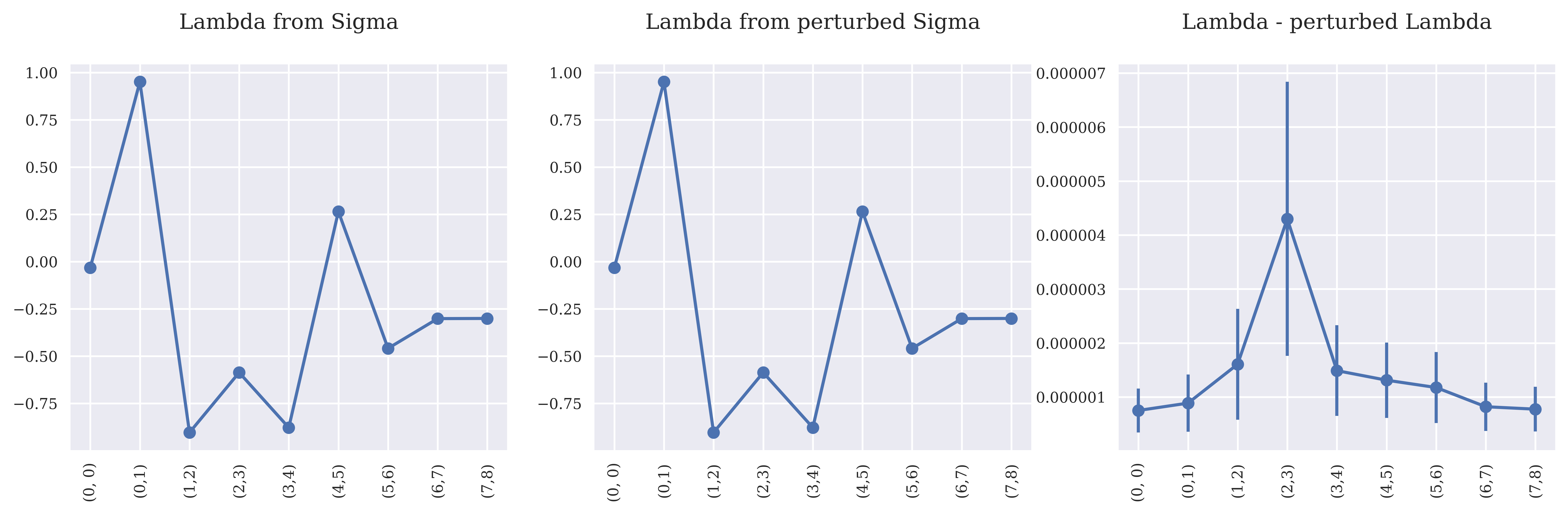}
		\endminipage\hfill
		\minipage{0.49\textwidth}
			\includegraphics[width=\linewidth]{./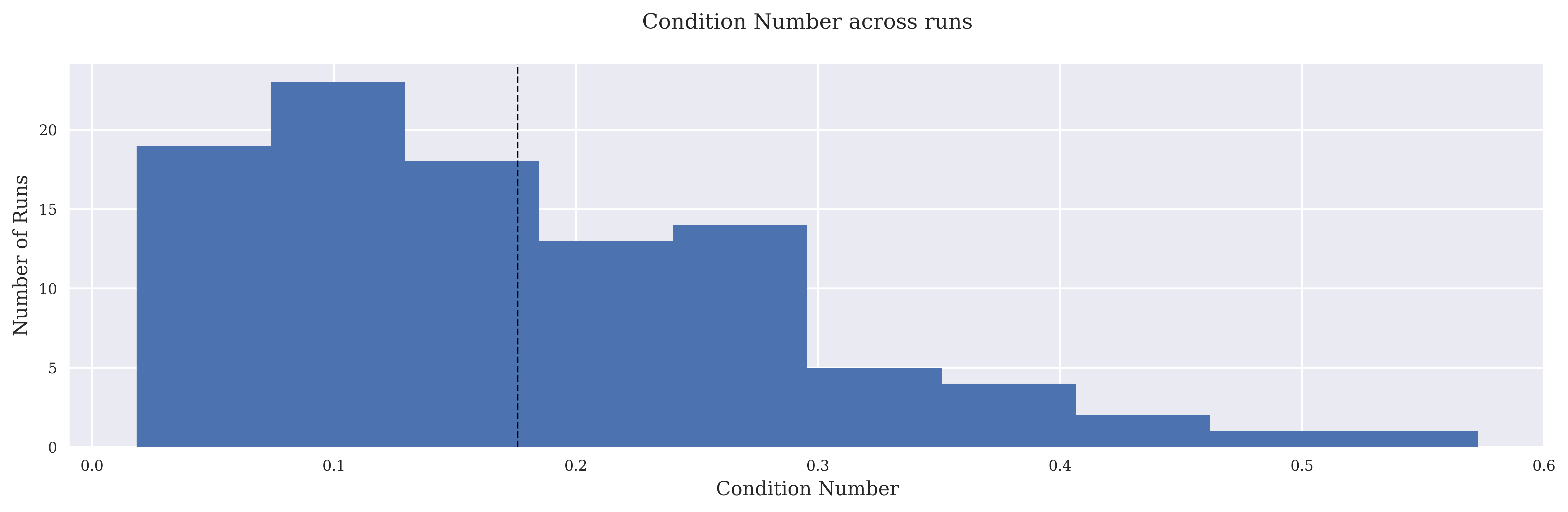}
		\endminipage\hfill
		\caption{Only perturbation errors. (First) Denotes the actual values of $\mathbf{\Lambda}$. (Second) Denotes the values of $\mathbf{\Lambda}$ obtained from perturbing $\mathbf{\Sigma}$. (Third) Plots the difference between the two value of $\mathbf{\Lambda}$. (Fourth) Gives the histogram of the randomized condition number in various runs.
			(First three plots): \textbf{x-axis}: index of $\Lambda_{i, i+1}$ ($0$-indexed). \textbf{y-axis}: Value of $\vec{\Lambda}$.
		}
		\label{fig:conditionGoodPerturbation}
		\end{figure*}

	\subsection{Bad Instances}
	In the next experiment, we start off with the bad example on a bow-free path of length 4. We show that as confirmed in the theory, this instance is highly unstable. We then perturb this instance slightly and show that in a small enough ball around this instance, it continues to remain unstable. Finally, we plot a graph showing the variation of the randomized condition number as a function of the radius of the ball around this instance (see next paragraph for precise definition).
	
	In particular, we start with the bad $\mathbf{\Lambda}$ and $\mathbf{\Omega}$. We then consider a region around these matrices as follows. To $\mathbf{\Lambda}$ (and likewise to $\mathbf{\Omega}$) add an independent $\mathcal{N}(0, \epsilon^2)$ random variable to each non-zero entry. We then consider the effect of random perturbations starting from this new $(\mathbf{\tilde{\Lambda}}, \mathbf{\Omega})$ pair. Figure~\ref{fig:badLambda} shows the effect of random perturbations in the region around the matrix $\mathbf{\Lambda}$ and Figure~\ref{fig:badOmega} shows the same in the region around $\mathbf{\Omega}$.
		As evident from the figures, the randomized condition number continues to remain large even when slightly perturbed. Hence, this implies that there are infinitely many pairs $(\mathbf{\Lambda}, \mathbf{\Omega})$ which produce very large condition numbers.
		
	\begin{figure*}[!ht]
		\minipage{0.45\textwidth}
			\includegraphics[width=\linewidth]{./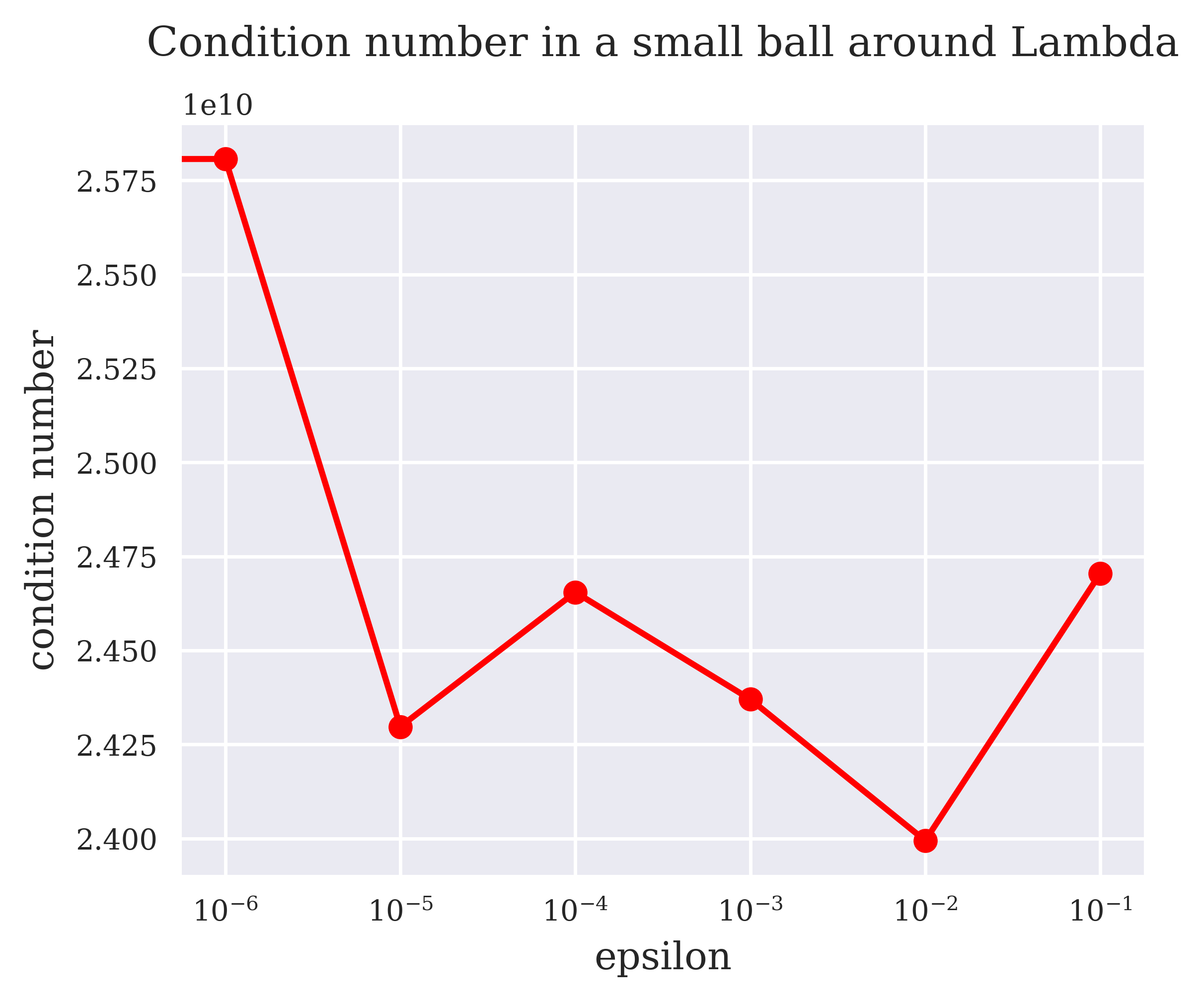}
			\caption{Randomized condition number in a small region around the bad $\mathbf{\Lambda}$. Scale of \textbf{$y$-axis}: $10^{10}$.}
		\label{fig:badLambda}
		\endminipage\hfill
		\minipage{0.45\textwidth}
		\includegraphics[width=\linewidth]{./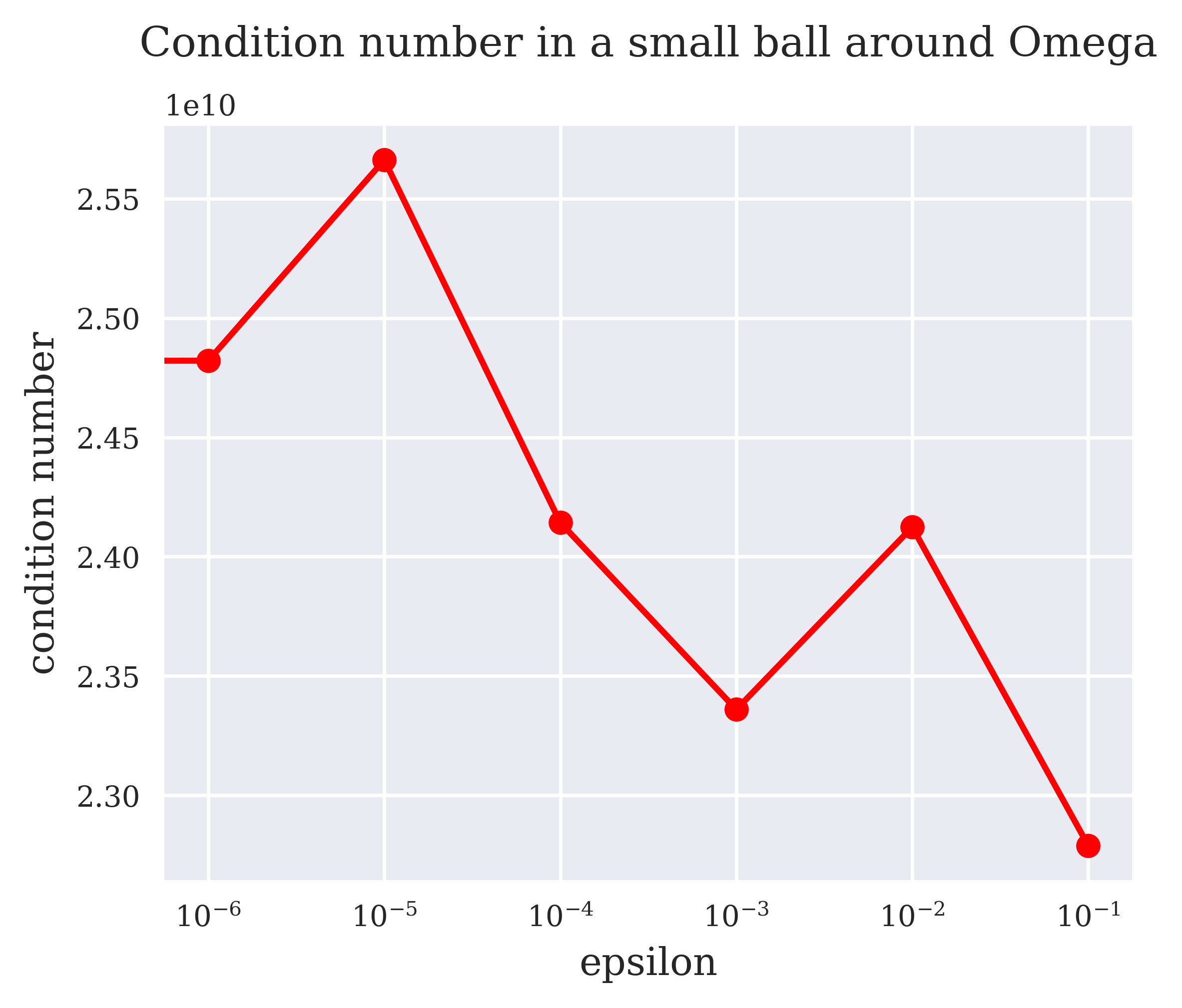}
		\caption{Randomized condition number in a small region around the bad $\mathbf{\Omega}$. Scale of \textbf{$y$-axis}: $10^{10}$.}
		\label{fig:badOmega}
		\endminipage\hfill
	\end{figure*}
	\begin{figure*}
		\includegraphics[width=0.5\linewidth]{./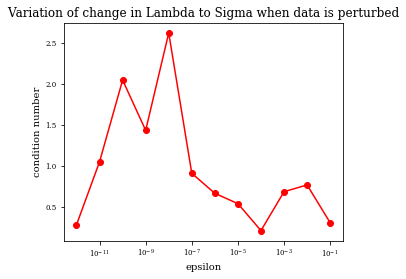}
			\caption{Variation of the randomized condition number as a function of the perturbation error on the sociology dataset}
			\label{fig:realWorld}
		\end{figure*}

	\subsection{Real-world data}
		\label{subsec:real}
		In this experiment, we analyze a sociology dataset on $6$ vertices, taken from a public Internet repository~\cite{dataset}. This was used in \cite{shimizu2011directlingam} which almost resembles our model of $\LSEM$ with the exception that we consider Gaussian noise while they don't. Nonetheless, we experiment with the matrices returned by their algorithm, and compute the randomized condition number. 
		
		We pre-process the dataset such that the setup is almost similar to that in their work\footnote{In a private communication, the authors of \cite{shimizu2011directlingam} provided exact data used in \cite{shimizu2011directlingam}}. Note that the causal graph given by domain experts is \emph{bow-free} therefore, from the theorem in \cite{brito2002new}, this graph is identifiable. We constructed the $\mathbf{\Sigma}$ matrix from the observational data. We then use the algorithm of \cite{FDD2012Annals} to recover the parameter $\mathbf{\Lambda}$. We then \emph{perturb} the data as follows. To each entry in the observational data, we add an additive $\mathcal{N}(0, \epsilon^2)$ noise independently. Compared to the magnitudes of the actual data, this additive noise is insignificant. We recompute $\mathbf{\tilde{\Sigma}}$, from this perturbed observational data. Using the algorithm of \cite{FDD2012Annals}, we once again recover the parameter $\mathbf{\tilde{\Lambda}}$. For a given $\epsilon$, we run $100$ independent runs and take the average. Figure~\ref{fig:realWorld} shows the variation of the randomized condition number as a function of $\epsilon$. As the value of $\epsilon$ becomes very small, the randomized condition number remains almost constant and approaches $10^{-1}$. This number is very close to our \emph{well-behaved} instances implying that this dataset is fairly robust when modeled as a $\LSEM$.

\subsection{Other DAGs}
		In this section, we run further experiments which considers other bow-free graphs where the DAG is not a path. The first class of graphs we consider is the \emph{clique of paths}. In this class, the unobservable edges satisfy the \emph{bow-free} 
		property\footnote{recall from the definition that this implies that for a pair of vertices, both observable and unobservable edges do not exist simultaneously}. The observable edges can be described as follows. We have a parameter $k$ which controls the size of the \emph{clique}. The DAG is layered with $\tfrac{n}{k}$ layers and every layer has $k$ vertices. Directed edges are placed from every vertex in layer $i$ to every other vertex in layer $i+1$ for $i \in [n/k-1]$. Thus, the induced subgraph (ignoring the direction of the edges) on the vertices in layer $i$ and layer $i+1$ form a clique. The bow-free paths can be viewed as a member in this family with $k=1$. As before, the non-zero entries of $\vec{\Lambda}$ is chosen as independent and uniform samples from $\mathcal{U}[-1, 1]$. To generate $\vec{\Omega}$ we first generate vectors $\vec{v}_1, \vec{v}_2, \ldots, \vec{v}_n$, such that $v_{i}$ is a uniform sample from the subspace perpendicular to the one spanned by $\vec{v}_{i-1}, \vec{v}_{i-2}, \ldots, \vec{v}_{i-k}$. Then we set $\Omega_{i, j} = \langle \vec{v}_i, \vec{v}_j \rangle$. We run experiments with $n=20, k=2$ and $n=30, k=5$ with results in Figures~\ref{fig:n20k2} and \ref{fig:n30k5} respectively. As in the case of paths, the instances have a low randomized condition number.

		\begin{figure*}[!ht]
		\minipage{0.49\textwidth}
			\includegraphics[width=\linewidth]{./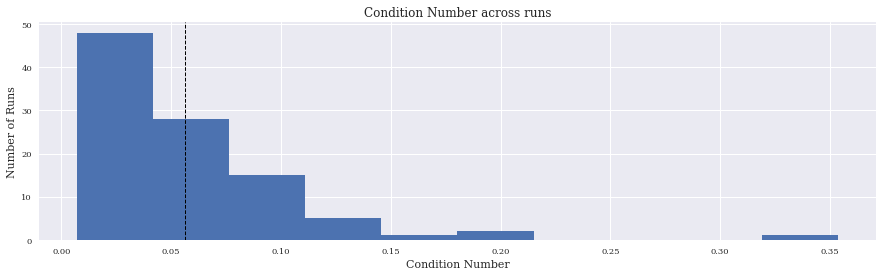}
			\caption{Randomized condition number for clique of paths with $n=20$ and $k=2$.}
		\label{fig:n20k2}
		\endminipage\hfill
		\minipage{0.49\textwidth}
		\includegraphics[width=\linewidth]{./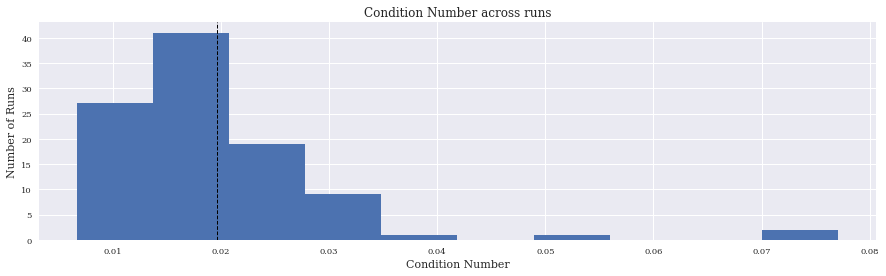}
		\caption{Randomized condition number for clique of paths with $n=30$ and $k=5$.}
		\label{fig:n30k5}
		\endminipage\hfill
		\end{figure*}

		The next class of graphs we consider is the \emph{layered} graph. We generate these instances as follows. We start with the graph of $n=30$ and $k=5$ from the previous experiment on clique of paths. For a parameter $p$, every edge is independently dropped with probability $p$. Thus, not all edges in any clique is present in the final causal model. We run experiments for $p=0.2, 0.5, 0.8$ with results in Figures~\ref{fig:n30k5p0.2}, \ref{fig:n30k5p0.5} and \ref{fig:n30k5p0.8} respectively. 
		
		\begin{figure*}[!ht]
		\minipage{0.33\textwidth}
			\includegraphics[width=\linewidth]{./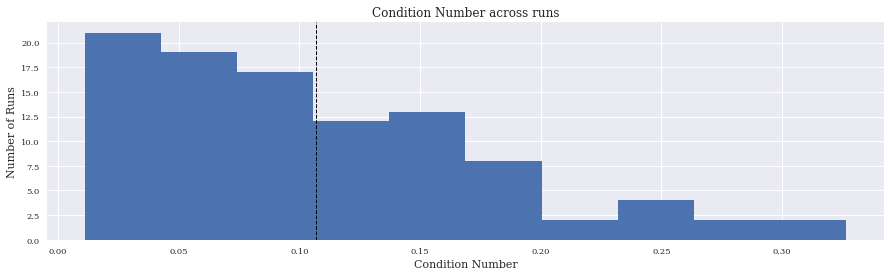}
			\caption{Layered graph: $p=0.2$.}
		\label{fig:n30k5p0.2}
		\endminipage\hfill
		\minipage{0.33\textwidth}
		\includegraphics[width=\linewidth]{./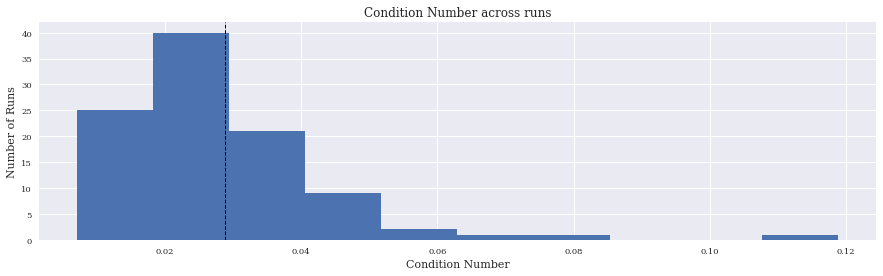}
		\caption{Layered graph: $p=0.5$.}
		\label{fig:n30k5p0.5}
		\endminipage\hfill
		\minipage{0.33\textwidth}
		\includegraphics[width=\linewidth]{./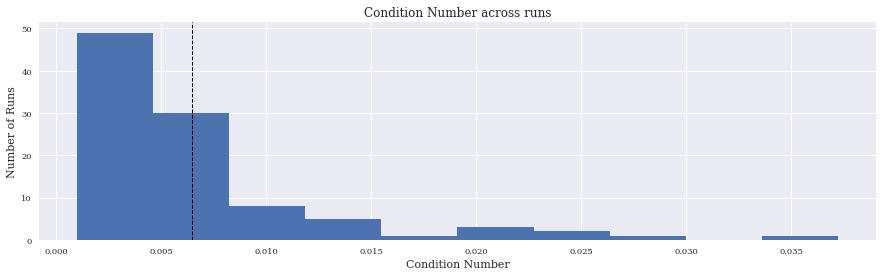}
		\caption{Layered graph: $p=0.8$.}
		\label{fig:n30k5p0.8}
		\endminipage\hfill
		\end{figure*}

\section{General Procedure for Practitioners}
	\label{sec:generalAlgorithm}
	
	In this section, we conclude this work by giving a general heuristic which practitioners using the $\LSEM$ can incorporate to identify \emph{good} and \emph{bad} instances (in terms of the randomized condition number). As suggested by theory and experiments, a vast range of instances are well-conditioned while some \emph{carefully} constructed instances can be arbitrarily ill-conditioned. Algorithm~\ref{alg:generalAlgorithm} describes a general procedure for identifying if a given instance at hand has a high randomized condition number (as defined in section~\ref{sec:experiments}). We use this procedure in our experiments on the real-world dataset (sub-section~\ref{subsec:real}) to conclude that the given instance is well-conditioned.

	\begin{algorithm2e}[!h]
		\caption{General procedure to check if a given instance is well-conditioned}
		\label{alg:generalAlgorithm}
		\DontPrintSemicolon
		\SetKwInOut{Input}{input}
		\Input{Data correlation matrix $\vec{\Sigma}$ of size $n \times n$, ill-condition threshold $\tau$.}
		
		Set $\epsilon = \frac{1}{n^4}$ \;
		\For{$r = 1, 2, \ldots, n^4$}{
			\begin{minipage}{15cm}
			\begin{enumerate}
			\item
			\textbf{Random perturbation.} Construct $\vec{\tilde{\Sigma}}$ by adding a $\mathcal{N}(0, \epsilon^2)$ random variable independently to each entry in $\vec{\Sigma}$. \;
	
			\item 
			\textbf{Recovery.} Recover parameters $\vec{\tilde{\Lambda}}$, $\vec{\Lambda}$ from $\vec{\tilde{\Sigma}}$ and $\vec{\Sigma}$ respectively. \;
			
			\item 
			\textbf{Condition Number.} Compute the condition number for this run $\kappa_r := \frac{\Abs{\vec{\tilde{\Lambda}} - \vec{\Lambda}}_\infty}{\Abs{\vec{\tilde{\Sigma}} - \vec{\Sigma}}_\infty}$. \;
			\end{enumerate}
			\end{minipage}
		 }
		 Compute the average $\kappa$ of the condition numbers $\kappa_1, \kappa_2, \ldots, \kappa_{n^4}$. If $\kappa \geq \tau$ then the instance is \emph{ill-conditioned}. Else it is well-conditioned. \;
		 
	\end{algorithm2e}

\section*{Acknowledgements}
The authors would like to thank Amit Sharma, Ilya Shpitser and Piyush Srivastava for useful discussions on causality. The authors would also like to thank Shohei Shimizu for providing us with the sociology dataset. 

Part of this work was done when Karthik Abinav Sankararaman was visiting Indian Institute of Science, Bangalore. KAS was supported in part by NSF Awards CNS 1010789, CCF 1422569, CCF-1749864 and research awards from Adobe, Amazon, and Google. Anand Louis was supported in part by SERB Award ECR/2017/003296. AL is also grateful to Microsoft Research for supporting this collaboration.

\bibliographystyle{plain}
\bibliography{../references}

\begin{thebibliography}{10}

\bibitem{dataset}
General social survey \url{http://gss.norc.org/Get-The-Data}.
\newblock 2018.

\bibitem{arora2009expander}
Sanjeev Arora, Satish Rao, and Umesh Vazirani.
\newblock Expander flows, geometric embeddings and graph partitioning.
\newblock {\em Journal of the ACM (JACM)}, 56(2):5, 2009.

\bibitem{25}
Peter~M Bentler and David~G Weeks.
\newblock Linear structural equations with latent variables.
\newblock {\em Psychometrika}, 45(3):289--308, 1980.

\bibitem{BhaskaraCMV14}
Aditya Bhaskara, Moses Charikar, Ankur Moitra, and Aravindan Vijayaraghavan.
\newblock Smoothed analysis of tensor decompositions.
\newblock In {\em Symposium on Theory of Computing, {STOC} 2014}, pages
  594--603, 2014.

\bibitem{Bollen}
Kenneth~A. Bollen.
\newblock {\em Structural Equations with Latent Variables}.
\newblock Wiley-Interscience, 1989.

\bibitem{brito2002new}
Carlos Brito and Judea Pearl.
\newblock A new identification condition for recursive models with correlated
  errors.
\newblock {\em Structural Equation Modeling}, 9(4):459--474, 2002.

\bibitem{BP2006UAI}
Carlos Brito and Judea Pearl.
\newblock {Graphical Condition for Identification in recursive SEM}.
\newblock {\em Proceedings of the Twenty-Second Conference on Uncertainty in
  Artificial Intelligence (UAI)}, pages 47--54, 2006.

\bibitem{burgisser2013condition}
Peter B{\"u}rgisser and Felipe Cucker.
\newblock {\em Condition: The geometry of numerical algorithms}, volume 349.
\newblock Springer Science \& Business Media, 2013.

\bibitem{Chen2016aNIPS}
Bryant Chen.
\newblock {Identification and Overidentification of Linear Structural Equation
  Models}.
\newblock {\em Advances in Neural Information Processing Systems (NIPS)}, pages
  1587----1595, 2016.

\bibitem{CKB2017ICML}
Bryant Chen, Daniel Kumor, and Elias Bareinboim.
\newblock {Identification and Model Testing in Linear Structural Equation
  Models using Auxiliary Variables}.
\newblock {\em International Conference on Machine Learning (ICML)}, pages
  757----766, 2017.

\bibitem{CTP2014AAAI}
Bryant Chen, Jin Tian, and Judea Pearl.
\newblock {Testable Implications of Linear Structural Equation Models}.
\newblock {\em Association for the Advancement of Artificial Intelligence
  (AAAI)}, pages 2424--2430, 2014.

\bibitem{cornia2014type}
N~Cornia and JM~Mooij.
\newblock Type-{II} errors of independence tests can lead to arbitrarily large
  errors in estimated causal effects: An illustrative example.
\newblock In {\em CEUR Workshop Proceedings}, volume 1274, pages 35--42, 2014.

\bibitem{RICF}
Mathias Drton, Michael Eichler, and Thomas~S Richardson.
\newblock Computing maximum likelihood estimates in recursive linear models
  with correlated errors.
\newblock {\em Journal of Machine Learning Research}, 10(Oct):2329--2348, 2009.

\bibitem{51}
Mathias Drton, Rina Foygel, and Seth Sullivant.
\newblock Global identifiability of linear structural equation models.
\newblock {\em The Annals of Statistics}, pages 865--886, 2011.

\bibitem{DW2016Scandinavian}
Mathias Drton and Luca Weihs.
\newblock {Generic Identifiability of Linear Structural Equation Models by
  Ancestor Decomposition}.
\newblock {\em Scandinavian Journal of Statistics}, 43(4):1035--1045, 2016.

\bibitem{ERB2016Arxiv}
Jan Ernest, Dominik Rothenhausler, and Peter Buhlmann.
\newblock {Causal inference in partially linear structural equation models:
  identifiability and estimation}.
\newblock {\em arXiv preprint arXiv:1607.05980}, 2016.

\bibitem{FDD2012Annals}
Rina Foygel, Jan Draisma, and Mathias Drton.
\newblock {Half-trek criterion for generic identifiability of linear structural
  equation models}.
\newblock {\em Annals of Statistics}, 40(3):1682--1713, 2012.

\bibitem{ghoshal2018learning}
Asish Ghoshal and Jean Honorio.
\newblock Learning linear structural equation models in polynomial time and
  sample complexity.
\newblock In {\em International Conference on Artificial Intelligence and
  Statistics}, pages 1466--1475, 2018.

\bibitem{64}
Isabelle Guyon, Dominik Janzing, and Bernhard Scholkopf.
\newblock Causality: Objectives and assessment.
\newblock In {\em Causality: Objectives and Assessment}, pages 1--42, 2010.

\bibitem{Hancock}
Greogry~R. Hancock.
\newblock Fortune cookies, measurement error, and experimental design.
\newblock {\em Journal of Modern Applied Statistical Methods}, 2 (2): 293,
  2003.

\bibitem{67}
Paul~W Holland, Clark Glymour, and Clive Granger.
\newblock Statistics and causal inference.
\newblock {\em ETS Research Report Series}, 1985(2), 1985.

\bibitem{Huang_etal}
Xianzheng Huang, Leonard~A. Stefanski, and Marie Davidian.
\newblock Latent-model robustness in structural measurement error models.
\newblock {\em Biometrika}, 93(1):53--64, 2006.

\bibitem{85}
Roderick~P McDonald.
\newblock What can we learn from the path equations?: Identifiability,
  constraints, equivalence.
\newblock {\em Psychometrika}, 67(2):225--249, 2002.

\bibitem{mitzenmacher2005probability}
Michael Mitzenmacher and Eli Upfal.
\newblock {\em Probability and computing: Randomized algorithms and
  probabilistic analysis}.
\newblock Cambridge university press, 2005.

\bibitem{pearlBook}
Judea Pearl.
\newblock {Causality}.
\newblock {\em Cambridge university press}, 2009.

\bibitem{95}
J.~Peters, D.~Janzing, and B.~Sch\"{o}lkopf.
\newblock {\em Elements of Causal Inference - Foundations and Learning
  Algorithms}.
\newblock Adaptive Computation and Machine Learning Series. The MIT Press,
  Cambridge, MA, USA, 2017.

\bibitem{Satorra}
Albert Satorra.
\newblock Robustness issues in structural equation modeling: A review of recent
  developments.
\newblock 24:367--386, 01 1990.

\bibitem{scheines2016measurement}
Richard Scheines and Joseph Ramsey.
\newblock Measurement error and causal discovery.
\newblock In {\em CEUR workshop proceedings}, volume 1792, page~1. NIH Public
  Access, 2016.

\bibitem{SS2016UAI}
Leonard~J Schulman and Piyush Srivastava.
\newblock {Stability of Causal Inference}.
\newblock {\em Uncertainty in Artificial Intelligence (UAI)}, 2016.

\bibitem{shimizu2011directlingam}
Shohei Shimizu, Takanori Inazumi, Yasuhiro Sogawa, Aapo Hyvarinen, Yoshinobu
  Kawahara, Takashi Washio, Patrik~O Hoyer, and Kenneth Bollen.
\newblock Directlingam: A direct method for learning a linear non-gaussian
  structural equation model.
\newblock {\em Journal of Machine Learning Research}, 12(Apr):1225--1248, 2011.

\bibitem{spirtes2010introduction}
Peter Spirtes.
\newblock Introduction to causal inference.
\newblock {\em Journal of Machine Learning Research}, 11(May):1643--1662, 2010.

\bibitem{108}
Luther Terry.
\newblock Smoking and health.
\newblock {\em The Reports of the Surgeon General}, 1964.

\bibitem{Tian2012}
Jin Tian.
\newblock {Tian's PhD thesis}, 2012.

\bibitem{zhang2017causal}
Kun Zhang, Mingming Gong, Joseph Ramsey, Kayhan Batmanghelich, Peter Spirtes,
  and Clark Glymour.
\newblock Causal discovery in the presence of measurement error:
  Identifiability conditions.
\newblock {\em arXiv preprint arXiv:1706.03768}, 2017.

\end{thebibliography}

\appendix
\section{Proofs from Prior Work}
\label{sec:priorWorkProofs}

\subsection{Proof of Recurrence~\eqref{eqn:Recurrence}}
\begin{proof}[\emph{Proof(\cite{FDD2012Annals})}]
We verify that this recurrence is correct from first principles. Let $X_1, X_2, \ldots, X_n$ denote the random variables for the $n$ vertices. Recall that $\Sigma_{i, j} = \mathbb{E}[X_i X_j]$. Let $N_1, N_2, \ldots, N_n$ denote the noise variables. From the linear SEM model we have that $X_1 = N_1$. And for every $i \geq 2$ we have that $X_i = \Lambda_{i-1, i} X_{i-1} + N_i$.

Consider $\Sigma_{1, 2} = \mathbb{E}[X_1 X_2]$. This can be written as $\mathbb{E}[X_1 (\Lambda_{1, 2} X_1 + N_2)] = \Lambda_{1, 2} \Sigma_{1, 1} + \mathbb{E}[X_1 N_2] = \Lambda_{1, 2} \Sigma_{1, 1} + \Omega_{1, 2} = \Lambda_{1, 2} \Sigma_{1,1}$. The last inequality is from the zero-patterns in $\Omega$. Consider $i \geq 2$. We will show that the recurrence is correct by showing that the following two quantities are equal.
	\begin{equation*}
			-\Lambda_{i, i+1}\Lambda_{i-1, i} \Sigma_{i-1, i} + \Lambda_{i, i+1} \Sigma_{i, i} = -\Lambda_{i-1, i} \Sigma_{i-1, i+1} + \Sigma_{i, i+1}  
	\end{equation*}
	Consider the first term in the RHS. Note that $\Sigma_{i-1, i+1} = \mathbb{E}[X_{i-1} X_{i+1}]$. From the linear SEM this can be expanded as $\mathbb{E}[X_{i-1} (\Lambda_{i, i+1} X_i + N_{i+1})] = \Lambda_{i, i+1} \Sigma_{i-1, i} + \mathbb{E}[X_{i-1} N_{i+1}]$.
	
	The second term in the RHS can be expanded as follows. $\Sigma_{i, i+1} = \mathbb{E}[X_i (\Lambda_{i, i+1} X_i + N_{i+1})] = \Lambda_{i, i+1} \Sigma_{i, i} + \mathbb{E}[X_i N_{i+1}]$. We further have $\mathbb{E}[X_i N_{i+1}] = \mathbb{E}[(\Lambda_{i-1, i} X_{i-1} + N_i) N_{i+1}] = \Lambda_{i-1, i} \mathbb{E}[X_{i-1} N_{i+1}] + \Omega_{i, i+1}$. 
	
	Therefore the RHS evaluates to $-\Lambda_{i-1, i}\Lambda_{i, i+1} \Sigma_{i-1, i} + \Lambda_{i, i+1} \Sigma_{i, i}$ which is equal to the LHS.
\end{proof}

\section{Technical Lemmas}
	\label{sec:technicalLemmas}
	
We use the following facts about the uniform distribution and vectors sampled uniformly from a unit sphere.

\begin{lemma}[Uniform distribution]
	\label{appx:uniform}
	Let $X$ be a random variable that is uniformly distributed in the interval $[-h, h]$. Then we have the following.
	\begin{enumerate}
		\item The mean $\mathbb{E}[X] = 0$ and the variance $\Var=\mathbb{E}[X^2] = h^2/3$.
		\item For any given $0 \leq \beta \leq 1$ we have that $\Pr{-\beta \leq X \leq \beta} = \beta/h$.
	\end{enumerate}
 \end{lemma}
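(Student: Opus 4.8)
The plan is entirely elementary: every claim follows by integrating against the density of $X$. Since $X$ is uniform on $[-h,h]$, its probability density is $f(x) = \tfrac{1}{2h}$ for $x \in [-h,h]$ and $f(x) = 0$ otherwise. I would record this first and then handle the two parts in turn.

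For part (1), write $\mathbb{E}[X] = \int_{-h}^{h} x f(x)\,dx = \tfrac{1}{2h}\int_{-h}^{h} x\,dx$; the integrand is odd and the domain of integration is symmetric about $0$, so the integral vanishes and $\mathbb{E}[X] = 0$. Since the mean is zero, the variance coincides with the second moment, and $\mathbb{E}[X^2] = \tfrac{1}{2h}\int_{-h}^{h} x^2\,dx = \tfrac{1}{2h}\cdot\tfrac{2h^3}{3} = \tfrac{h^2}{3}$, which is the claimed value of $\mathrm{Var}(X)$.

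For part (2), take $0 \le \beta \le h$, so that the interval $[-\beta,\beta]$ is contained in the support of $X$. Then $\mathbb{P}[-\beta \le X \le \beta] = \int_{-\beta}^{\beta} f(x)\,dx = \tfrac{2\beta}{2h} = \tfrac{\beta}{h}$, as desired.

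There is no genuine obstacle here; the only subtlety worth flagging is the admissible range of $\beta$ in part (2): the identity $\mathbb{P}[-\beta \le X \le \beta] = \beta/h$ requires $\beta \le h$ (otherwise the probability saturates at $1$). One should therefore read the stated hypothesis $0 \le \beta \le 1$ together with the implicit condition $\beta \le h$ that holds in every invocation of the lemma — for instance, in Lemma~\ref{lem:boundLambda} and the proof of Equation~\eqref{eq:negModellower} one applies it with $\beta = n^{-2}$ while $h$ is a fixed constant, so $\beta \le h$ holds for all sufficiently large $n$.
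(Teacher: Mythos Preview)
Your proposal is correct and matches the paper's own treatment: the paper does not write out a proof at all, stating only that the claims ``follow directly from the definition of a uniform distribution'' and referring to a standard probability textbook, which is exactly the elementary density computation you carry out. Your remark about the implicit hypothesis $\beta \le h$ in part~(2) is a valid caveat that the paper leaves unstated.
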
	
	The proof of these theorems follow directly from the definition of an uniform distribution and we refer the reader to a standard textbook on probability~\cite{mitzenmacher2005probability}.
	
	We state a Lemma on the gaussian behavior of unit vectors. This can be found in Lemma 5 of \cite{arora2009expander}.
	\begin{lemma}[Gaussian behavior of projections]
		\label{lem:gaussianprojections}
		Let $\mathbf{v}$ be an arbitrary unit vector in $\mathbb{R}^d$. Let $\mathbf{u}$ be a randomly chosen unit vector of dimension $d$. Then for every $x \leq \sqrt{d}/4$ we have,
		\[
				\Pr{\Abs{\langle \mathbf{v}, \mathbf{u} \rangle} \geq \frac{x}{\sqrt{d}}	} \leq \exp(-x^2/4).
		\]
		As a corollary, for an absolute constant $C_{\text{conc}} > 0$ and $x = C_{\text{conc}} \cdot d^{0.25}$ we have that,
		\[
				\Pr{\Abs{\langle \mathbf{v}, \mathbf{u} \rangle} \geq \frac{C_{\text{conc}}}{d^{0.25}} } \leq \exp\left[ -\Omega \left( \sqrt{d} \right) \right].
		\]
	\end{lemma}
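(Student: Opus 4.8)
The plan is to prove the base inequality via the standard Gaussian representation of a uniformly random direction, and then obtain the corollary by a direct substitution.

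\emph{Step 1 (reduction).} I would realize the random unit vector as $\mathbf{u} = \mathbf{g}/\|\mathbf{g}\|$, where $\mathbf{g} = (g_1,\dots,g_d)$ has i.i.d.\ $\mathcal{N}(0,1)$ coordinates; this is exactly the uniform distribution on the sphere. By rotational invariance of $\mathcal{N}(0,I_d)$ we may take $\mathbf{v} = \mathbf{e}_1$, so $\langle \mathbf{v},\mathbf{u}\rangle = g_1/\|\mathbf{g}\|$, and it suffices to bound $\mathbb{P}\!\left[\,|g_1| \ge (x/\sqrt d)\,\|\mathbf{g}\|\,\right]$.

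\emph{Step 2 (two-event split).} For a threshold $c\in(0,1)$, if $|g_1| \ge (x/\sqrt d)\|\mathbf{g}\|$ then either $\|\mathbf{g}\|^2 < cd$ or $|g_1| \ge x\sqrt c$, hence
\[
\mathbb{P}\!\left[\,|g_1| \ge (x/\sqrt d)\|\mathbf{g}\|\,\right] \le \mathbb{P}\!\left[\|\mathbf{g}\|^2 < cd\right] + \mathbb{P}\!\left[\,|g_1| \ge x\sqrt c\,\right].
\]
I would bound the first term by the standard $\chi^2_d$ lower-tail estimate $\mathbb{P}[\|\mathbf{g}\|^2 < cd] \le \exp(-(1-c)^2 d/4)$, and the second by the one-dimensional Gaussian tail $\mathbb{P}[|g_1| \ge x\sqrt c] \le \exp(-cx^2/2)$ (using $\Phi(-t)\le\tfrac12 e^{-t^2/2}$). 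Since $x \le \sqrt d/4$, the hypothesis makes $(1-c)^2 d/4$ at least comparable to $x^2$ for a suitable $c$, and choosing $c$ as a function of the ratio $x/\sqrt d$ (pushing $c\to 1$ as $x/\sqrt d\to 0$) collapses the sum to at most $\exp(-x^2/4)$.

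\emph{Step 3 (corollary).} Put $x = C_{\text{conc}}\, d^{0.25}$. The hypothesis $x \le \sqrt d/4$ reads $d \ge (4C_{\text{conc}})^4$, i.e.\ it holds once $d$ is large; then $x^2/4 = C_{\text{conc}}^2 \sqrt d/4 = \Omega(\sqrt d)$, which gives the stated $\exp(-\Omega(\sqrt d))$ bound.

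The delicate point is reproducing the bound with \emph{no} leading constant, and uniformly over all $x \le \sqrt d/4$ rather than only for $x$ bounded away from $0$: the crude union bound of Step 2 loses a constant factor and is even vacuous for very small $x$. The robust way around this is to work with the exact marginal density of $t=\langle\mathbf{v},\mathbf{u}\rangle$, which is proportional to $(1-t^2)^{(d-3)/2}$ on $[-1,1]$, so that $\mathbb{P}[|t|\ge a]$ equals the ratio of $\int_a^1 (1-t^2)^{(d-3)/2}\,dt$ to $\int_0^1 (1-t^2)^{(d-3)/2}\,dt$; bounding the numerator via $(1-t^2)^{(d-3)/2}\le e^{-(d-3)t^2/2}$ together with a Gautschi-type lower bound on the normalizing Beta integral yields the clean exponent. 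Since the statement is quoted verbatim from \cite{arora2009expander}, one may of course simply cite it rather than reprove it.
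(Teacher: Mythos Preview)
The paper does not prove this lemma at all: it simply states the bound and attributes it to Lemma~5 of \cite{arora2009expander}, with no argument given. Your final sentence---that one may just cite the result---is therefore exactly what the paper does, and nothing more is needed to match it.

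Your Steps~1--3 go well beyond the paper by sketching an actual proof. The sketch is reasonable and the self-diagnosis is accurate: the crude union bound in Step~2 does lose constants and is vacuous for small $x$, so to recover the clean $\exp(-x^2/4)$ one indeed has to go through the exact marginal density $(1-t^2)^{(d-3)/2}$ (or an equivalent sharp computation). But none of this is required here, since the paper treats the lemma as a black-box import.
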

	
	We also use the following well-known fact about inner products. For completeness, we prove it here.
	
	\begin{lemma}[Inner-product with unit vectors]
		\label{appx:innerUnit}
		Let $\vec{u} \in \mathbb{R}^d$ be an unit vector and let $\vec{v} \in \mathbb{R}^d$ be any arbitrary vector. Then we have the following.
		\[
				\Abs{\langle \vec{v}, \vec{u} \rangle} \leq \norm{\vec{v}}.
		\] 
	\end{lemma}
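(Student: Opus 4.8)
The plan is to prove this by the Cauchy--Schwarz inequality, with an optional self-contained variant via orthogonal decomposition. In the direct approach I would simply invoke Cauchy--Schwarz to get $\Abs{\langle \vec{v}, \vec{u} \rangle} \le \norm{\vec{v}}\,\norm{\vec{u}}$ and then substitute $\norm{\vec{u}} = 1$, which immediately yields $\Abs{\langle \vec{v}, \vec{u} \rangle} \le \norm{\vec{v}}$. There is essentially no obstacle here; the only point worth recording is that the inequality is tight exactly when $\vec{v}$ is a scalar multiple of $\vec{u}$, which will not matter for how the lemma is used.

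If one prefers a first-principles argument that does not treat Cauchy--Schwarz as a black box, I would instead decompose $\vec{v}$ into components along and orthogonal to $\vec{u}$. Set $c := \langle \vec{v}, \vec{u} \rangle$ and $\vec{w} := \vec{v} - c\,\vec{u}$; then $\langle \vec{w}, \vec{u} \rangle = c - c\,\norm{\vec{u}}^2 = 0$, using $\norm{\vec{u}} = 1$. By the Pythagorean identity for orthogonal vectors, $\norm{\vec{v}}^2 = \norm{c\,\vec{u}}^2 + \norm{\vec{w}}^2 = c^2 + \norm{\vec{w}}^2 \ge c^2$, and taking square roots gives $\norm{\vec{v}} \ge \Abs{c} = \Abs{\langle \vec{v}, \vec{u} \rangle}$, as desired. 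The case $\vec{v} = \vec{0}$ is trivial (and the decomposition argument works verbatim there anyway), so one may assume $\vec{v} \ne \vec{0}$. The only thing that needs care, such as it is, is remembering to use the hypothesis $\norm{\vec{u}} = 1$ both when checking $\langle \vec{w}, \vec{u} \rangle = 0$ and when expanding $\norm{c\,\vec{u}}^2 = c^2$.
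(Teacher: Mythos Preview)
Your proposal is correct and your primary approach---invoking Cauchy--Schwarz and then substituting $\norm{\vec{u}}=1$---is exactly the paper's proof. The additional orthogonal-decomposition argument you offer is a valid self-contained alternative, but the paper does not bother with it.
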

	\begin{proof}
		Note that from Cauchy-Swarz inequality we have $\Abs{\langle \vec{v}, \vec{u} \rangle} \leq \Norm{\vec{v}} \Norm{\vec{u}}$. Since $\Norm{\vec{u}} =1$ we get the above inequality.
	\end{proof}

\end{document}